\renewcommand{\bibname}{References}
\renewcommand{\@biblabel}[1]{} 
\newtcolorbox{promptbox}[1][]{
  enhanced jigsaw,    
  breakable,          
  colback=gray!10,    
  colframe=black,     
  boxrule=0.5pt,      
  arc=4pt,            
  left=6pt, right=6pt, top=4pt, bottom=4pt,
  fonttitle=\bfseries,
  title=#1            
}
\newif\ifarxiv
\definecolor{revisionblue}{RGB}{0,0,0}
\definecolor{revisionblue}{RGB}{0,0,0}
\newenvironment{revision}{\par\begingroup\color{revisionblue}}{\par\endgroup}
\theoremstyle{TH}
\newaliascnt{setting}{theorem}
\newtheorem{setting}[setting]{Setting}
\crefname{setting}{setting}{settings}
\Crefname{setting}{Setting}{Settings}
\crefname{assumption}{assumption}{assumptions}
\newcommand{\E}{\mathrm{E}}
\newcommand{\Eb}[1]{\E\left[#1\right]}
\newcommand{\bracks}[1]{\left[#1\right]}
\newcommand{\independent}{\mathrel{\perp\!\!\!\perp}}
\newcommand{\Enb}[1]{\E_n\bracks{{#1}}}
\newcommand{\norm}[1]{\left\lVert#1\right\rVert}
\newcommand{\abs}[1]{\left|#1\right|}
\begin{document}


\ifarxiv
\RUNAUTHOR{Nwankwo, Goldkind, and Zhou}
\else
\RUNAUTHOR{}
\fi

\RUNTITLE{Optimal Causal Annotations}

\TITLE{Optimal Causal Annotations: An Application to Casenotes in Social Services}

%

\ifarxiv
\ARTICLEAUTHORS{%
\AUTHOR{Ezinne Nwankwo}
\AFF{Department of Electrical Engineering and Computer Sciences, University of California, Berkeley, \EMAIL{ezinne\_nwankwo@berkeley.edu}}
\AUTHOR{Lauri Goldkind}
\AFF{Department of Social Work, Fordham University, \EMAIL{goldkind@fordham.edu}}
\AUTHOR{Angela Zhou}
\AFF{Department of Data Sciences and Operations, University of Southern California, \EMAIL{zhoua@usc.edu}}
} 
\fi

\ABSTRACT{%
\textbf{Problem definition:} Estimating the causal effects of interventions is central to policy and operations, but outcome data are often missing or costly to obtain. LLMs and novel AI/ML tools can provide text annotation at scale but may be subject to unknown bias. When ground-truth outcomes require expensive expert labeling or follow-up (e.g., in healthcare or social services), budget limits typically allow only a fraction of the data to be labeled. Our work is motivated by collaboration with a nonprofit conducting street outreach in homelessness services, whose most interesting data and outcomes are in unstructured casenotes. A key question is: which observations should be selected for labeling under a fixed labeling budget? 

\textbf{Methodology/results:} Our method optimizes annotation probabilities to minimize the estimation variance of average treatment effect estimation. We derive a closed-form solution and establish that a feasible two-batch estimator achieves the best possible asymptotic variance. On simulated and real-world datasets, our method achieves lower MSE than random sampling and achieves 43\%-91\% reductions in labels and hence costs for the same interval widths. In the street outreach setting, we develop a case study on classifying progress towards a housing application from casenotes with LLMs, where we find out-of-the-box LLMs under-recognize client progress, highlighting the importance of grounding LLM judgment. Our estimates indicate that $8.6\%$ of clients improve 2-year housing outcomes due to more street outreach in the first six months, and increasing outreach from the first to third quartile increases maximum progress towards a housing application by an estimated half-step in our classification schema.

\textbf{Managerial implications:} LLM-as-a-judge is becoming increasingly used, though it may threaten estimation validity, as we see in our case study. 
Our method enables valid causal estimation by leveraging a limited annotation budget effectively. 
Our new ability to evaluate causal impacts on intermediate nonprofit outcomes can inform operations management: our substantive findings suggest the causal returns of greater outreach are concave, so that it may be resource-efficient to expand the extensive margin of who receives outreach to new or underreached clients.
}%

\FUNDING{}



\KEYWORDS{causal inference,text data in service operations,data annotation,validated LLM-as-a-judge} 

\maketitle


\section{Introduction}
\begin{revision}
    Empirical research in operations management has been enriched by the rise of micro-level structured transaction data \citep{terwiesch2019om}, as well as causal inference \citep{ho2017om}. Nonetheless, micro-level transaction data, such as appointment start and end times, sales or customer service calls, may administratively record that service was provided, but do not provide detail as to the quality of service or activities conducted therein. Especially in human services, unstructured text data such as clinical notes or call recordings are informative of service activities and qualities, but in their original text or multi-modal form are not usable for structured data and operational analysis. Recent work in social media and text analytics leverages text mining to extract service quality \citep{mejia2021service}, and interacting with unstructured data has only been accelerated with recent advances in artificial intelligence (AI), natural language processing, and large language models (LLMs). Yet, the scale and convenience of NLP/LLMs also bring risks of errors, biases, and/or hallucinations. As a result, ground-truth data annotation from professional experts and skilled humans is required to ensure validity, correctness, and reliable inferential conclusions --- rapidly cementing a new multi-billion dollar industry. 
\end{revision}

 Recent advances in AI/LLMs therefore enable extracting intermediate service activities as outcome measures in causal inference, rather than distal administrative records. Yet key managerial questions remain. How should scarce
expert labels be combined with abundant but imperfect LLM judgments to
support valid causal evaluation? Under a fixed measurement budget, which records should
receive costly expert ground-truth annotation? We study these questions in causal inference
settings in which treatment and covariates are observed for all units, but
ground-truth outcomes can be obtained only for a budget-constrained subset\footnote{A preliminary version of this paper was accepted to AISTATS 2026 \citep{nwankwo2025batch}. That initial version primarily validated the initial method. In this version, we develop the continuous treatments case further, and develop the progress annotation case study which illustrates the method in a bona-fide AI annotation setting, which required hiring human annotators and fine-tuning. These case study results are in \Cref{sec-casestudy}.}.
Random annotation permits valid estimation, but may use scarce expert effort
inefficiently. Our framework instead treats LLM outputs as low-cost auxiliary
signals, strategically selects records for expert review, and combines the
resulting information to improve the precision of average treatment effect
estimation while preserving valid statistical inference. We do so by analyzing annotation probabilities that optimize the estimation variance for causal inference with missing outcomes. Crucially, we focus on \textit{decision-oriented data collection} to best estimate the causal effect, a difference between potential outcomes. Therefore, our setting differs from active learning which merely targets improved prediction. Our goal of deciding based on causal effects changes our desiderata for causal estimation.   


Our framework applies whenever we can obtain ground-truth labels at a cost. 
For example, when an outcome variable, wages, is only observed from noisy or missing self-reports, surveyors could conduct follow-up interviews or verifications with participants to obtain wage data at a cost. 
In other settings, more closely related to the modern data annotation regime of AI, outcomes may be recorded in complex information such as text or images, which we refer to as ``complex embedded outcomes'', or $\tilde{Y}$. Such text or images measure the scalar outcome $Y \in \mathbb{R}^d$ whose causal effect is of final interest, and we assume the complex outcomes can be annotated at a cost. For example, in medical imaging, X-rays measure physical properties (such as tumor size), and the clinically relevant treatment effect is on the scalar measurements of physical properties (rather than in the space of general pixel images); expert radiologists can annotate X-rays at a cost.
Trade-offs between costly expert annotation and scalable, weaker imputation are pervasive in data-intensive machine learning, for example as in the recent ``LLM-as-a-judge'' framework \citep{zheng2023judging}.%


Our methodology is motivated by a collaboration with a nonprofit to evaluate the impact of street outreach on housing outcomes. Street outreach is an intensive intervention; caseworkers canvass for and build relationships with homeless clients and write case notes after each interaction.
These notes provide rich information on service activities and client progress and status during the open-ended process of outreach. 
Was a client progressing towards housing or their goals, or were they facing other barriers? See \Cref{fig:bg-notes-overview} for a high-level description of such casenote data. Yet the organization's conventional
outcome, eventual housing placement, is delayed and sparse and depends partly
on housing availability and downstream placement decisions outside the
organization's control. In New York City alone, approximately $\$80,000,000$ per year is invested in homeless street outreach. The case study therefore raises a central
outcome-definition question: how should the organization measure whether
outreach is advancing clients through the housing process before a terminal
placement is observed?
\begin{revision}
Working with practitioners, we define a text-based measure of progress toward
a housing application, positioned between service outputs and eventual housing
placement. Expert annotators can identify this progress from case notes, but
reviewing the full corpus is prohibitively costly; LLMs can label notes at
scale, but our later empirical analysis shows they systematically undermeasure progress on a population that generally faces severe challenges. 
We apply our framework to estimate the effect of
outreach on both housing placement and progress toward housing, while reducing
the expert annotation required for a given level of statistical precision.
We illustrate the benefits of our approach on ground-truthed housing outcome data, and on new client-based measures of progress defined on annotated casenotes. 


\begin{figure}
    \centering
    \includegraphics[width=\linewidth]{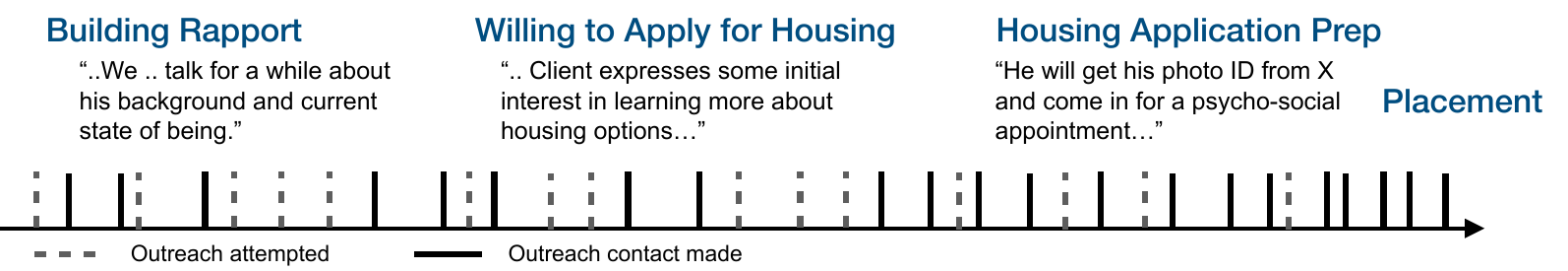}
    \caption{Overview of our nonprofit partner's casenotes and process.}
    \label{fig:bg-notes-overview}
\end{figure}

More broadly, there is a longstanding debate in nonprofit services regarding performance evaluation based on achieved (but uncertain) outcomes, versus process outputs (activities that the nonprofit conducts) \citep{ssirGettingResults}. 
The organization's former founder argues that ``everyone should be measured by one metric: how effectively they contribute to getting people housed and reducing homelessness. Period.'', rather than just measured for providing services \citep{columbiaHomelessness}. Outcomes and impacts can be difficult to measure. But our method also opens up a \textit{third} option of leveraging LLMs to identify intermediate outcomes, in between close-in nonprofit outputs (squarely within their control) and distal impacts on individuals (which are ultimately uncertain). 
\end{revision}
This study makes three contributions. First, we show how the objective of
causal evaluation changes which records should receive expert review.
\Cref{thm-asymptotic-convergence} shows that a feasible
batch-adaptive estimator attains the minimum asymptotic variance achievable after the
first-batch allocation. By characterizing the closed form solution, we further obtain key managerial insights: datapoints with low propensity of receiving observed treatment and high outcome uncertainty should be prioritized. We also show why balancing estimators that avoid plug-in inversion of treatment propensities work so well, and analyze when our method improves upon random sampling: under propensity variation and small budgets. Second, we quantify the \textit{operational} value of targeted ground-truth
collection. Across simulated, retail, and street-outreach data, targeted annotation requires
\(43\)--\(75\%\) fewer labels with the plug-in estimator and \(53\)--\(91\%\)
fewer labels with the balancing estimator to attain the same
confidence-interval width
(\Cref{fig:retail_hero_budget_saved,fig:outreach_budget_saved}), translating directly into valid inference while saving costs. Third, we show how to ground a validated LLM-as-a-judge system in organizational
process knowledge and expert data. Working with street-outreach
practitioners, we construct a progress schema that measures operational progress toward a housing application from text casenotes
(\Cref{tab:progress_scale}). Our evaluation shows that out-of-the-box LLMs small or large (12B or 235B) under-code progress, while fine-tuning the Gemma 3 12B model on
expert labels reduces this shortfall
(\Cref{tab:supp-band-error}) and improves merged accuracy upon a larger Qwen 235B zero-shot model
(\Cref{tab:gemma3-july-qwen-summary}). Model scale does
not substitute for organization-specific outcome design and ground-truth
data.  Substantively, we find a concave dose-response: increasing outreach from minimal to moderate levels raises expected maximum housing progress by roughly 0.3 schema points (a half-step on our scale), while further intensification yields no detectable gains — evidence for the resource-efficiency of expanding the extensive margin of who is outreached, rather than the intensive margin of already-engaged clients.
Together, these contributions tell managers \textit{what} to measure,
\textit{where to collect scarce ground truth}, and \textit{how to combine human and AI
judgments for causal evaluation}.

The remainder of the paper is organized as follows.
\Cref{sec-relatedwork} positions the study relative to work on data
annotation, active learning, adaptive experimental design, and causal
inference with imperfect outcomes. \Cref{section-problem-setup} formalizes
the costly-outcome measurement problem.
\Cref{section-methods,sec-analysis} derive the optimal annotation policy,
present the feasible estimator, and establish its statistical properties.
\Cref{sec-experiments} evaluates the method on simulated, retail, and
street-outreach data. \Cref{sec-casestudy} develops and validates
the progress schema and applies the framework to evaluating street outreach.

\section{Related work}\label{sec-relatedwork}
%

\begin{revision}
    A significant line of work in operations focuses on the \textit{operations of data annotation}, and therefore the operational data lifecycle of AI. The central importance of strategically collecting data under binding budget constraints is a throughline from early work focused on crowdsourcing annotations \citep{wang2017cost,ipeirotis2010quality} to recent work on even more data-hungry modern multi-modal AI systems \citep{liao2025minority,JMLR:v26:23-0292}. Our work focuses on how the goal of \textit{informing better decisions} via causal effect estimation \textit{changes how we should optimally collect data}.
\end{revision}

Our model is closest to optimizing a validation set for causal inference with missing outcomes, which can be broadly useful for causal inference with non-standard measurement error. 
Typical distributional conditions for non-standard measurement error \citep{schennach2016recent} are generally inapplicable to text or images, our motivating application. The most related works are those of \citet{naoki2023dsl}, which assumes that sampling probabilities for data annotation are known in order to obtain doubly-robust pseudo-outcomes, and \citet{zrnic2024active} which does optimize data sampling probabilities, but not for causal estimation. Both of these papers address non-causal estimands such as mean or M-estimation, whereas we focus on treatment effect estimation.

Our work follows the basic approach of finding outcome annotation probabilities that optimize the semiparametric efficient lower bound, whether via batch or full adaptivity. \citet{hahn2011adaptive} studied a two-stage procedure for estimating the ATE with a proportional asymptotic and showed asymptotic equivalence of their batched adaptive estimator to the optimal asymptotic variance. 
Other treatment-choice variants in the same framework include \citet{kato2020efficient,cook2024semiparametric,li2024csbae}. \citet{ao2024prediction} study which units to sample and label, but for a different estimand of a single population mean under a new algorithm, whereas we post-hoc annotate outcomes for the average treatment effect. 
 Crucially, prior papers allocate treatment, while we allocate the probability of revealing the outcome. We face new technical challenges of finite-sample instability from \textit{multiplied} inverse importance weights, where our closed-form characterization yields insights and our improvements from balancing-weight methods for final estimation.

To be sure, the literature on adaptive treatment allocation or outcome annotation is vast, even in causal inference specifically. The problem of optimal annotation for efficient estimation differs from methods for active learning, online learning and bandits based on: whether methods choose treatments vs. annotate outcomes, and what is the objective: prediction error, decision regret, or best estimation of the ATE (smallest asymptotic variance).
Other paradigms such as active learning or bandits optimize different objectives from ours, and do not solve our problem directly \citep{Settles2009ActiveLL}. Active learning (AL) \emph{optimizes for prediction error, which is suboptimal for best estimation of the ATE}. 
Prior works \citep{jesson2021causal,sundin2019active,mittal2025planning} build on vanilla active learning for conditional average treatment effect (CATE) estimation, but reduce the problem to learning two regression functions, without targeting efficient ATE estimation.
See \Cref{appendix-al-baselines} for further discussion on key differences, and additional experiments comparing to AL baselines. Many exciting recent works study adaptive experimentation under different desiderata, such as full adaptivity, in-sample decision regret or finite-sample guarantees \citep{gao2019batched,zhao2023adaptive,cook2024semiparametric,shiusing}\footnote{See \citep{zhao2024experimental} for a survey on experimental design in causal inference, and \citep{simchi2023multi,qin2024optimizing} for discussions on trading-off bandit regret vs. best-arm identification. }. 
Some desiderata for \textit{treatment allocation} are {irrelevant} to our work on \textit{outcome/data annotation}. Batch annotation is more relevant for querying human annotators, instead of full adaptivity. Simple decision regret is important when changing treatment decisions online, but not relevant for outcome annotation of historically collected data. \citet{armstrong2022asymptotic} proves that further adaptivity cannot further improve asymptotic efficiency bounds. However, other technical tools like more advanced adaptive inference could be further adapted to our setting. 

Regarding the use of auxiliary information in causal inference, many recent works have studied the use of surrogate or proxy information. Although our use of context $\tilde{Y}$ aligns with \textit{colloquial} notions of surrogates or proxies, recent advances in surrogate and proxy methods refer to specific models that differ from our direct measurement/costly observation setting \citep{athey2019surrogate,kallus2024role,naoki2023dsl}. See \Cref{apx-additional-related-work} for more discussion on the distinctions.

\section{Problem setup\label{section-problem-setup}}

We study causal inference with missing outcomes, where a simpler ground truth outcome $Y\in\mathbb{R}$ can be revealed via annotation of a more complex observation thereof (e.g., text or images), denoted $\tilde{Y}$. 
We also discuss extensions to a setting where we can use $\tilde{Y}$ to enhance nuisance function estimation. 

In both cases, we assume the ground-truth data-generating process follows that of standard causal inference. The ground-truth data $(X,Z,Y(Z))$ includes covariates $X\in\mathcal{X}$, a binary treatment $Z\in\{0,1\}$, and potential outcomes $Y(Z)$ in the Neyman-Rubin potential outcome framework. We only observe $Y(Z)$ for the historically-assigned $Z$ and assume the usual stable unit value treatment assumption (SUTVA). \textit{If} all ground-truth outcomes were observed, estimation would reduce to the standard causal setting; the key challenge is missingness. Let $R\in\{0,1\}$ denote the presence ($R=1$) or absence ($R=0$) of the 
outcome $Y$. The \textit{observed} dataset is $(X,Z,R,RY)$, i.e. with missing outcomes.
(\Cref{tab:notation} in the appendix summarizes notation.)
Causal identification relies on the following assumptions:
\begin{assumption}[Treatment ignorability \citep{hernan2025causal}]\label{asn-tx-ignorability}
 $Y(Z) \independent Z \mid X.$
\end{assumption}

\begin{assumption}[$R$-ignorability \citep{rubin1976missing,bia2021dmlsampleselection}]\label{asn-r-ignorability}
      $R \independent Y(Z) \mid Z,X.$ 
\end{assumption}
\Cref{asn-tx-ignorability}, or unconfoundedness, posits that the observed covariates are fully informative of treatment. It is generally untestable but robust estimation is possible in its absence, e.g. via sensitivity analysis and partial identification \citep{zhao2019sensitivity,kallus2021minimax}. On the other hand, \Cref{asn-r-ignorability} \textit{is true by design}, since we choose what datapoints are annotated for ground-truth labels based on $(Z,X)$ alone. 

Though completely random sampling enables doubly-robust causal inference, we ask: how can we optimize our choice of annotated datapoints to improve the \textit{variance} of downstream estimation?
We assume a fixed annotation budget $B\in[0,1]$ that determines the fraction of the dataset that can be annotated. We define the propensity score and annotation (outcome observation)
probability as follows: 
\begin{align*}
        e_z(X) &:=P(Z=z|X) 
        \tag{propensity score}
         \\
        \pi(Z,X) &:= P(R=1|Z,X) \tag{annotation probability} 
\end{align*}
We assume positivity/overlap; that we observe treatment and outcome with nonzero probability.
\begin{assumption}[Treatment and annotation positivity \citep{hernan2025causal}]\label{asn-positivity} $ \epsilon < \pi(z,X) \leq 1-\epsilon, z \in \{0,1\}$ and $\epsilon < e_1(X)<1-\epsilon,$ with $\epsilon >0.$
\end{assumption}

\Cref{asn-tx-ignorability,asn-r-ignorability,asn-positivity} are standard in causal inference and we point the reader to textbook references for further discussion \citep{hernan2025causal,imbens2004nonparametricest,kennedy2020missingexposure}.

We define the outcome model, which is identified on the $R=1$ data by \Cref{asn-r-ignorability}, and the conditional variance:
    \begin{align*}
    \mu_z(X) &\coloneqq \E[Y \mid Z=z, X] \underset{asn. \ref{asn-r-ignorability}}{=} \E[Y \mid Z=z, R=1, X]\\
    \sigma^2_z(X) &\coloneqq \E[(Y - \mu_z(X))^2 \mid Z=z,X].
    \end{align*}

\textbf{Batch allocation setup.} We consider a two-batch adaptive protocol, where $n$ i.i.d. observations are randomly split into two batches. We consider a proportional asymptotic where the size of first batch, $n_1,$ is a fixed proportion $\kappa \in (0,1)$ of $n$.
\begin{assumption}[Proportional asymptotic \citep{hahn2011adaptive,li2024csbae}]
    $\lim_{n \rightarrow\infty} \frac{n_1}{n} = \kappa$. 
\end{assumption}
In the first batch, we randomly assign annotations according to a small but asymptotically nontrivial fraction of the budget. Outcomes are realized and observed, and the nuisance models ($\hat{\mu}_z(x), \hat{e}_z(x), \hat{\sigma}^2_z(x) $) are trained on the observed data. In the second batch, we estimate the feasible pooled annotation
probability $\pi^\dagger$ and sample data so that the mixture
distribution over outcome observations achieves $\pi^\dagger$. We combine the results from both batches and use the data for ATE estimation.

\textbf{Extension to decoding $Y$ from $\tilde{Y}$.}
\begin{revision}
So far, our missing outcomes framework aligns with multi-stage approaches in measurement error with a validation set. We will also discuss extensions to handle the case of AI/LLM annotation, where the plentiful $\tilde{Y}$ information is used to obtain outcome data $Y$ and enhance outcome model predictions. 
\end{revision}
Though our method assumes ground-truth outcomes could be revealed for each datapoint, for example via follow-up surveys, in practice this is most likely relevant in \textit{data annotation} settings. Expert data annotation only works when there is some data to annotate: we denote this noisy observation $\tilde{Y}$, which could be text or images.  Given that a noisy observation 
$\tilde{Y}$ is available, a natural question is, when can $\tilde{Y}$ be included to further improve outcome prediction? 
\begin{revision}
\begin{setting}[Decoding $Y$ from $\tilde{Y}$]\label{setting-yytilde}
    The annotation probability depends on $\tilde{Y}$ so that $\pi(z,X,\tilde Y)=\mathbb P(R=1\mid Z=z,X,\tilde Y)$ and we have an analogous annotation ignorability condition, which is satisfied by design.
\begin{assumption}[$R$-ignorability, decoding $Y$ from $\tilde{Y}$ ]\label{asn-r-ignorability-yytilde}
        $R\perp Y(Z)\mid Z,X,\tilde Y$.
\end{assumption}
\end{setting}
This changes estimation, which we discuss in more detail later on, but not the high-level conclusions regarding annotation probabilities. 

\end{revision}
There are several ways of incorporating the context into the outcome model. We denote an ML prediction based on $\tilde{Y}$ (with $X$ covariates and treatment information) as $f_z(X, \tilde{Y});$ for example zero-shot prediction using an LLM. If using black-box ML or LLM predictions, we recommend ensembling with $\E[Y|Z,X]$ or estimating $\E[Y\mid Z,R=1,f_z(X, \tilde{Y})]$ to calibrate LLM predictions, in order to satisfy statistical consistency conditions. 
 (\citet{naoki2023dsl} also suggests this). 

\section{Method\label{section-methods}}

We outline our method, starting with a recap of the  augmented inverse-propensity weighting (AIPW) estimator for causal inference with missing outcomes.
Then we optimize its asymptotic variance, characterize the optimal $\pi^*$, and give a feasible estimation procedure. (In \Cref{appendix:additional-results} we discuss the very similar case of treatment-specific budgets.) 

\paragraph{Recap: Optimal asymptotic variance for the ATE with missing outcomes.} We seek to estimate the average treatment effect 
(ATE) on ground-truth outcomes $Y$. Define
\vspace{-0.05in}
\[ \tau = \E[Y(1) - Y(0)].\]
\vspace{-0.25in}

\citet{bia2021dmlsampleselection} derives a double-machine learning estimator for ATE estimation with missing outcomes: 
\begin{align}
    \E[Y(z)] &= \E[\psi_z], 
    \text{ where }
    \;\psi_z 
    = \frac{\mathbf{1}[Z=z] R  (Y-\mu_z(X))}{e_z(X)  \pi(z,X)} + \mu_z(X),
     \text{ and } \tau_{AIPW} = \E[\psi_1 - \psi_0].   
     \label{eqn-aipw-standard}
\end{align}
The outcome model $\mu_z(X)$ is estimated on data with observed outcomes. Under SUTVA and \Cref{asn-r-ignorability}, $\textstyle\E[Y(z)|X] = {\E[Y|Z=z,X] }= {\E[Y|Z=z,R=1,X]}.$
\begin{revision}
    \paragraph{Extension to the \Cref{setting-yytilde}, decoding $Y$ from $\tilde{Y}$.}
In the setting with complex embedded outcomes, define the $\tilde{Y}$-observation augmented outcome model $\mu_z^{\tilde Y}(X,\tilde Y)
    :=
    \mathbb E[Y\mid Z=z,X,\tilde Y].$ 
However, the relevant $\tilde{Y}(Z)$ information is only relevant for decoding observations for ``factual'' observed treatments, i.e. the outcome model must therefore switch between $X$-only prediction and $\tilde{Y}$-given annotation, $$\mu_z(Z,X,\tilde{Y}) = \mu_z(X) + \mathbb{I}[Z=z] \{\mu_z^{\tilde{Y}}(X,\tilde{Y}) - \mu_z(X) \}.$$
The corresponding AIPW estimator retains favorable properties when $\mu_z^{\tilde{Y}}$ and $\mu_z(X)$ are consistent, at a potential loss of consistency. 
The efficient score in a larger nonparametric observed-data model is
\begin{equation}
\phi_z^{\tilde Y}(O)
=
\mu_z(X)
+
\frac{\mathbb{I}[Z=z]}{e_z(X)}
\{\mu_z^{\tilde Y}(X,\tilde Y)-\mu_z(X)\}
+
\frac{\mathbb{I}[Z=z]R}
{e_z(X)\pi(z,X,\tilde Y)}
\{Y-\mu_z^{\tilde Y}(X,\tilde Y)\}.
\label{eqn-post-treatment-estimator}
\end{equation}

Since we optimize annotation probabilities, the optimization objective and solution remain analogous.

\end{revision}
We optimize the semiparametric efficient asymptotic variance with missing outcomes. We express the asymptotic variance of \citep{bia2021dmlsampleselection} in terms of $\mu_z, e_z, \pi$: 
\begin{proposition}\label{prop-avar-ate}

    The asymptotic variance $(\textrm{AVar})$ of the estimator according to \cref{eqn-aipw-standard} is: 
    \begin{align}
    \mathrm{AVar} 
 =  
     \mathrm{Var}[\mu_1(X) - \mu_{0}(X)] +
    \sum_{z\in\{0,1\}} \E\left[ \frac{\sigma^2_z(X)}{e_z(X)  \pi(z,X)} \right] \label{eqn-X-avar}
    \end{align}
Define $v_z^{\tilde Y}(X)
=
\operatorname{Var}[\mu_z^{\tilde Y}(X,\tilde Y)\mid Z=z,X]$ and $\sigma_{z}^2(X, \tilde{Y})=\operatorname{Var}[Y \mid Z=z, X, \tilde{Y}].$
The asymptotic variance of the estimator according to the score \cref{eqn-post-treatment-estimator} is: 
    \begin{align*}
    \mathrm{AVar} 
 =  
     \mathrm{Var}[\mu_1(X) - \mu_{0}(X)] +
    \sum_{z\in\{0,1\}} 
\left\{ \E\left[ \frac{
v_z^{\tilde Y}(X)
}{e_z(X) } \right] +
\E\left[ \frac{\sigma_{z}^2(X, \tilde{Y})}{e_z(X)  \pi(z,X, \tilde{Y})} \right] 
\right\}
\end{align*}

\end{proposition}
Although for either estimator the asymptotic variance includes additional terms, we focus on optimizing the last $\pi$-dependent term and therefore the \textit{structure} of optimal annotation probabilities is identical with or without use of $\tilde{Y}$. For ease of exposition, in the following we focus on the $X$-dependent annotation probabilities optimizing \Cref{eqn-X-avar}, and discuss variations thereof for $X,\tilde{Y}$-dependent annotation probabilities.

\paragraph{
Characterizing the optimal $\pi^*(z,x)$.}

We first characterize the population optimal sampling probabilities $\pi^*(z,x)$, assuming the nuisance functions are known. We optimize the asymptotic variance over $\pi$ under a global sampling budget $B \in [0,1]$ over all annotations. 
$\pi^*(z,x)$ solves 
\begin{align}
  \min_{0 < \pi(z,x) \leq 1, \forall z,x} &
  \sum_{z\in\{0,1\}}
  \Eb{ \frac{\sigma^2_z(X)}{e_z(X)  \pi(z,X)}  } 
  \text{ s.t. } \E[\pi(Z,X)]\leq B 
  \label{eqn-opt-global-budget}
\end{align}
Note that in the global budget constraint, $\E[\pi(Z,X)]=\E[\pi(1,X)\mathbf{1}[Z=1] +\pi(0,X)\mathbf{1}\left[Z=0\right] ]$. We can characterize the solution as follows.

\begin{theorem}[Optimal annotation probabilities]\label{thm-global-budget-solution}
First assume that the unconstrained allocation is strictly feasible: $\max_{z\in\{0,1\}}
\operatorname*{ess\,sup}_{x\in\mathcal X}
\frac{
B\sqrt{\sigma_z^2(x)}
}{
e_z(x)\,
\mathbb E\!\left[
\sqrt{\sigma_1^2(X)}+\sqrt{\sigma_0^2(X)}
\right]
}
<1$. Then the optimal $X$-dependent annotation probabilities are:     
\begin{align*}
&
\textstyle 
\pi^*(z,X) 
= 
\frac{ \sqrt{\sigma^2_z(X)}}{  e_z(X)}B
\left(
\Eb{ \sqrt{\sigma^2_1(X)} + \sqrt{\sigma^2_0(X)}} \right)^{-1} 
\end{align*} 
Analogously, under \Cref{setting-yytilde}, the optimal $(X,\tilde{Y})$-dependent annotation probabilities are $\pi^*(z,X,\tilde{Y}) \textstyle 
= \textstyle 
\frac{ \sqrt{\sigma^2_z(X,\tilde{Y})}}{  e_z(X)}B
(
\E[ \sqrt{\sigma^2_1(X,\tilde{Y})} + \sqrt{\sigma^2_0(X,\tilde{Y})}] )^{-1}.$
Otherwise, the optimal solutions are \[ \textstyle 
\pi^{\star}(z,x)=\min\{1,\;c_B\frac{\sqrt{\sigma_z^2(x)}}{e_z(x)}\},\qquad c_B>0\ \text{chosen so that}\ \mathbb{E}\!\left[\pi^{\star}(Z,X)\right]=B.
\]
\end{theorem}

Note that sampling probabilities increase in the conditional variance/uncertainty of the model, $\sigma^2(X)$, and the inverse propensity score. 
Characterizing the closed-form solution is useful for our analysis later on. 
For the full proof, see \Cref{proof:thm1}.

\paragraph{Feasible two-batch adaptive design and estimator.}

Although the oracle rule $\pi^*$ in
\Cref{thm-global-budget-solution} is a valid pooled annotation
probability, it may not be attainable after the fixed first-batch
allocation, since any two-batch design must satisfy
$\kappa\pi_1(z,x)\leq\pi(z,x)\leq
\kappa\pi_1(z,x)+(1-\kappa)$.
Following the corner-solution logic of \citet{hahn2011adaptive}, we use
\begin{equation}
\hat{\pi}_{2,z}(x;\hat c)
=
[ (1-\kappa)^{-1}
({
\hat c\sqrt{\hat{\sigma}_z^2(x)}/\hat e_z(x)
-\kappa\pi_1(z,x)})
]_0^1,
\qquad
[u]_0^1:=\min\{1,\max\{0,u\}\} \label{eqn-clippingpi2}
\end{equation}
where $\hat c$ is chosen by bisection so that
$\mathbb E_n[\kappa\pi_1(Z,X)+(1-\kappa)
\hat\pi_2(Z,X;\hat c)]=B$. Let $\pi_{2,z}^\dagger(x)$ denote the population analogue of
\Cref{eqn-clippingpi2}. Define the feasible pooled probabilities
\[
\pi^\dagger(z,x)
=
\kappa\pi_1(z,x)+(1-\kappa)\pi_{2,z}^\dagger(x),
\qquad
\hat\pi^{\dagger,(-k)}(z,x)
=
\kappa\pi_1(z,x)
+
(1-\kappa)\hat\pi_{2,z}^{(-k)}(x;\hat c),
\]
where $\hat\pi_{2,z}^{(-k)}$ is \Cref{eqn-clippingpi2} evaluated
with out-of-fold nuisance estimates and one common $\hat c$ across
folds. If no second-batch clipping bound binds, then
$\pi^\dagger=\pi^*$; otherwise, $\pi^\dagger$ is the
variance-minimizing pooled probability attainable given the
first-batch allocation. In the estimation analysis below, generic $\pi$ and $\hat{\pi}^{(-k)}$ refer to $\pi^{\dagger}$ and $\hat{\pi}^{\dagger,(-k)}$.

Our characterizations above assume knowledge of true $\sigma^2_z(x)$ and propensity scores $e_z(x)$. Since these need to be estimated, we leverage the double machine learning (DML) framework and conduct a feasible two-batch adaptive design \citep{chernozhukov2018double,bia2021dmlsampleselection}. 
Standard cross-fitting \citep{chernozhukov2018double} splits the data, estimates nuisance functions on one fold, and evaluates the estimator on a datapoint leveraging nuisance functions from another fold of data. 
We leverage a variant \citep{li2024csbae} that introduces folds within each batch of data. 
    \Cref{fig:cross-fitting} summarizes the cross-fitting approach; see \citet{li2024csbae} for further details.
    \begin{figure*}[ht!]
    \centering
    \includegraphics[width=\textwidth]{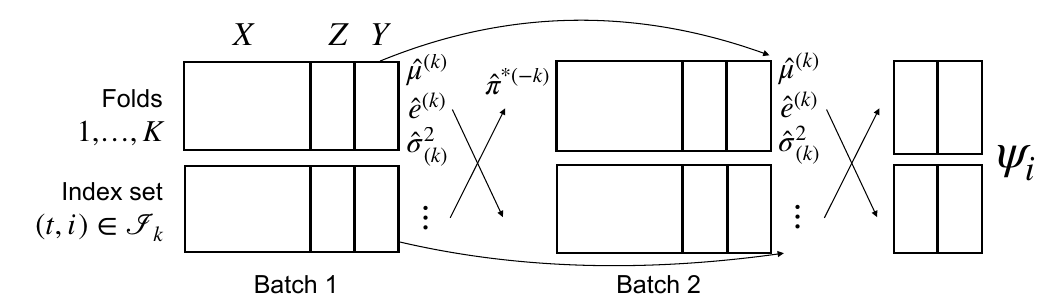}
    \caption{Illustration of cross-fitting ($K$ folds within batches) }
    \label{fig:cross-fitting}
\end{figure*} 
First, we split the observations in each batch $t=1,2$ into $K$ folds (e.g. $K=5$). Let $\mathcal{I}_{k}$ denote the set of batch and observation indices $(t,i)$ assigned to fold $k$ and batch $t$. 
Then within each fold, we estimate nuisance models on observations in batch 1. We use cross-fitting to estimate the feasible pooled probabilities
$\hat\pi^{\dagger,(-k)}$ from out-of-fold nuisance estimates. Finally we adaptively assign annotation probabilities in batch 2. This ensures independence, meaning that the nuisance models in batch 2, fold $k$ rely only on observations from the \textit{previous} batch 1, in fold $k$.
\Cref{alg:adaptive} includes this procedure,
see \Cref{alg:batch-allocation-full} for a full description. 

\textbf{Batch Adaptive Causal Estimation (With Complex Embedded Outcomes)} \label{alg:adaptive}. \textit{Input:} Data $\mathcal{D} = \{(X_i,Z_i)\}_{i=1}^n$, sampling budget $B \in [0,1]$.
\begin{itemize}[noitemsep,topsep=2pt]
  \item Partition $\mathcal{D}$ into 2 batches and $K$ folds $\mathcal{D}^{(k)}_{1},\mathcal{D}^{(k)}_{2}$ for $k=1,\hdots,K$.
  \item \textbf{Batch 1:} Sample $R_1 \sim \mathrm{Bern}(B)$. Estimate nuisances within each $k$-fold: $\hat{\mu}_z^{(k)}(X)$ (or $\hat{\mu}_z^{(k)}(X,\tilde{Y})$), $\hat{\sigma}^{2(k)}_z(X)$, and $\hat{e}^{(k)}_z(X)$.
  \item \textbf{Batch 2,} folds $k=1,\hdots,K$: Evaluate
\Cref{eqn-clippingpi2} with out-of-fold nuisance estimates to obtain
$\hat\pi_2^{(-k)}$, using one common $\hat c$ chosen to satisfy the
pooled budget, and form
$\hat\pi^{\dagger,(-k)}
=\kappa\pi_1+(1-\kappa)\hat\pi_2^{(-k)}$.
  \item On Batch 2, sample
$R_{2i}\sim
\mathrm{Bern}(\hat\pi_2^{(-k)}(Z_i,X_i;\hat c))$
and obtain outcomes.
  \item Pool data across batches and estimate ATE with AIPW estimator in \cref{eq:AIPW-estimator} (or \cref{eqn-RZ-estimation}, or balancing weights) and out-of-fold nuisances.
\end{itemize}

\begin{algorithm}[ht!]
   \caption{(Full Algorithm) Batch Adaptive Causal Estimation With Complex Embedded Outcomes}
   \label{alg:batch-allocation-full}
 \begin{algorithmic}
    \STATE {\bfseries Input:} Data $\mathcal{D} = \{(X_i,Z_i,Y_i,\tilde{Y}_i)\}_{i=1}^n$, sampling budget $B_z$ for $z\in \{0,1\}$ 
    \STATE {\bfseries Output:} ATE estimator $\hat{\tau}_{AIPW}$
    \STATE Partition $\mathcal{D}$ into 2 batches and K folds $\mathcal{D}^{(k)}_{1},\mathcal{D}^{(k)}_{2}$ for $k=1,\hdots,K$
    \STATE \textit{ Batch 1:} 
    \FOR{$k=1,\hdots,K$}
    \STATE
    On $\mathcal{D}^{(k)}_{1}$:  Sample $R_1 \sim \mathrm{Bern}(\pi_1(Z,X))$, where $\pi_1(z,x) = B_z$. 

    Estimate nuisance models:  Where $R=1$, estimate $\hat{\mu}^{(k)}_z$ by regressing $Y$ on $X$ (or $X,\tilde{Y}$), and $\hat{\sigma}^{2(k)}_z$ by regressing $(Y-\hat{\mu}_z)^2$ on $X$. Estimate $\hat{e}^{(k)}_z$ by regressing $Z$ on $X$.
    \ENDFOR
      \STATE \textit{Batch 2:}
        \FOR{$k=1,\hdots,K$}
            \STATE On $\mathcal{D}^{(k)}_{2}$, compute the out-of-fold
            quantities entering \Cref{eqn-clippingpi2} using
            $\hat{\sigma}^{2(-k)}_z$ and $\hat e_z^{(-k)}$.
        \ENDFOR
        \STATE Choose one common $\hat c$ by bisection so that
        $\mathbb E_n[\kappa\pi_1(Z,X)
        +(1-\kappa)\hat\pi_2(Z,X;\hat c)]=B$.
        \FOR{$k=1,\hdots,K$}
            \STATE Set $\hat\pi_2^{(-k)}(Z_i,X_i;\hat c)$ according to
            \Cref{eqn-clippingpi2} and sample
            $R_{2i}\sim
            \mathrm{Bern}\!\left(
            \hat\pi_2^{(-k)}(Z_i,X_i;\hat c)
            \right)$.
        \ENDFOR
    \STATE Obtain $\mathcal{D}^{(k)}$ for $k=1,\hdots,K$ by pooling across batches $\mathcal{D}^{(k)}_1$ and $\mathcal{D}^{(k)}_2$
    \STATE On $\mathcal{D}^{(k)}$, re-estimate $\hat{\mu}^{(k)}_z$, $\hat{\sigma}^{2(k)}_z$, and $\hat{e}^{(k)}_z$  on observed outcomes $RY$ for $k=1,\hdots,K$
    \STATE For each fold, form the stored feasible pooled probability
    $\hat\pi^{\dagger,(-k)}(z,x)
    =
    \kappa\pi_1(z,x)
    +
    (1-\kappa)\hat\pi_{2,z}^{(-k)}(x;\hat c)$.
    \STATE On full data $\mathcal{D}$, estimate ATE by using AIPW estimator in \cref{eq:AIPW-estimator} and out-of-fold nuisances $\hat\pi^{\dagger,(-k)}$, $\hat{\mu}^{(-k)}_z$, $\hat{\sigma}^{2(-k)}_z$, and $\hat{e}^{(-k)}_z$
 \end{algorithmic}
\end{algorithm}

Therefore the cross-fitted feasible estimator takes the form $\hat{\tau}_{AIPW} 
    \textstyle = {\frac{1}{n}\sum_{t=1}^2 \sum_{k=1}^K\sum_{(t,i)\in\mathcal{I}_k} \hat{\psi}_{1,i} - \hat{\psi}_{0,i}}$ where 
\begin{align}
    \hat{\psi}_{z,i}=\frac{\mathbf{1}[Z_i=z]R_i(Y_i-\hat{\mu}^{(-k)}_z(X_i))}{\hat{e}^{(-k)}_z(X_i)  \hat{\pi}^{\dagger,(-k)}(z,X_i)}
    +\hat{\mu}^{(-k)}_z(X_i).\label{eq:AIPW-estimator}
\end{align}
\section{Analysis}\label{sec-analysis}

In this section, we provide a central limit theorem for the setting where annotation probabilities are assigned adaptively and nuisance parameters must be estimated. We provide some insights to improve estimation as well as an extension to settings with continuous treatments.   

Denote  $\norm{\cdot}_2 = (\E[(\cdot)^2])^{1/2}$. The following \Cref{asn-consistent-bounded,asn-product-error-rates,asn-vc} are all standard in the double machine learning literature 
\citep{chernozhukov2018double,wager2024causal,athey2021policy,Uehara2020policyeval,bia2021dmlsampleselection}. \Cref{asn-weak-dependence-batches} is specific to our batch adaptive sampling design and can also be found in \cite{li2024csbae}.

\begin{assumption}[Consistent estimation and boundedness]\label{asn-consistent-bounded}
Assume bounded second moments of outcomes and errors, $\norm{Y(z)}_2
\leq C_1,
\norm{\mu_z(X)}_2
\leq C_2$, $
\norm{(Y-\mu_z(X))}_2^2
\leq 4B_{\sigma^2}$, $\forall z$; and consistent estimation $\E[(\mu_z(X) - \hat\mu_z(X))^2]\leq K_\mu n^{-r_\mu}$ for some constants $C_1,C_2,B_{\sigma^2}, K_\mu, r_\mu > 0$. 
\end{assumption}
\begin{assumption}[Consistency and product error rates] For nuisance functions, assume the products of their mean-square convergence rates vanish faster than $n^{-1/2}$: (i) $\sqrt{n}\norm{\hat{\mu}_z(X) - \mu_z(X)}_2 \times \norm{\hat{\pi}(z,X) - \pi(z,X)}_2 \overset{p}{\to} 0$; (ii) $\sqrt{n}\norm{\hat{\mu}_z(X) - \mu_z(X)}_2 \times \norm{\hat{e}_z(X) - e_z(X)}_2 \overset{p}{\to} 0$. For each $z$, assume that $\norm{\hat e_z-e_z}_2=o_p(1),\norm{\hat\sigma_z^2-\sigma_z^2}_2=o_p(1)$.
\label{asn-product-error-rates}
\end{assumption}

\begin{assumption}[VC dimension for nuisance estimation]\label{asn-vc}
The nuisance estimation of $e_z$ and $\sigma_z^2$ occurs over function classes with finite VC-dimension. 
\end{assumption}

\begin{assumption}[Sufficiently weak dependence across batches \citep{li2024csbae}]\label{asn-weak-dependence-batches}
\begin{align*}
    \sqrt{\frac{1}{n_{t, k}} \sum_{i:(t, i) \in\mathcal{I}_k}\left\|\mathbb{E}\left[\hat\psi_i(R;  \hat\eta) -\psi_i(R;\eta) \mid \mathcal{I}^{(-k)}, X_{i}\right]\right\|^2} 
    =o_p(n^{-\frac 14}),
\end{align*}
where $\hat\eta$ is the vector of nuisance functions $\hat{e}^{(-k)},\hat{\pi}^{(-k)}, \hat\mu^{(-k)}$ and $\eta$ is the vector of true population nuisance functions. Then $\hat\psi_i(R; \hat\eta) = \psi_i(R;  \hat{e}^{(-k)},\hat{\pi}^{(-k)}, \hat\mu^{(-k)})$ and $\psi_i(R;\eta) = \psi_i(R;  {e},{\pi}, \mu)$.
\end{assumption}

\begin{theorem}[Asymptotic Normality]\label{thm-asymptotic-convergence} Given \Cref{asn-r-ignorability,asn-tx-ignorability,asn-positivity},
suppose that we construct the feasible estimator $\hat{\tau}_{AIPW}$ (\Cref{eq:AIPW-estimator}) using the CSBAE crossfitting procedure in \Cref{fig:cross-fitting} with estimators satisfying \Cref{asn-consistent-bounded,asn-product-error-rates,asn-vc,asn-weak-dependence-batches} (consistency and product error rates). Then
\[ \sqrt{n}(\hat{\tau}_{AIPW} - \tau)\Rightarrow \mathcal{N}(0,V_{AIPW}),\]
where $\tau$ is the ATE and $V_{AIPW}$ is
\begin{align*}
&\sum_{z\in\{0,1\}}\E\left[\frac{\sigma^2_z(X)}{e_z(X)\pi^\dagger(z,X)}\right] + \mathrm{Var}\left[\mu_1(X)-\mu_{0}(X)\right].
\end{align*} 
\end{theorem}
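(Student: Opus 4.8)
The plan is to establish the CLT by showing that the feasible AIPW estimator is asymptotically equivalent to an infeasible oracle estimator that uses the true nuisance functions and the limiting optimal probabilities $\pi^*(z,X)$, and then apply a martingale/triangular-array CLT to the oracle. First I would decompose $\sqrt{n}(\hat\tau_{AIPW}-\tau)$ into (i) the oracle term $\frac{1}{\sqrt n}\sum_{t,k}\sum_{(t,i)\in\mathcal{I}_k}(\psi_{1,i}-\psi_{0,i}-\tau)$ evaluated at true $(\mu_z,e_z,\pi^*)$, (ii) a ``nuisance estimation'' remainder coming from replacing $(\hat\mu_z^{(-k)},\hat e_z^{(-k)})$ with the truth, and (iii) a ``probability adaptivity'' remainder coming from replacing $\hat\pi^{(-k)}$ with $\pi^*$. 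Term (ii) is handled by the standard Neyman-orthogonality argument for the AIPW/DML score: the first-order bias vanishes and the product-rate \Cref{asn-product-error-rates} together with consistency \Cref{asn-consistent-bounded} forces the cross-product remainders to be $o_p(n^{-1/2})$; the empirical-process (``stochastic equicontinuity'') piece is controlled by the finite-VC assumption \Cref{asn-vc} and the within-batch cross-fitting that makes $(\hat\mu_z^{(-k)},\hat e_z^{(-k)},\hat\pi^{(-k)})$ independent of the fold-$k$ batch-2 observations on which they are evaluated.

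The genuinely new piece relative to \citet{bia2021dmlsampleselection} is term (iii): the realized observation indicators $R_{2,i}$ in batch~2 are drawn from $\mathrm{Bern}(\hat\pi_2^{(k)}(X_i))$, where $\hat\pi_2^{(k)}$ is a \emph{random} function built from batch-1 data, so the summands are not i.i.d. Here I would condition on the batch-1 $\sigma$-field (equivalently $\mathcal{I}^{(-k)}$ and the batch-2 covariates), under which the fold-$k$ batch-2 terms become an independent (non-identically distributed) triangular array. The key facts to verify are: (a) conditionally, $\E[R_i\mid Z_i,X_i,\mathcal{I}^{(-k)}]=\hat\pi^{(-k)}(Z_i,X_i)$, so plugging the \emph{same} $\hat\pi$ into the denominator of $\hat\psi_{z,i}$ makes the score conditionally mean-$\mu_z(X_i)$ regardless of how well $\hat\pi$ approximates $\pi^*$ — i.e., the estimator is exactly unbiased given the realized design, which is why only the \emph{variance} needs $\hat\pi\to\pi^*$; (b) by Theorem~\ref{thm-global-budget-solution} plus consistency of $\hat\sigma_z^2,\hat e_z$ and the VC control, $\hat\pi^{(-k)}(z,X)\overset{p}{\to}\pi^*(z,X)$ in $L^2$, and combined with the positivity lower bound $\epsilon$ from \Cref{asn-positivity} the conditional second moments converge to $\E[\sigma_z^2(X)/(e_z(X)\pi^*(z,X))]$; (c) the weak-dependence-across-batches assumption kills the cross term between the batch-1 oracle sum and the batch-2 conditional-mean-zero sum. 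A Lindeberg/Lyapunov condition for the triangular array follows from the bounded-moment \Cref{asn-consistent-bounded} and the $\pi\ge\epsilon$ bound.

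Concretely the step order is: (1) write the AIPW score, verify Neyman orthogonality of $\psi_z$ in $(\mu_z,e_z,\pi)$; (2) Taylor-expand the feasible estimator around the oracle nuisances, collect the bias term (zero by orthogonality) and the quadratic remainder, bound it by \Cref{asn-product-error-rates,asn-consistent-bounded}; (3) bound the empirical-process term using \Cref{asn-vc}, within-batch cross-fitting independence, and \Cref{asn-positivity}; (4) show $\hat\pi^{(-k)}\overset{p}{\to}\pi^*$ via Theorem~\ref{thm-global-budget-solution} and continuity, and use the weak-dependence assumption to decouple batches; (5) apply a conditional Lindeberg CLT to the combined batch-1 plus batch-2 triangular array, computing the limiting variance as $\sum_z \E[\sigma_z^2(X)/(e_z(X)\pi^*(z,X))] + \mathrm{Var}[\mu_1(X)-\mu_0(X)]$, where the first term is assembled from the two batches' contributions weighted by $\kappa$ and $1-\kappa$ but telescoping to the stated form because $\kappa\pi_1+(1-\kappa)\hat\pi_2=\pi^*$ by construction in Step~3 of \Cref{alg:adaptive}. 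I expect the main obstacle to be step (4)–(5): rigorously handling the feedback loop whereby batch-2 randomness depends on batch-1 estimates — in particular controlling the error between $\E[\sigma^2_z(X)/(e_z(X)\hat\pi^{(-k)}(z,X))]$ and its limit uniformly enough to both (i) identify the limiting variance and (ii) ensure the realized $\hat\pi_2^{(k)}(X_i)$ stays bounded away from $0$ with high probability (so that Step~3's affine solve doesn't produce out-of-range probabilities), which is where the positivity margin $\epsilon$ and a truncation/clipping argument would be needed.
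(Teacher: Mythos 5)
Your proposal is correct in outline and its architecture largely matches the paper's: an oracle term plus a nuisance-estimation remainder plus an adaptivity remainder, with within-batch cross-fitting supplying conditional independence, Chebyshev bounds on the conditionally mean-zero pieces, and Cauchy--Schwarz plus \Cref{asn-product-error-rates} killing the $(\mu-\hat\mu)\times(1/(\hat e\hat\pi)-1/(e\pi))$ product term --- this is essentially the paper's Step 2. Where you genuinely diverge is in the treatment of the adaptive sampling. The paper couples the realized indicators $R_i=\mathbb{I}[U_i\le\hat\pi^{(-k)}(Z_i,X_i)]$ to counterfactual indicators $\tilde R_i=\mathbb{I}[U_i\le\pi^*(Z_i,X_i)]$ built from the same uniforms, decomposes $\tilde R_i-R_i$ as $(\tilde R_i-\pi^*)+(\pi^*-\hat\pi)+(\hat\pi-R_i)$, shows each contribution is $o_p(n^{-1/2})$ via Cauchy--Schwarz and the $L^2$ convergence of $\hat\pi$ to $\pi^*$ (\Cref{lemma-convergence-nuissance}), and then quotes the ordinary iid CLT for the $\tilde R$-based oracle estimator. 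You instead keep the realized $R_i$ and invoke a conditional Lindeberg CLT for the triangular array given the batch-1 $\sigma$-field, using $\hat\pi^{(-k)}\to\pi^*$ only to identify the limiting conditional variance. Both routes rest on the same convergence lemma; the coupling route buys access to the iid CLT, while yours is closer to the standard adaptive-design/martingale toolkit and makes explicit the useful observation that plugging the realized sampling probability into the denominator yields exact conditional unbiasedness. One caution on that observation: it is exact only if each observation's denominator is its own batch's sampling probability, whereas the pooled estimator uses the mixture (or re-estimated) $\hat\pi^*$, so for batch-1 points (sampled at rate $B$) the score is not exactly conditionally unbiased and you still need a coupling- or convergence-type argument, together with the weak-dependence-across-batches assumption, to control that discrepancy --- a gap you correctly flag as the main obstacle. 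Your concern that the affine solve for $\hat\pi_2^{(k)}$ may leave $[0,1]$ and needs a clipping argument is also legitimate; the paper addresses it only in a footnote.
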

\Cref{thm-asymptotic-convergence} shows that the batch adaptive design and feasible estimator have an asymptotic variance equal to the variance of the true ATE under missing outcomes and the
optimal feasible pooled probability $\pi^\dagger$. 
Therefore, our procedure gives asymptotically valid level-$\alpha$
confidence intervals for $\tau$ of minimum width among pooled
probabilities attainable after the first-batch allocation.
The proof of \Cref{thm-asymptotic-convergence} proceeds in two steps. The first step establishes that the feasible AIPW estimator converges to the AIPW estimator with oracle nuisances. Next we show that the oracle estimator with feasible nuisances converges to the same estimator with oracle nuisance functions. Together, with our convergence and product error rate assumptions, we have that our feasible AIPW estimator converges to the oracle (see full proof in \Cref{appendix:proofs}). 

\textbf{Insights and improvements}

\textbf{1) When is our method much better than uniform sampling?} Prior works of \citep{naoki2023dsl,zrnic2024active_statistical}, though they do not study treatment effect estimation, obtain valid inference with uniform sampling (i.e. with the budget probability). When do optimized data annotation probabilities improve upon uniform sampling? To answer this, we analyze the relative efficiency ($\mathrm{RelEff}$) which compares the asymptotic variance $(\mathrm{AVar})$ under optimized or uniform sampling, for the same budget. 
The following closed-form comparison concerns the unrestricted oracle
$\pi^*$; when clipping binds, the corresponding relative efficiency
uses $\pi^\dagger$ and is evaluated numerically.
\begin{corollary}[Relative efficiency]\label{corollary-rel-eff}
\begin{align*}
    \mathrm{RelEff} &= \frac{\mathrm{AVar} \textrm{ of estimation with } \pi^*}{\mathrm{AVar} \textrm{ of estimation with uniform prob. } B} 
    =  \frac{
\frac{1}{B}
\left(\Eb{ \sqrt{\sigma^2_1(X)}+\sqrt{\sigma^2_0(X)}
}\right)^2 
+ \mathrm{Var}[\tau(X)]
}{
\frac{1}{B}\Eb{ \frac{\sigma^2_1(X)}{e_1(X)} +\frac{\sigma^2_0(X)}{e_0(X)}}
+ \mathrm{Var}[\tau(X)]
}
\end{align*}
\end{corollary}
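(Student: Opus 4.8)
\textbf{Proof proposal for Corollary~\ref{corollary-rel-eff}.}

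The plan is to compute both asymptotic variances by specializing \Cref{prop-avar-ate} (equivalently the $V_{AIPW}$ of \Cref{thm-asymptotic-convergence}) to the two sampling schemes and then take the ratio. First I would handle the numerator: plug the closed-form optimum $\pi^*(z,X)$ from \Cref{thm-global-budget-solution} into the $\pi$-dependent term $\sum_{z}\E[\sigma^2_z(X)/(e_z(X)\pi(z,X))]$. Substituting $\pi^*(z,X) = \sqrt{\sigma^2_z(X)}\,B\,/\,\big(e_z(X)\,\E[\sqrt{\sigma^2_1(X)}+\sqrt{\sigma^2_0(X)}]\big)$, the factors of $e_z(X)$ cancel and each summand collapses to $\sqrt{\sigma^2_z(X)}\,\E[\sqrt{\sigma^2_1(X)}+\sqrt{\sigma^2_0(X)}]/B$; summing over $z\in\{0,1\}$ and taking the outer expectation yields $\frac{1}{B}\big(\E[\sqrt{\sigma^2_1(X)}+\sqrt{\sigma^2_0(X)}]\big)^2$. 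Adding back the $\pi$-independent term $\mathrm{Var}[\mu_1(X)-\mu_0(X)]=\mathrm{Var}[\tau(X)]$ gives the stated numerator.

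Next I would handle the denominator: set $\pi(z,X)\equiv B$ (uniform sampling with the budget probability, which satisfies $\E[\pi(Z,X)]=B$ since it is constant). Then $\sum_z \E[\sigma^2_z(X)/(e_z(X)B)] = \frac{1}{B}\E[\sigma^2_1(X)/e_1(X)+\sigma^2_0(X)/e_0(X)]$, and again I add the common $\mathrm{Var}[\tau(X)]$ term. Dividing numerator by denominator produces exactly the displayed expression for $\mathrm{RelEff}$. A small point worth a sentence is to note that $\pi^*$ as written is a feasible point: the budget constraint $\E[\pi^*(Z,X)]\le B$ holds with equality by construction (it is the active constraint in the Lagrangian characterization of \Cref{thm-global-budget-solution}), so the comparison is between two schemes consuming the same expected budget $B$.

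There is essentially no hard step here — the corollary is a direct algebraic consequence of the two preceding results. The only thing requiring a moment of care is the cancellation pattern in the numerator: one must track that the $1/e_z(X)$ in $\pi^*$ exactly offsets the $1/e_z(X)$ appearing in the variance functional, so that the propensity scores drop out of the optimized variance entirely (they reappear only in the denominator, under uniform sampling). It is also worth remarking, as an immediate sanity check on the direction of the inequality, that Jensen/Cauchy--Schwarz gives $\big(\E[\sqrt{\sigma^2_1}+\sqrt{\sigma^2_0}]\big)^2 \le 2\,\E[\sigma^2_1+\sigma^2_0] \le 2\,\E[\sigma^2_1/e_1+\sigma^2_0/e_0]$ when propensities are bounded, confirming $\mathrm{RelEff}\le 1$, i.e. the optimized scheme is never worse; but since the corollary only asserts the identity for $\mathrm{RelEff}$, this observation can be relegated to a remark rather than included in the proof proper.
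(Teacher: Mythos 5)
Your computation of the identity is correct and is exactly the argument the corollary rests on: substituting $\pi^*$ from \Cref{thm-global-budget-solution} into the $\pi$-dependent term of \Cref{prop-avar-ate} makes the propensities cancel, leaving $\frac{1}{B}\left(\E[\sqrt{\sigma^2_1(X)}+\sqrt{\sigma^2_0(X)}]\right)^2$, while $\pi\equiv B$ gives $\frac{1}{B}\E[\sigma^2_1(X)/e_1(X)+\sigma^2_0(X)/e_0(X)]$, and both schemes consume the same expected budget. The paper provides no separate proof because this direct substitution is all there is.

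One correction to your closing aside: the chain $\left(\E[\sqrt{\sigma^2_1}+\sqrt{\sigma^2_0}]\right)^2\le 2\,\E[\sigma^2_1+\sigma^2_0]\le 2\,\E[\sigma^2_1/e_1+\sigma^2_0/e_0]$ only bounds the optimized term by \emph{twice} the uniform term, so it does not confirm $\mathrm{RelEff}\le 1$. The clean one-line justification is either (i) feasibility: $\pi\equiv B$ is feasible for the optimization problem that $\pi^*$ solves, so the minimized objective cannot exceed the uniform one; or (ii) Cauchy--Schwarz exploiting $e_1(X)+e_0(X)=1$:
$$\left(\E\left[\textstyle\sum_z\sqrt{e_z(X)}\cdot\tfrac{\sqrt{\sigma^2_z(X)}}{\sqrt{e_z(X)}}\right]\right)^2\le \E\left[\textstyle\sum_z e_z(X)\right]\,\E\left[\textstyle\sum_z\tfrac{\sigma^2_z(X)}{e_z(X)}\right]=\E\left[\tfrac{\sigma^2_1(X)}{e_1(X)}+\tfrac{\sigma^2_0(X)}{e_0(X)}\right].$$
Since you explicitly exclude this observation from the proof of the identity itself, it does not affect the correctness of your proof of the corollary as stated.
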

By construction, $\mathrm{RelEff} \leq 1$; the smaller it is, the larger the improvement from our method. Our method's improvement increases if the budget is smaller $(B\downarrow)$ or if there are imbalanced propensities where $e_1(X)$ is close to $0$ or $1$. Improvements shrink for large budgets or when treatment variances are similar.

\textbf{2) Direct estimation of $(e\pi^*)^{-1}$ mitigates estimation stability.}
It is well known that estimating propensities and then inverting estimates can be unstable in practice. This problem is doubly-so for causal inference with missing outcomes. We find many papers on adaptive treatment allocation note this challenge and mix their optimized allocation probabilities with uniform in the experimental sections \citep{dimakopoulou2021online,zrnic2024active_statistical,cook2024semiparametric}; just as many papers in causal inference  clip the weights in practice \citep{wang2017optimal}. Our closed-form solution reveals that estimating propensity scores for the \textit{final} ATE estimation on the full dataset is \textit{fundamentally unnecessary}, though it is needed to estimate $\pi^*$. At $\pi^*$, observe that\footnote{This depends on some joint properties of $\kappa, p_1$, whether it is feasible to find second-stage batch sampling probabilities $\pi_2$ so that $\kappa p_1 + (1-\kappa) \pi_2(x)=\pi^*(x)$} 
$(e_z(x)\pi^*(z,x))^{-1} \propto \sqrt{\sigma^2_z}(x)^{-1}$ and is \textit{independent of the propensity score} $e_z(x)$. Therefore estimating the optimal inverse propensity function directly can exploit its \textit{lower} statistical complexity. In causal inference and covariate shift, many methods (such as balancing weights) avoid the plug-in approach for inverse propensity methods in favor of direct estimation of the inverse propensity score \citep{tsuboi2009direct,zubizarreta2015stable,imai2014covariate,kallus2018balanced,kallus2018optimal,cohn2023balancing,bruns2025augmented}. We recommend estimation on the final dataset with such approaches or other types of direct estimation. For example, even estimation of $P(Z=z,R=1\mid X)$ directly helps:
\begin{equation}
{\psi_z(e,\pi^*)
    = \frac{\mathbb{I}[Z=z,R=1]}{P(Z=z,R=1\mid X)}   (Y-\mu_z(X))   + \mu_z(X)} \qquad \text{($RZ$-plug-in).}
    \label{eqn-RZ-estimation}
\end{equation}


\subsection{Extension to continuous treatments\label{appendix:additional-results-cts-ext}}


Our analysis applies readily to other static causal inference estimands, such as those for continuous treatments. We characterize the optimal sampling probabilities. In the continuous setting, consider estimation of a counterfactual mean: $\mathbb{E}[Y(z)].$ (We can extend to contrasts for different values of treatment, in analogy to the ATE). Let $(Y_i,X_i,Z_i)^n_{i=1}$ be an i.i.d. sample from $Q = (Y,X,Z) \in \mathcal{Q} = \mathcal{Y} \times \mathcal{X} \times \mathcal{Z}_0 \subseteq \mathcal{R}^{1\times d_x \times 1}$, i.e. consider a univariate continuous treatment $Z \in \mathcal{Z}_0$. This can extend to the case of multiple continuous treatments $d_Z$ but for ease of mathematical computation, we start with the one-dimensional continuous treatment setting. We derive the form of the asymptotic variance as well as the bias term for an estimator for continuous treatments with missing outcomes. 

We introduce an estimator for continuous treatments with missing outcomes that is a direct extension of \citep{kallus2018policy,colangelo2020double}, while building on the Riesz representer characterization of \citep{klosin2021automatic}'s automatic double machine learning estimator for continuous treatment effects. We introduce what we call the  ``partial" Riesz representer, $\alpha(z, X)=\frac{1}{P(Z=z \mid X)}$ which is the inverse generalized propensity score or the balancing function for treatment alone. (We term it ``partial" since we are optimizing over the $\pi(z,x)$ missingness probabilities in the denominator). We introduce the partial Riesz representer following our earlier insight as to the improved finite-sample performance of using balancing weights estimators on the final collected data. We also introduce $\bar{\alpha}$ to account for mispecification of the nuisance function. Under the correct specification of this nuisance function, $\bar{\alpha}$ = $\alpha$.

The following estimator for continuous treatments with missing outcomes is a direct extension of \citep{kallus2018policy,colangelo2020double}, that replaces the indicator function $\mathbb{I}[Z=z]$ with a local kernel function smoother localizing around $z$ with bandwidth $h$, $K_h(Z-z)$:
\begin{align}
    &E[ \psi_z(\alpha,\mu) ]= \mathbb{E}[Y(z)]+O(h^2), \\
 &\text{ where }  \psi_z(\alpha,\mu) = 
 \mu\left(z, X_i\right)+\frac{K_h\left(Z_i-z\right)\mathbb{I}[R=1]\alpha\left(z,X_i\right)}{\pi(z,X_i)}\left(Y_i-\mu\left(z, X_i\right)\right), \text{ and }
 \alpha(z,x) = \frac{1}{f_{Z|X}(z|x)} \nonumber
\end{align}


Here $f_{Z|X}(z|x)$ is defined as conditional probability density of treatment given covariates and later we will use $f_{ZX}(z,x)$ to refer to the joint distribution between treatments and covariates.


Following our analysis in the binary treatment setting, we derive the asymptotic variance of this estimator. In the continuous treatment setting, the asymptotic variance does incur bias and we derive the expressions of both the variance and bias terms in the following proposition. 

\begin{proposition}
\label{prop:cts-trt-avar}
The integrated mean squared error for the continuous treatment setting is: 
\begin{align*}
MSE &= V_z  + B_z, \\
\text{where } V_z &\equiv h^{-1} \Eb{\frac{\bar{\alpha}^2(z,x)}{\pi(z,x)}f_{Z\mid X}(z\mid x)\Eb{(Y-\mu(z,x))^2\mid Z=z,X=x}} \xi_k,\\
\text{ and }B_z
& \equiv h^4 \Bigg(\bigg[ 2 \frac{d}{d z} \bar{\mu}(z,X)\frac{d}{d z} f_{Z|X}(z|x) + f_{Z|X}(z|x)\frac{d^2}{d z} \bar{\mu}(z,X) + (\bar{\mu}(z,X) - \mu(z,X))\frac{d^2}{d z}f_{Z|X}(z|x)\bigg]\kappa\Bigg)^2. 
\end{align*}
\end{proposition}

Most notably, we see that the bias term does not depend on $\pi(z,x)$. Therefore, we can focus our optimization on $V_z$ with respect to $\pi(z,x)$. 

For this optimization procedure, we consider the same assumptions required as in \citep{colangelo2020double}, standard in kernel density estimation analysis such as sufficient smoothness of the underlying function and kernel function, and rate conditions $h \rightarrow 0, n h\rightarrow \infty$, $n h^{4} \rightarrow C \in[0, \infty)$. Suppose that $\alpha(z,X)$ is well-specified. Let $\sigma^2(z,x)=\mathbb{E}\left[(Y-\mu(z, X))^2 \mid Z=z, X=x\right]$. We need to optimize the expression for variance that explicitly has the integration over $K_h$. The objective function arises from the asymptotic variance expression in \citep[Thm. 3]{colangelo2020double}; it follows readily from following their proof of Thm. 3 with our analysis of the asymptotic variance as in \Cref{prop-avar-ate}. The proof of the optimal solution follows our analysis in \Cref{thm-global-budget-solution} with a few slightly different expressions. The optimization problem can be written as follows:
$$
\pi^*(z,x) \in \arg\min_{\pi(z,x)}\int_{\mathcal{X}} \int_{Z_0} \frac{K_h^2(s-z) \bar{\alpha}^2(s, x) }{\pi(s, x)} \sigma^2(s,x) f_{Z X}(s, x) d s d x \text{ s.t. } \E[\pi(z,X)] \leq B. \;\;
$$


The closed-form solution for the optimal annotation probability for the continuous treatments case is:
\begin{equation}
\label{eqn-cts-tx-optimal}
\pi^*(z,X) = \frac{K_h(Z-z)\bar{\alpha}(z,X)\sqrt{\sigma^2(z,X)}}{\Eb{K_h(Z-z)\sqrt{\bar{\alpha}^2(Z,X)\sigma^2(Z,X)}}} B.
\end{equation}

Crucially, note that the optimal sampling probability is similar to the binary-treatment solution \Cref{thm-global-budget-solution}, with analogous outcome conditional variance and inverse propensity, but additional localization around the treatment z. Therefore, key insights carry over to the continuous setting. 
\section{Experiments: Methodological validation}\label{sec-experiments}

We evaluate our batch adaptive allocation protocol on synthetic and real-world datasets. We show that our method enables consistent and efficient ATE estimation even under limited labeling budgets, ultimately helping resource-constrained organizations obtain reliable estimates from their data.  
\subsection{Adaptive causal annotation results} 
\paragraph{Baselines.} Across all experimental setups, we compare against completely random sampling for AIPW, and evaluate MSE relative to an oracle full-data skyline (infeasible in practice). 
We compare MSE to the skyline of the standard AIPW estimator with fully observed outcomes, that is when the budget equals 1 or $R=1$ for all data points. In our setting, completely random sampling for AIPW is the strong baseline, because AIPW is optimal causal estimation for the ATE. Other more complicated methods target other objectives. 
See \Cref{appendix-al-baselines} for more detailed discussion and experimental results including pool-based active learning baselines (which do worse). 

\begin{figure*}[t!]
    \centering
    \includegraphics[width=\textwidth]{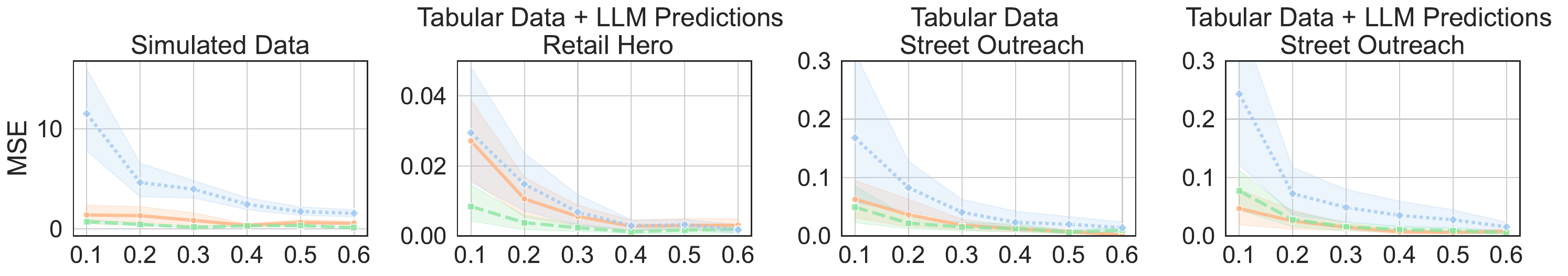}

    \hspace{-0.5cm}
    \includegraphics[width=\textwidth]{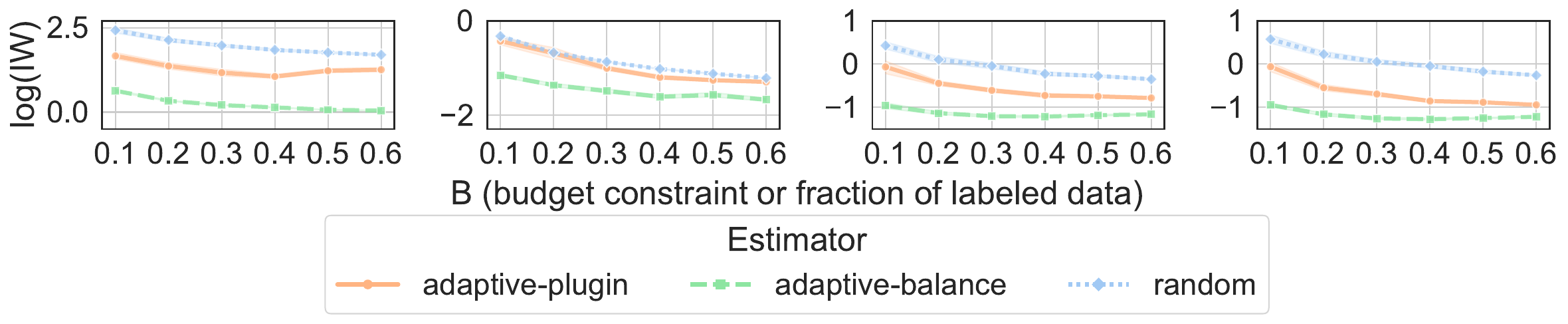}
    \caption{\textbf{Experiments on synthetic data (leftmost), Retail Hero (center left), and Street Outreach data (center right and rightmost).} Results of performance measure mean squared error (top) and $95\%$ confidence interval width on the log scale (bottom) averaged over 20 and 100 (for simulated data) trials across budget percentages of the data. For tabular data experiments, we use random forest prediction on tabular data alone (center right). For tabular data including LLM predictions on text, we use LLM predictions and serialized features as covariates into model (center left and rightmost).}
    \label{fig:retailhero_lineplots}
\end{figure*} 


\paragraph{Simulated Data.} For our simulation study, we generate synthetic data following the data generation process defined in \Cref{appendix-synthetic-exp}. The synthetic data does not include $\tilde{Y}$, but best showcases the utility of our batch adaptive procedure for data annotation under a labeling budget. The leftmost plot of \Cref{fig:retailhero_lineplots} shows that our approach achieves the greatest percentage gains at the smallest budgets $0.1 - 0.3$ with $71\%-95\%$ average percentage gain. We also see a large reduction in confidence interval width on the log scale.

\paragraph{Retail Hero Data.} We study a semi-synthetic dataset, RetailHero \citep{retailhero}, augmented by  \citet{dhawanend} to include outcomes recorded in text. The 
dataset contains background customer information $X$, treatment $Z$ as a text message ad sent to the customer, and outcomes $Y$ of whether the customer made a purchase or not. 
\begin{table}[h!]
    \centering
    \small
    \renewcommand{\arraystretch}{1.2}
    \begin{tabular}{%
        p{0.2\textwidth}%
        p{0.50\textwidth}%
        p{0.3\textwidth}%
    }
        \hline
        \textbf{Variable} & \textbf{Description} & \textbf{Discrete Category} \\
        \hline
        \multicolumn{3}{l}{\textbf{Outcome}} \\[0.2em]
        Purchase&  whether a customer purchased a product &[Yes,No]\\[0.4em]
        \multicolumn{3}{l}{\textbf{Treatment}} \\[0.2em] 
        SMS communication & whether a text was sent to encourage customer to continue shopping & [Yes, No] \\[0.4em]
        \multicolumn{3}{l}{\textbf{Covariates}} \\[0.2em]
        avg. purchase & avg. purchase value per transaction & [1-263, 264-396, 397-611, $>$ 612]\\
        avg. product quantity & avg. number of products bought & [$\leq$ 7, $>$ 7]\\
        avg. points received & avg. number of points received & [$\leq$ 5, $>$ 5]\\
        num transactions & total number of transactions so far & [$\leq$ 8, 9 - 15, 16 - 27,  $>$ 28]\\
        age & age of user & [$\leq$ 45, $>$ 45]\\
        \hline
    \end{tabular}
    \caption{Covariate, treatment, and outcome descriptions and discrete category definitions for RetailHero dataset.}
    \label{tab:dataset-descriptions-retailhero}
\end{table}
\citet{dhawanend} 
sampled datapoints according to an artificial propensity score and generated text from the binary outcomes prompting LLMs  to generate social media posts following personas (given covariates) (details in \Cref{appendix:real-data-exp}). These text posts are $\tilde{Y}$. The goal is to estimate the causal effect of SMS communication on purchase. 
This illustrates our contextual setting: abundant social media posts provide noisy signals, but only limited validation is feasible.

We implement our proposed methods using 1) random forest models to estimate the outcome model $\hat{\mu}_z(X)$ (center, \Cref{fig:retailhero_lineplots}) and 2) a data-driven ensembling\footnote{We estimate the outcome model $\hat{\mu}_z(X,\tilde{Y})$ by ensembling, taking  a weighted average between $\E[Y|X]$ (random forest) and $\E[Y|X,f(X,\tilde{Y})]$ (support vector machine), choosing the best models and weights to minimize the MSE of predicting $Y$ on $20\%$ of the full data.} of $\mu_z(X)$ and $\hat{\mu}_z(X,\tilde{Y})$, where the latter includes zero-shot LLM predictions $f_z(X,\tilde{Y})$ (using Llama-70B) as a covariate (right, \Cref{fig:retailhero_lineplots}). For $f_z(X,\tilde{Y})$, to save computational cost and time\footnote{More detail on models and runtime in \Cref{appendix-data}.}, we cached a set of five LLM predictions for each data point offline that we then sampled from in our experiments.
We average the results over 20 random data splits. We compute the AIPW estimator on all available data as a stand-in for ground-truth. (The dataset was too small for a separate held-out validation set). 

Figure \ref{fig:retailhero_lineplots} shows the improved performance of our adaptive estimator either with a direct estimation of $(e\pi^*)^{-1}$ using logistic regression that we plug in (following \Cref{eqn-RZ-estimation}) or a random forest-based estimator of $(e\pi^*)^{-1}$ extracted from ForestRiesz \citep{chernozhukov2022riesznet}, a random forest-based method to learn balancing weights, over the random sampling baseline. At budget value $\mathrm{B}=0.1$, our batch adaptive procedure with plug-in and balancing weights achieves a $77\%$ and $85\%$ average percentage gain (respectively) in MSE over random sampling, while at $\mathrm{B}=0.4$ we see a $\sim 73\%$ percentage gain for both estimators. 
\Cref{fig:retail_hero_budget_saved} in \Cref{appendix:budget-saved-plots} shows the impact of our approach most clearly when we compute the percentage of the budget saved to reach the same interval width. We observe a minimum budget saved of $\sim10\%$ with the adaptive plug-in estimator and $\sim45\%$ with the adaptive balance estimator on tabular data. The LLM prediction we generate is based on simple zero-shot learning and direct serialization of the tabular data; further fine-tuning could improve performance. Nonetheless, our method can provide robust valid guardrails around statistical inference using these black-box predictions. 

\paragraph{Street Outreach Data.} Next, we demonstrate our method on street outreach casenote data collected by a partnering nonprofit providing homelessness services. This analysis  
was approved by the Institutional Review Boards at [blinded for review].

The covariate data $X$ consists of baseline characteristics on each client as tabular data (center right, \Cref{fig:retailhero_lineplots}), such as the number of previous outreach engagements, and (rightmost, \Cref{fig:retailhero_lineplots}) LLM generated summaries of case notes recorded before treatment. We construct the cohort in our dataset to include clients who are seen consistently at least once per 6 months from 2019-2021. (Being on caseload requires a more intensive frequency of being seen at least three times per month, so our cohort definition does not leak post-treatment information). The binary treatment $Z$ was based on the number of outreach engagements within the first 6 months of 2019. Clients with 1-2 engagements were assigned $Z=0$ (131 clients), and those with 3-15 were assigned $Z=1$ (355 clients). The outcome $Y$ can take on values in $\{0,1,2,3\}$, where $0$ indicates that a client is still on the streets and $3$ indicates that a client has found permanent housing. $Y$ is the highest housing placement reached by 2021. Our final data set contained $471$ clients. 
More information on the data can be found in \Cref{appendix:real-data-exp}. 
We use housing placement as an illustrative example because the ground truth data is available in our dataset. However, this is still illustrative since it might be missing in other settings, in which case nonprofits have to decide how to expend their limited resources to obtain more information (i.e., caseworker follow-up calls or analyzing more recent casenotes $\tilde{Y}$). 
Similar to Retail Hero, our "Tabular Data" model  
uses random forests to estimate the outcome model on tabular data alone, $\hat{\mu}_z(X)$, and in ``..+LLM Predictions'' we include LLM predictions $f_z(X,\tilde{Y})$ as additional covariates and estimate $\hat{\mu}_z(X,\tilde{Y})$.

The LLM predictions are given by serializing the tabular data, and/or including summaries --- see the appendix for the prompts used. 

In \Cref{fig:retailhero_lineplots}, we see that overall our adaptive approach shows improvements over uniform random sampling. The MSE approximately doubles when going from both adaptive estimators to random sampling in the tabular data setting and with LLM predictions. For budgets $B = 0.2 - 0.5$, the confidence interval widths for our causal effect estimates range from $0.31 - 0.38$ for the adaptive estimator with balancing weights, $0.47-0.98$ for the plug-in estimator, and $0.76-1.56$ for random sampling. Thus, our batch-adaptive estimators yield far more precise estimates than random sampling, and at lower cost. 

\subsection{Budget saved plots \label{appendix:budget-saved-plots}}

We compute the amount of budget saved due to our batch adaptive sampling approach. We find the sample size required to achieve
the same confidence interval width with batch adaptive annotations using balancing weights (green) and RZ-plug-in (orange) compared to uniform random sampling. \Cref{fig:outreach_budget_saved} in \Cref{appendix:budget-saved-plots} shows that we can save between $43-75\%$ of the budget using the plugin estimator on tabular data alone and by incorporating LLM predictions, and between $53-91\%$ using the balance estimator over the random sampling baseline.

\begin{figure*}[ht!]
    \centering
    \begin{subfigure}[t]{0.49\textwidth}
        \centering
        \includegraphics[width=\textwidth]{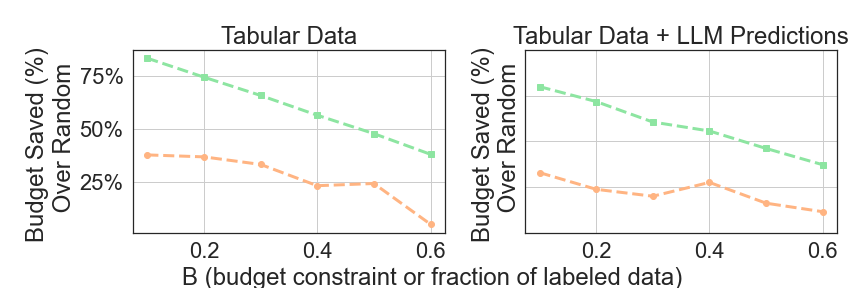}
        \caption{RetailHero data.}
        \label{fig:retail_hero_budget_saved}
    \end{subfigure}
    \hfill
    \begin{subfigure}[t]{0.49\textwidth}
        \centering
        \includegraphics[width=\textwidth]{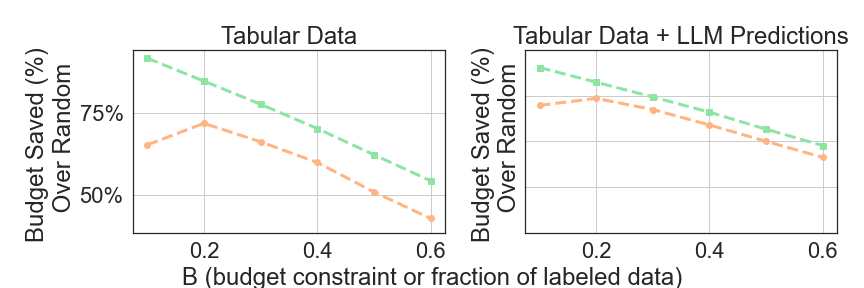}
        \caption{Street Outreach data.}
        \label{fig:outreach_budget_saved}
    \end{subfigure}
    \caption{Budget saved due to batch adaptive annotation on (a) RetailHero and (b) Street Outreach data. Each panel shows the reduction in annotation sample size needed to achieve the same confidence interval width with batch adaptive annotation on tabular data (left) and on tabular data + complex embedded outcomes (right) compared to random sampling.}
    \label{fig:budget-saved}
\end{figure*}

\subsection{Comparison to active learning baselines}\label{appendix-al-baselines}
Active learning is a widely studied subfield in machine learning and operations. A natural question is how our optimal-variance annotation policies compare to prediction-focused active learning. Since our goal is ultimately estimating the average treatment effect, we argue on theoretical and empirical fronts that active learning, which reduces prediction error, is not a strong baseline and would behave poorly in the causal context. 

By their very design, active learning primarily improves $\mu_z$, but the outcome model contributes $\frac{\sigma^2_z(x)}{e_z(x)\pi(z,x)}$ to the causal Avar, and our optimal annotation correctly balances the effect of all factors, but active learning only considers the first. In summary, active learning does something \emph{completely different to improve prediction error, which is generally suboptimal for causal effect estimation}.

On the theoretical side, prior results in batch \emph{pool-based active learning}, \citet{chaudhuri2015covergenceAL} and \citet{gentile2024ratesal} show that active learning doesn't improve convergence rates for \textit{regression}, only multiplicative constants. Instead, the AIPW estimator is optimal for causal estimation: if the outcome and propensity scores can only achieve $n^{-1/4}$ convergence, the AIPW estimator is $O(n^{-1/2})$-rate convergent, so AIPW can speed up outcome model convergence rates. Therefore using the AIPW estimator is best since it leverages the rate-improvements of orthogonal estimation, and random sampling + AIPW is a stronger baseline than active learning.

Empirically, we run active learning algorithms to learn $\mu$ in AIPW and find that it \emph{totally fails} for these reasons; if these objectives line up, it can do well, but in general, the prediction and causal error objectives are different.

\paragraph{Theoretical comparison to active learning.}

As a reminder, we optimize: 

$$A{Var}_{A T E}=Var[CATE(X)]+\sum_{z \in\{0,1\}} {E}[\frac{\sigma_z^2(X)}{e_z(X) \pi(z, X)}]$$
(The first term is the variance of $CATE = E[Y(1)-Y(0)|X]$; it is never observed.)




\paragraph{Active Learning Empirical Evaluations.}

We evaluate our method against 2-3 active learning baselines for each experiment from two popular and well-established python packages (scikit-activeML and modAL). Different active learning algorithms are appropriate for different outcome models, so we choose the sampling strategy based on our modeling task, and we use pool-based active learning matching our two-batch approach. (Note our approach is \emph{model-agnostic}, while active learning methods are not). For the classification tasks on our two real-world datasets (RetailHero/Street Outreach), we use UncertaintySampling with margin sampling and least confident sampling as query strategies, which both choose x with highest uncertainty measure based on classification probabilities $P(\hat Y=1\mid x)$ \citep{Settles2009ActiveLL}. For the regression tasks, we use Expected Model Variance Reduction \citep{cohn1996ALstats}, Expected Model Change Maximization \citep{cai2013maxexp}, and Improved Greedy Sampling \citep{wu2019active}; these choose $x$ that maximizes greatest future variance reduction, maximally change the current model via the loss gradient, and diversity in feature and output space, respectively. 

We run each approach over 50 trials and take the average MSE, reported in \Cref{tab:synthetic_mse,tab:retailhero_mse,tab:street_mse}. Across the board, we see that our approach does better than the popular active learning strategies that are not optimized for causal estimation.


\begin{table}[t!]
\centering
\resizebox{\textwidth}{!}{%
\renewcommand{\arraystretch}{1.3}
\setlength{\tabcolsep}{6pt}
\begin{tabular}{|l|ccccccccc|}
\hline
\textbf{Estimator} & 0.1 & 0.2 & 0.3 & 0.4 & 0.5 & 0.6 & 0.7 & 0.8 & 0.9 \\
\hline
active-evar      & \textbf{0.313} & 17.3  & 85.1  & 579   & 1.31e+03 & 3.87e+03 & 1.27e+04 & 5.03e+04 & 8.93e+05 \\
active-greedy    & 6.13  & 79.9  & 369   & 852   & 1.99e+03 & 5.06e+03 & 1.33e+04 & 5.09e+04 & 2.95e+05 \\
active-mvar      & 10.6  & 94.3  & 314   & 883   & 2.17e+03 & 5.70e+03 & 1.21e+04 & 3.87e+04 & 2.99e+05 \\
adaptive-balance  & 0.471 & \textbf{0.227} & \textbf{0.276} & 0.236 & \textbf{0.265} & \textbf{0.246} & \textbf{0.198} & \textbf{0.176} & \textbf{0.203} \\
adaptive-plugin   & 1.7   & 1.17  & 0.831 & \textbf{0.196} & 0.83  & 0.449 & 0.507 & 0.93 & 0.481 \\
random            & 8.99  & 4.56  & 2.19  & 1.54  & 1.7   & 1.61  & 1.46  & 0.956 & 0.987 \\
\hline
\end{tabular}%
}
\caption{Averaged MSEs for Synthetic Data.}
\label{tab:synthetic_mse}
\end{table}

\begin{table}[t!]
\centering
\resizebox{\textwidth}{!}{%
\renewcommand{\arraystretch}{1.3}
\setlength{\tabcolsep}{6pt}
\begin{tabular}{|l|ccccccccc|}
\hline
\textbf{Estimator} & 0.1 & 0.2 & 0.3 & 0.4 & 0.5 & 0.6 & 0.7 & 0.8 & 0.9 \\
\hline
active-margin     & 3.53e+03 & 0.047 & 0.087 & 12.5 & 8.38e+03 & 2.25e+06 & 1.49e+06 & 6.53e+05 & 1.43e+07 \\
active-uncertain  & 16.1     & 38.9  & 70.4  & 75.9 & 115      & 112      & 168      & 250      & 402 \\
adaptive-balance   & \textbf{0.004} & 0.002 & 0.002 & \textbf{0.001} & \textbf{0.001} & 0.001 & \textbf{0} & \textbf{0} & \textbf{0} \\
adaptive-plugin    & \textbf{0.004} & \textbf{0.001} & \textbf{0.001} & \textbf{0.001} & \textbf{0.001} & \textbf{0} & \textbf{0} & \textbf{0} & \textbf{0} \\
random             & 0.027 & 0.012 & 0.009 & 0.006 & 0.005 & 0.003 & 0.001 & 0.001 & \textbf{0} \\
\hline
\end{tabular}%
}
\caption{Averaged MSEs for RetailHero Data.}
\label{tab:retailhero_mse}
\end{table}

\begin{table}[t!]
\centering
\renewcommand{\arraystretch}{1.3}
\setlength{\tabcolsep}{6pt}
\begin{tabular}{|l|ccccccccc|}
\hline
\textbf{Estimator} & 0.1 & 0.2 & 0.3 & 0.4 & 0.5 & 0.6 & 0.7 & 0.8 & 0.9 \\
\hline
active-margin     & \textbf{0.009} & 28.5 & 4.47 & 0.501 & 0.449 & 0.044 & 0.099 & 0.412 & 0.209 \\
active-uncertain  & 0.017 & \textbf{0.009} & 0.018 & 0.008 & 0.017 & 0.018 & 0.025 & 0.023 & 0.024 \\
adaptive-balance   & 0.046 & 0.031 & \textbf{0.013} & \textbf{0.006} & \textbf{0.005} & \textbf{0.003} & \textbf{0.004} & \textbf{0.003} & 0.002 \\
adaptive-plugin    & 0.045 & 0.025 & 0.027 & 0.012 & 0.006 & 0.004 & \textbf{0.004} & 0.006 & \textbf{0.001} \\
random             & 0.113 & 0.061 & 0.037 & 0.045 & 0.014 & 0.012 & 0.011 & \textbf{0.003} & \textbf{0.001} \\
\hline
\end{tabular}
\caption{Averaged MSEs for Street Outreach Data.}
\label{tab:street_mse}
\end{table}

\subsection{Continuous Treatment Results}

Next, we evaluate our batch adaptive allocation protocol in the continuous treatment setting. The setup is similar to the binary treatment experiments where we show that our batch adaptive annotation algorithm enables reliable ATE estimation under varying labeling budgets. In these experiments, we compare our adaptive balancing weights estimator (best in the binary setting) against the estimator with random sampling. 

\paragraph{Simulated Data.} We use the data generating process described in Section 5.1 of \citet{colangelo2020double}, in which the true average dose-response $\beta_z = \Eb{Y(z)}$ at different values of $z$ is known (defined in \Cref{appendix-synthetic-exp}). We evaluate the treatments at a fixed value of $z=1.0$. 
For $K_h$, we use the Gaussian kernel and set the bin-width as $h=\sigma_Zn^{-0.2}$ by Silverman's rule of thumb, as recommended in \citet{klosin2021automatic}, where $\sigma_Z$ is the standard deviation of the treatment variable and $n$ is the sample size. Because our estimator is kernel-localized around the evaluation point, observations whose treatment values fall far from the target of interest receive zero or near-zero weight in the final dose-response estimate. 
We therefore restrict first-stage sampling to the set of kernel-relevant observations with $K_h>\epsilon >0$.

Earlier, we saw that the final balancing estimator and $RZ$-plugin had the best finite-sample performance. There is not an analogous form of the plug-in estimator for the continuous setting, so we estimate a "partial" Riesz representer by following \citet{klosin2021automatic} in solving a weighted least squares problem with lasso regularization ($0.01$). In our experiments, 
we use a random forest model to estimate our outcome model on tabular data alone $\hat\mu_z(X)$.

In \Cref{fig:cts_mse_iw_plots}, the leftmost plot shows that our proposed doubly-robust estimator converges to the true average dose-response much faster and consistently across all budgets. Our estimator also benefits from smaller confidence widths on the log scale, leading to greater precision around the estimate.

\begin{figure*}[t!]
    \centering
    \includegraphics[width=\textwidth]{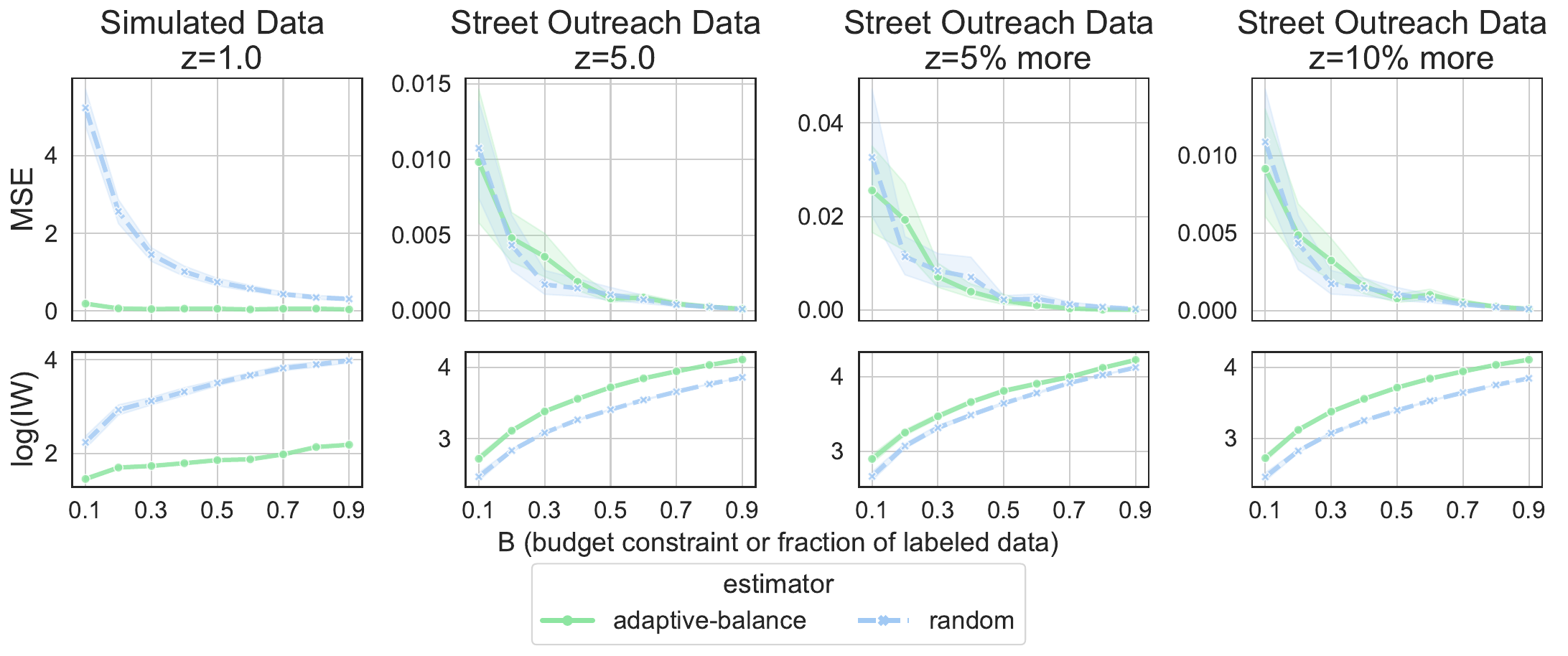}

    
    \caption{\textbf{Experiments on synthetic data (leftmost) and Street Outreach data (center and rightmost) with continuous treatments.} Results of performance measure mean squared error (top row) and $95\%$ confidence interval width on the log scale (bottom row) averaged over 20 and 100 trials (for simulated data) across budget percentages of the data. All experiments are tabular data experiments, where we used a random forest prediction model on the tabular data alone. We evaluate at different treatment values (i.e. $z=1.0$) and counterfactual policies where each unit's treatment is shifted by a percentage (i.e. $z=5\%$ more).}
    \label{fig:cts_mse_iw_plots}
\end{figure*} 

\paragraph{Street Outreach Data.} Next, we evaluate our adaptive annotation algorithm for continuous treatments on the street outreach data from our partnering nonprofit. The covariates and the outcome Y remains the same as in the binary treatment case. Here, we do not binarize the treatment Z, the number of outreach engagements within the first 6 months. 
The results are displayed in \Cref{fig:cts_mse_iw_plots} on the center and rightmost subplots. On this dataset, our approach performs similarly to random sampling. In the simulated setting, the data structure that our adaptive scheme exploits is known and well-behaved, so directing the budget towards relevant observations yields clear gains over uniform sampling. In the real street-outreach data, these quantities must be estimated from noisy, high-dimensional features that may not be informative. This leaves less signal for the adaptive allocation to exploit. As a result, the adaptive estimator's theoretical advantage is attenuated in practice, and its performance is closer to that of random sampling.

\begin{revision}
\section{Case Study: Effect of Street Outreach on Progress Towards Housing}\label{sec-casestudy}
In this section, we answer our original motivating question and investigate the causal effect of street outreach on progress towards housing. We present our initial causal effect estimates on the max housing placement outcome from before and a newly defined progress outcome obtained from a language model finetuned on human annotations. Additionally, we present experiments using our adaptive annotation procedure on the progress label.

\subsection{Background on Project Partnership}
This project is the product of a sustained, five-year research partnership between our team and Breaking Ground. Our multidisciplinary collaboration brought together experts in causal inference, data science, and social work, and has met monthly over this period to explore research directions that are both methodologically rigorous and genuinely useful to the organization. 


In New York City, homeless service organizations have deployed hundreds of outreach workers with precisely this mandate: to engage the hardest-to-reach individuals and connect them to housing and services. Caseworkers canvass communities, build sustained relationships with unsheltered individuals, and document every interaction through detailed case notes (see \Cref{apx-additional-background} for greater detail on the structure of casenotes and the categories used to document interactions).

The qualitative case for street outreach is strong. Research consistently finds that outreach is often the first and only point of contact for individuals who refuse traditional shelter, that 
sustained relationship-building reduces distrust and increases service uptake, and that outreach 
workers play an irreplaceable role in navigating clients through fragmented service systems. Yet rigorous causal evidence remains scarce. \citet{weare2021housing} 
offers one of the only quantitative analyses of housing outcomes, conducting a comparative cost analysis using administrative data to study outreach clients relative to the shelter population. While essential, quantifying impacts can further support the resource allocation decisions that funders and program managers face. 


\subsection{Operational motivation for valid LLM-driven causal inference}
We begin by outlining, based on the street outreach operational context, why new methodology for extracting outcome information from text can improve impact evaluation. 

Currently, impact reports of the nonprofit report completed housing placements. Although improving housing placements is a guiding ``North Star'' for the organization, crucially, the nonprofit's service provision has more immediate causal impacts on completing a housing application, whereas whether or not a client receives a housing placement after \textit{submitting} an application is by and large out of the nonprofit and client's control. External factors such as vastly limited housing supply and another agency's matching impact whether or not a client receives a placement. Including the variability of housing placements in the impact evaluation of the nonprofit increases estimation variance. 

However, this operational structure also exposes opportunities for extracting intermediate outcomes from casenote data. It could be reasonable to assume that the nonprofit's outreach efforts don't have a direct causal effect on housing placement after the completion of a housing application, potentially conditional on additional private information of the client such as suitability for interview. We introduce a formal model, where we let $Y^{\mathrm{app}}$ denote completed housing applications and $Y^{\mathrm{place}}$ denote completed placements. We model this situation with the following assumptions:

\begin{assumption}[Nonprofit's outreach influence stops at completed housing applications]\label{asn-outreachinfluence}
For each treatment level \(z\in\{0,1\}\), $\mathbb E\!\left[
Y^{\mathrm{place}}(z)
\mid
Y^{\mathrm{app}}(z)=0,X=x
\right]
=0$,
and there exists a function \(q(x)\in[0,1]\), invariant to \(z\), such that $\mathbb E\!\left[
Y^{\mathrm{place}}(z)
\mid
Y^{\mathrm{app}}(z)=1,X=x
\right]
=q(x)$.
\end{assumption}

The first condition says that packet completion is necessary for placement. The
second says that, conditional on a completed packet and observed client/context
information, downstream conversion to placement is stable with respect to
outreach intensity. The conversion of housing applications and placement is governed by $q(x)$ which is crucially invariant in treatment, $z$. 

If \Cref{asn-outreachinfluence} is true, and if $q(x)$ were known or could be estimated from historical data, then the placement-conversion ($q(x)$)-weighted causal effect of  outreach on $Y^{\mathrm{app}}$ outcomes is \textit{equivalent} to the average treatment effect of nonprofit outreach on housing placement.

\begin{proposition}[Conversion-weighted packet completion has the placement ATE]\label{prop-surrogate}
Under \Cref{asn-outreachinfluence},
\begin{align*}
&\mathbb E\!\left[
q(X)\{Y^{\mathrm{app}}(1)-Y^{\mathrm{app}}(0)\}
\right]
=
\mathbb E\!\left[
Y^{\mathrm{place}}(1)-Y^{\mathrm{place}}(0)
\right].\\
& \operatorname{Var}\!\left(
q(X)Y^{\mathrm{app}}
\mid X,Z
\right)
\le
\operatorname{Var}\!\left(
Y^{\mathrm{place}}
\mid X,Z
\right).
\end{align*}
\end{proposition}

The conversion-weighted packet outcome has the same expected placement effect
as terminal placement under the gateway assumption, but removes downstream
placement noise.  However, the former ATE, nonprofit outreach on intermediate outcomes, has lower estimation variance by omitting the randomness of the noisy placement process. Therefore, impact evaluation on intermediate ``gateway'' outcomes can estimate downstream impact more precisely than realized placements. 

This formal model outlines conditions where causal inference on intermediate gateway outcomes can be informative of longer-term, higher-variance consequential outcomes, and motivates our efforts to extract operational application progress from text outcomes. Although completed housing applications could be recorded in structured data fields, it can also take a long time ($6$ months) to complete a housing application. Extracting progress signals towards completing a housing application from text data can extract meaningful impact signals across a broader client base, rather than sparse downstream reward signals of completed housing applications or housing placement changes. 

Even though the progress outcomes (such as expressing interest in housing, completing appointments for application completion) may not satisfy \Cref{asn-outreachinfluence} precisely, measuring the average treatment effect on intermediate outcomes within the nonprofit's sphere of causal inference is of independent interest, and can be a higher-precision signal for meaningful downstream impact.

\subsection{Defining a Progress Schema}
Working closely with outreach staff, we defined a new target outcome variable, \textit{progress towards a housing application}. This variable was constructed as a scale ranging from $-1$ to 4, designed to capture the full spectrum of client engagement and movement through the housing placement process. Each casenote records an outreach interaction; we score the casenote for progress. \cref{tab:progress_scale} describes the full progress schema, ranging from simple conversations, to meeting client needs, discussing plans, and administrative prequisites to the housing application, followed by completed milestones like appointments and improved housing placement. The granularity of this scale was motivated by the realities of outreach work with individuals experiencing homelessness, where meaningful progress is rarely linear and often incremental. Traditional binary outcome measures, such as whether a client was housed, fail to capture the small but significant steps that precede a successful placement and may take months or years to materialize. The scale was developed iteratively in consultation with outreach staff to ensure that each label is grounded in the practical knowledge of practitioners working directly with this population. 

\begin{table}[ht!]
    \centering
    \begingroup
    \small
    \setlength{\tabcolsep}{4pt}
    \renewcommand{\arraystretch}{0.95}
    \begin{tabular}{%
        p{0.14\textwidth}%
        p{0.76\textwidth}%
    }
        \hline
        \textbf{Progress Score} & \textbf{Description} \\
        \hline
        \multicolumn{2}{l}{\textbf{Regression}} \\
        $-1$ & Record of challenges or regressed on the progress later \\
        \multicolumn{2}{l}{\textbf{No Progress}} \\
        $0$ & No progress made \\
        $0.25$ & Client attempted call/text, didn't go through \\
        \multicolumn{2}{l}{\textbf{Engagement}} \\
        $0.9375$  & Conversation with outreach worker \\
         & Client interest in conversation, willingness to provide team with more information \\
         & Talk about appointment (fine-grained) in neutral way, placement request, disclosure \\
        & Meeting a client need: completing a transport, problem solving \\
        \multicolumn{2}{l}{\textbf{Planning}} \\
        $2$    & Talk about plans to completing appointment (i.e. reminders, travel plans, status update), plans for rehab or quit drinking, ready for placement \\
        $2.5$  & Signing documents, completing paperwork, progress towards client goals \\
        \multicolumn{2}{l}{\textbf{Completion \& Outcome}} \\
        $3$    & Record of completing appointment \\
        $3.5$  & Improved condition \\
        $4$    & New placement \\
        \hline
    \end{tabular}
    \endgroup
    \caption{Progress schema and descriptions used to measure client advancement through the housing process.}
    \label{tab:progress_scale}
\end{table}

Once we developed and tested this annotation protocol, we recruited 12 Master's of Social Work students to label a budget-limited portion of the case note data; we reconciled their annotations to obtain ground truth. As a proof of concept, instead, we finetune a language model to these progress annotations and use the finetuned model to classify the remainder of the notes. 
Table~\ref{tab:gemma3-july-qwen-summary} reports model-level validation metrics.
RMSE and bias are computed on the original numeric
labels. Our preferred schema merges some labels together, and we report both exact and merged accuracies, within $0.5$ accuracy, as well as metrics that account for uncommon labels: quadratic-weighted Kappa (QWK) and weighted F1 score. 

  \begin{table}[h!]
      \centering
      \small
      \renewcommand{\arraystretch}{1.2}
      \setlength{\tabcolsep}{4pt}
    \begin{revision}
      \begin{tabular}{%
          p{0.19\textwidth}%
          ccccccc
      }
          \hline
          \textbf{Model} & \textbf{Exact} & \textbf{Merged}
  & \textbf{\(\leq 0.5\)}
          & \textbf{QWK} & \textbf{Wt. F1} & \textbf{RMSE} &
  \textbf{Bias} \\
          \hline
          Gemma 3 12B base
            & 0.5145 & 0.6834 & 0.8021 & 0.5951 & 0.6984 &
  0.7319 & -0.2863 \\
          Gemma 3 12B FT
            & \textbf{0.6852} & \textbf{0.8259} &
  \textbf{0.8492} & \textbf{0.7899}
            & \textbf{0.8295} & \textbf{0.5373} &
  \textbf{-0.0278} \\
          Gemma 4 31B FT
            & 0.6135 & 0.7441 & \textbf{0.8378} & 0.6744 &
  0.7863 & 0.7261 & -0.1466 \\
          Qwen3-235B ZS
            & \textbf{0.6517} & 0.7704 & \textbf{0.8391} &
  0.6380 & 0.7800 & 0.7072 & -0.2460 \\
          \hline
      \end{tabular}
      \end{revision}
      \caption{Model-level validation metrics on an $n=379$
  ground-truth
      validation split. Merged accuracy and weighted F1 use
  a merged schema.
      Exact accuracy, \(\leq 0.5\) (near-miss accuracy within $0.5$), QWK, RMSE, and bias are
  computed on the
      original numeric labels. \textbf{Bold} marks best and statistically indistinguishable from best
  (paired bootstrap,
      95\% CI).}
      \label{tab:gemma3-july-qwen-summary}
  \end{table}
  

The fine-tuned Gemma 3 12B model improves strict merged accuracy by \(0.1451\)
absolute points relative to the matched Gemma 3 12B base endpoint. The larger
Gemma 4 31B fine-tune does not improve on the 12B fine-tune, likely due to the small fine-tuning dataset.


\subsection{Results}

\paragraph{Outcome definition}

We operationalize progress by considering the \textit{maximum progress achieved in the 1.5 years past an initial outreach treatment period}. To measure this, we classify each casenote for progress to construct this client-level measures. There are other measures of progress as well, which we leave for future work. Progress towards a housing application is ultimately cumulative due to various appointment and other requirements. Further, clients who complete some appointments at some point need not complete the same ones later on - hence focusing on the maximum progress achieved thus far. 

The key limitation is volume of data: hundreds of thousands of casenotes for our cohort of several hundred clients. As a proof of concept for our annotation procedure, we treat the progress labels from the Gemma 3 12B finetuned model as our ground-truth outcome data $Y$ and we treat the progress labels from the Gemma 3 12B base model as $\tilde{Y}$, which we use to estimate the outcome model $\hat{\mu}_z(X,\tilde{Y})$. This mimics the setting that we are most interested in, where the finetuned model's labels are the expensive ground truth and the base model labels are the cheap but imperfect proxy labels. Because we cannot evaluate our method without ground-truthing the entire dataset, we use the fine-tuned LLM as an approximation.

\begin{table}[h]
\centering
\caption{Full-data effect estimates on the 0--4 progress scale. Bold:
90\% CI excludes zero.}
\label{tab:fulldata}
\begin{revision}
\begin{tabular}{lcccc}
\toprule
& \multicolumn{2}{c}{Q3 $-$ Q1 ($n=485$)} & \multicolumn{2}{c}{Q3 $-$ Q2 ($n=699$)} \\
\cmidrule(lr){2-3} \cmidrule(lr){4-5}
Estimator & Estimate (SE) & 90\% CI & Estimate (SE) & 90\% CI \\
\midrule
LLM-adjusted & $\mathbf{0.295}$ $(0.072)$ & $(0.177,\ 0.413)$ & $0.049$ $(0.068)$ & $(-0.063,\ 0.160)$ \\
LLM-selection-adjusted & $0.178$ $(0.179)$ & $(-0.117,\ 0.473)$ & $0.008$ $(0.097)$ & $(-0.152,\ 0.168)$ \\
Balancing     & $\mathbf{0.284}$ $(0.090)$ & $(0.137,\ 0.432)$ & $\mathbf{0.114}$ $(0.065)$ & $(0.006,\ 0.221)$ \\
\bottomrule
\end{tabular}
\end{revision}
\end{table}

\begin{figure}[h]
\centering
\includegraphics[width=\textwidth]{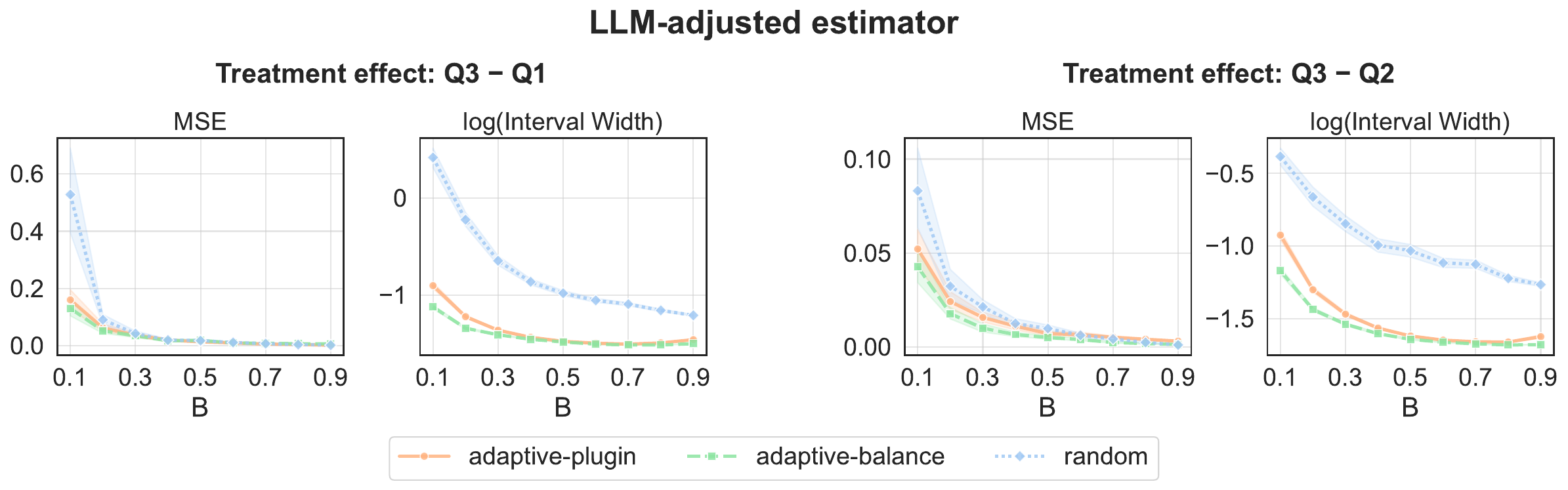}
\includegraphics[width=\textwidth]{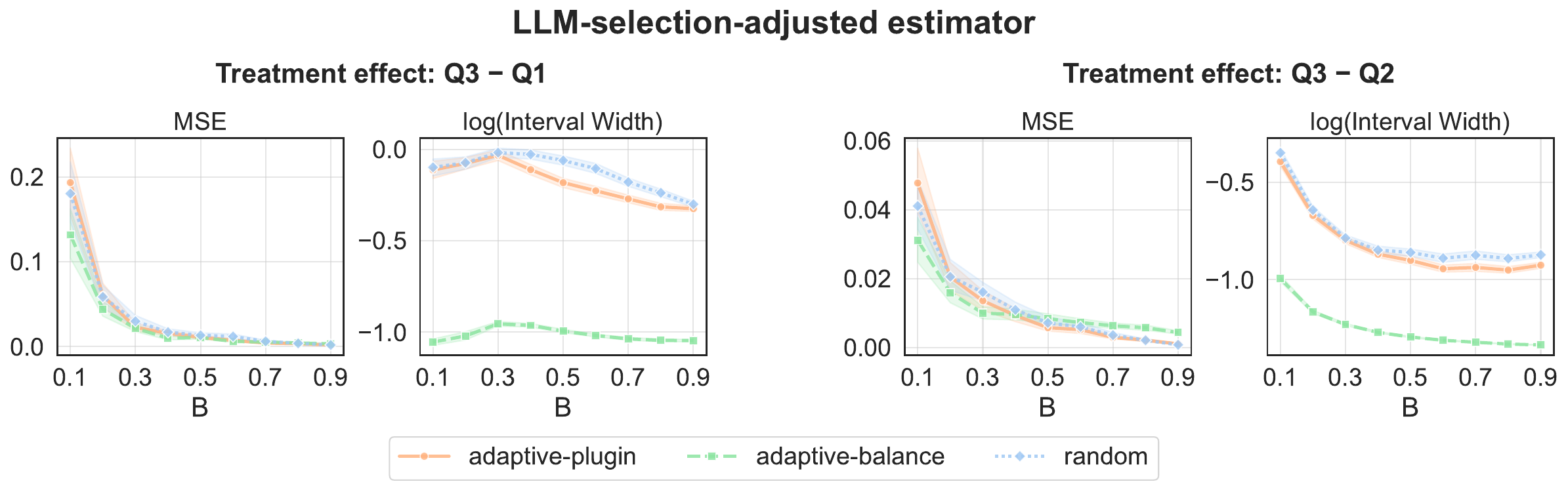}
\caption{MSE and log interval width vs annotation
budget $B$, LLm-adjusted and LLM-selection-adjusted estimators, for the Q1-vs-Q3 contrast (left pair) and the Q2-vs-Q3 contrast
(right pair); 200 trials per budget. Each arm's MSE is computed against its
own full-data benchmark (balance vs $0.284$ / $0.114$; plug-in and random vs
$0.178$ / $0.007$; see note in text). Coverage panels in
Appendix~\ref{apx:pgs-full}.}
\label{fig:main-row}
\end{figure}

\paragraph{Experimental setup} We first run our adaptive annotation procedure on the progress labels. Treatment is defined in the same way as before. We study incremental effects, between Q3 vs Q2 and Q3 vs Q1. Q3 represents clients with 16--226 outreach engagements ($n=387$ clients), Q2 represents clients with 3--15 engagements ($n=312$), and Q1 is 1--2 engagements ($n=98$). We define the outcome of interest as the maximum progress label achieved during the post treatment 1.5 year period. For each budget level, we pretend to observe subsets of the data and compare these partial data effect estimates against the fully observed benchmark.

In this specific progress setting, where we are decoding $Y$ from $\tilde{Y}$, the annotation probabilities that we estimate in our adaptive procedure are also now dependent on $\tilde{Y}$. The LLM-adjusted estimator here is our original AIPW estimator in \cref{eqn-aipw-standard} where we replace $\mu_z(X)$ with $\mu_z(Z,X,\tilde{Y}) = \mu_z(X) + \mathbb{I}[{Z=z}]\{\mu_z^{\tilde{Y}}(X,\tilde{Y}) - \mu_z(X) \}$, while the LLM-selection-adjusted estimator is represented by \cref{eqn-post-treatment-estimator} and introduces the propensity reweighting to account for selection of factual $\tilde{Y}(Z)$. A more detailed discussion can be found in the earlier \Cref{setting-yytilde} section. Across all of the estimators, $\hat{\mu}_z(X)$ is a random forest model trained on $X$ alone and $\hat{\mu}_z^{\tilde{Y}}(X,\tilde{Y})$ is a random forest model with $(X, \tilde{Y})$ as features and performs similarly as direct ensembling with $\tilde{Y}$. For the propensity and annotation probability model, we use a logistic regression for the plugin variant and a random forest based model to learn the balancing weights. We use the same hyperparameters as before. These results suggest that our procedure performs better at causal effect estimation even when progress outcome labels are missing. We run $N=200$ Monte Carlo simulations of the budget-sampling procedure over the (small) dataset.

First, we discuss our full-data estimates, which are of independent substantive interest, before discussing how our adaptive method performs in approaching the full-data estimates in small samples. \Cref{tab:fulldata} contains the estimates, which we report across different estimator variants as a bracketing exercise. We compare two different treatment effects: a smaller change from quantile 2 to quantile 3, and a more intensive change from quantile 1 of outreach to quantile 3. While most estimates of the smaller Q2 to Q3 effect are small, LLM-unadjusted and balancing estimates of the Q1 to Q3 are statistically significant at the 90\% confidence level at around 0.28-0.3, which is around half of our smallest progress schema resolution (half a progress-step). The LLM-selected-adjusted estimate includes additional inverse treatment propensity terms that amplify estimation variance; but we can bracket the estimate within [0.18,0.3], with some dependence on how one weights. Our results indicate that marginal increases in outreach for those who are already being outreached frequently may not improve max progress, as much as larger changes in outreach for those with less outreach to begin with. 

To put that in context of absolute levels, LLM-adjusted estimates of $\E[Y(Q1)], \E[Y(Q2)], \E[Y(Q3)]$ are $1.9,2.2,2.25$, respectively, indicating the dose-response curve appears concave, with diminishing marginal returns. That is, increasing outreach brings average max-progress from just before $2=$ making plans and setting goals, to just past that step. One hypothesis for diminishing returns of outreach is that outreach is a dyadic relationship, and the later stages of completing a housing application rely heavily on client follow-through, persistence and readiness. Moving past $2$ typically that clients attends appointment, follow-ups and so on - which ultimately is also out of outreach's control. Such analysis can support targeting limited intensive outreach resources. We run a similar analysis on max progress labels in the continuous treatment setting in \Cref{appendix:continuous-progress}.  

Next, we discuss our methods in this setting. For the LLM-adjusted estimator, we see our methods improve further in finite samples and have smaller interval width. In this setting, we don't know the ground-truth, so we estimate coverage of the full-data estimate as a measure of resulting confidence interval stability. At very small budgets, this leads to some undercoverage. For the LLM-selection-adjusted estimator, designed to achieve greater efficiency, we see that it results in relatively smaller error at smaller budgets, so the improvements from our estimator are attenuated. Although the intervals undercover the full-data estimate for small budgets $B\in[0.1,0.3]$, they  achieve coverage for $B\geq 0.4$ at a smaller interval width. Overall, these results highlight the applicability to data-annotation and AI settings, which enable new inferential questions from unstructured data. 

\end{revision}

\section{Conclusion, limitations, and future work. } We have introduced a procedure for optimal causal annotations.
Our key managerial insights include the paramount importance of grounding LLM-augmented inferences in ground-truth. One should always ``look at the data'', and our analysis demonstrates that out-of-the-box LLMs can miss important context. Our method extracts the most inferential benefit from limtied ground truth. 

Limitations include assuming that annotations reveal ground truth, since annotators might disagree. Our causal analysis on the street outreach data is vulnerable to violations of assumptions such as unconfoundedness. 

This work has delivered important insights for Breaking Ground and the broader homelessness services sector. Outcomes are most typically distal 2-year housing outcomes which are ultimately beyond a single nonprofit's sphere of influence. Beyond the headline estimate, the project has transformed Breaking Ground's data infrastructure and our team continues to work with them to study other housing outcomes and build out other use cases such as LLM generated summaries of client histories. More broadly, this work demonstrates that rigorous causal impact evaluation is achievable even for under-resourced nonprofits operating with unlabeled administrative text data. Our finer analysis yields operational insights: for example, finding diminishing returns in outreach intensity suggests that program expansion at the extensive margin may be more resource-efficient than at the intensive margin.

\bibliographystyle{informs2014}
\bibliography{activeannotation,older-submissions/for-aaai-aigov-workshop/aaai25}

\begingroup \parindent 0pt \parskip 0.0ex \def\enotesize{\normalsize} \theendnotes \endgroup

%
%
%
\clearpage

\begin{APPENDICES}

\section{Notation}
\Cref{tab:notation} summarizes the notation used throughout the paper.
\begin{table}[h]
    \centering
    \resizebox{\textwidth}{!}{
    \renewcommand{\arraystretch}{1.3}
    
    \begin{tabular}{ll}
        \hline
        \( Y_i \) & Ground truth outcomes, observed when label is provided by experts \\ 
        \( \tilde{Y}_i \) & Complex embedded outcomes, such as raw text \\ 
        
        \( X_i \) & Covariates included in estimation \\ 
        \( Z_i\) & Treatment assignment indicator \\ 
        \( R_i \) & Missingness indicator, indicates whether $i$ is expertly labeled\\ 
        \( e_z(X_i) \) & Propensity score, probability of being assigned treatment $Z=z$\\
        \( \pi(Z_i,X_i)\) & Annotation probability, probability of sampling unit $i$ for expert annotation \\
        \( f_z(X_i,\tilde{Y}_i) \) & Estimated function of covariates and complex embedded outcomes, e.g. zero-shot LLM prediction from raw text \\  
        \(  \hat\mu_z(X_i,f(\tilde{Y}_i))\) & Estimated model predicting $Y$ as function of $(X_i,f(\tilde{Y}_i))$\\ 
        \hline
    \end{tabular}}
    \label{tab:notation}
\end{table}

\section{Additional discussion on background}\label{apx-additional-background}
This section provides additional background on the casenotes data from our street outreach application. \Cref{fig:BG-codes} illustrates the structure of casenotes and the high-level categories/codes used to document client interactions.
\begin{figure}
    \centering
    \includegraphics[width=\linewidth]{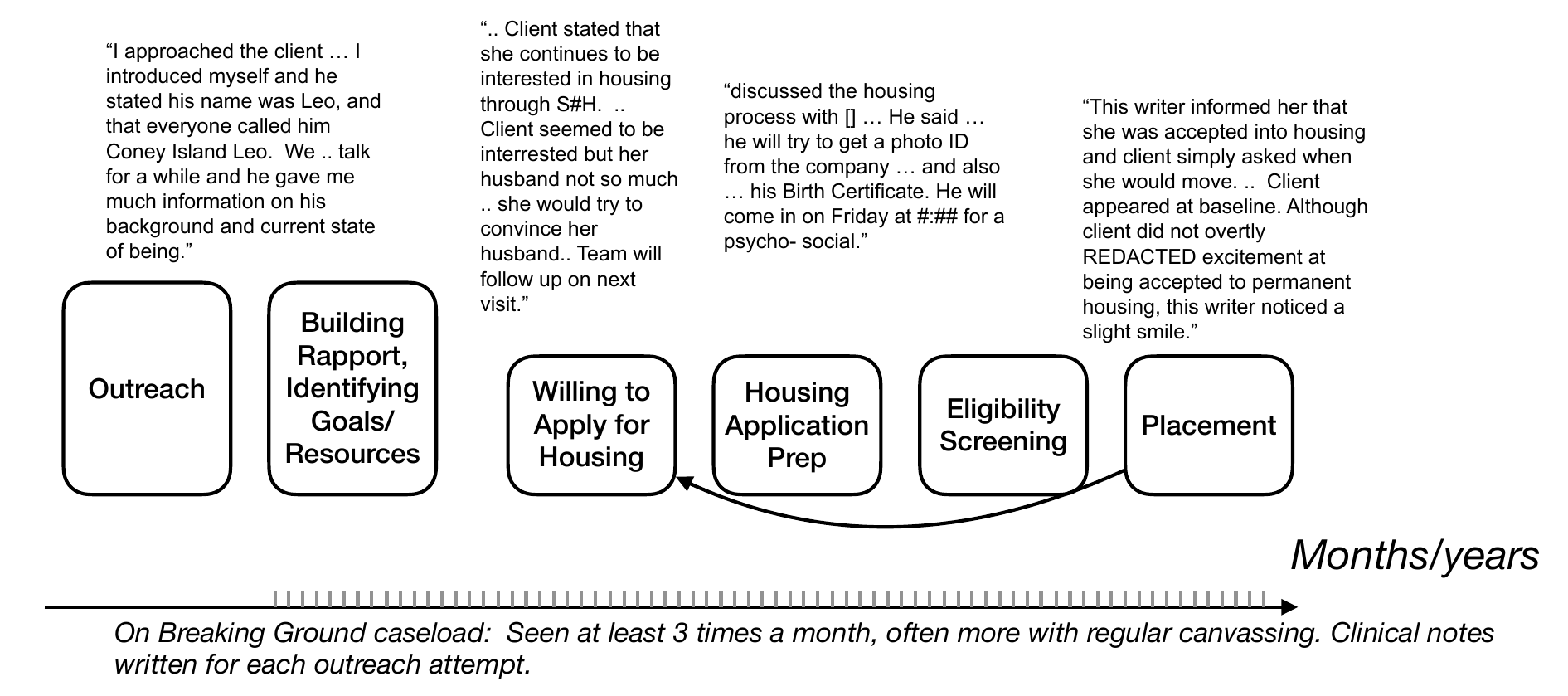}
    \caption{Greater detail on casenotes and high-level categories/codes}
    \label{fig:BG-codes}
\end{figure}

\section{Additional discussion on related work}\label{apx-additional-related-work}

\paragraph{Additional discussion on surrogate estimation}
In much of the surrogate literature, surrogates measure an outcome that is impossible to measure at the time of analysis. The canonical example in \cite{athey2019surrogate} studies the long-term intervention effects of job training on lifetime earnings, by using only short-term outcomes (surrogates) such as yearly earnings. In this regime, the ground truth cannot be obtained at the time of analysis. In this paper, we focus a different regime where obtaining the ground truth from expert data annotators is feasible but budget-binding. 

We leverage the fact that we can design sampling probabilities of outcome observations (ground-truth annotations) or patterns of missingness for doubly-robust estimation, aligning with some methods in the surrogate outcomes and data combination literature \citep{yang2020combining,kallus2024role}. But we treat the underlying setting as a single unconfounded dataset with missingness. 
The different setting of proximal causal inference \citep{tchetgen2024introduction,cui2024semiparametric} seeks proxy outcomes/treatments that are informative of unobserved confounders; we assume unconfoundedness holds. Recently, \citep{chen2024proximal} study the ``design-based supervised learning" perspective of \citep{naoki2023dsl} specifically for proxies for unobserved confounding.

\paragraph{Additional discussion on more adaptive allocation methods beyond batch.}
We outline how our approach is a good fit for our motivating data annotation setting. Full-adaptivity is less relevant in our setting with ground-truth annotation from human experts, due to distributed-computing-type issues with random times of annotation completion. But standard tools such as the martingale CLT can be applied to extend our theoretical results to full adaptivity. Additionally, many recent works primarily focus on the different problem of treatment allocation for ATE estimation. In-sample regret is less relevant for our setting of data annotation, which is a pure-exploration problem.

\paragraph{Optimizing asymptotic variance of the ATE vs. active learning.} An extensive literature in machine learning studies where to sample data to improve machine learning predictors, in the subfield of active learning. The biggest difference is that we target functional estimation, aka improving estimation and inference on the average treatment effect, rather than improving estimation of the black-box nuisance predictors, so our approach is complementary to other approaches for active learning. Approaches for active learning with nonparametric regression include \citet{zhu2022active,chaudhuri2017active}. Active learning generally requires additional structural conditions, such as margin or low-noise conditions, in order to show improvements. Our work highlights optimality leveraging the structure of our final treatment effect inferential goal. 

\paragraph{Other works on causal inference and active learning for heterogeneous treatment effect estimation}
Some papers combine active learning and causal inference, but they primarily focus on estimating the conditional average treatment effect, or CATE = $E[Y(1) - Y(0)\mid X].$ Most of these papers consider estimation via the difference of two regression functions, i.e. CATE estimators that look like $\mu_1(X) - \mu_0(X)$, and therefore focus on active learning for regression methods in general, with a twist of learning the two treated/control regression functions. \citep{jesson2021causal} adapts Bayesian active learning for deep models, but modifies them to avoid sampling in non-overlap regions. \citep{sundin2019active} focuses on sampling \textit{counterfactual} outcome information with a best-arm identification objective (type-S error, to identify the correct sign of treatment effect). While these earlier papers also aim to reveal outcome information when treatment is already assigned, they primarily focus on reducing regression estimation error of an \textit{inefficient/non-doubly-robust} estimator for the CATE. We instead focus on estimating the ATE, and optimizing the asymptotic variance of \textit{semiparametrically efficient} estimation of the averaged ATE functional. 

\paragraph{Relationship to causal inference and NLP}
There is a large and rapidly growing literature on causal inference with text data \citep{egami2022make,sridhar2022causal,veitch2020adapting}. Throughout, we have deliberately used the terminology of measurement error to characterize our approach: that text measures outcomes of interest. \citep{dhawanend} also adopt this stance towards text and note that it differs from prior works on causal inference and NLP, which focuses on questions of substantive interest related to the text itself. 

Although we can define a potential outcome $\tilde{Y}(Z)$, we are generally uninterested in causal inference in the ambient high-dimensional space of $\tilde{Y}(Z)$ itself - corresponding to, in our examples, the effect of the presence of a tumor on the pixel image, the effect of street outreach on the linguistic characteristics of casenotes written for documentation, etc --- $\tilde{Y}(Z)$ is relevant to causal estimation insofar as it is informative of latent outcomes $Y(Z)$.

This is consistent with viewing certain types of NLP tasks as ``anti-causal learning" \citep{scholkopf2012causal}, wherein outcomes cause measurements thereof, in analogy to anti-causal learning in supervised classification where a label of ``cat" or ``dog" causes the classification covariates (e.g. image) \citep{jin2021causal}. Analogously, we view the underlying ground-truth outcomes $Y$ as causing the measurement thereof, $\tilde{Y}$. 

\section{Additional discussion on method}

\subsection{Extensions that permit shared representations or embeddings across treatment arms}

A further way to incorporate annotation or decoding $Y$ from $\tilde{Y}$ is with a common practice of learning embeddings on all outcome data to decode $Y$ from $\tilde{Y}$. We need an additional assumption: an exclusion restriction that the direct causal effect of treatment passes through the ground truth $Y$ alone. Similar assumptions are in the measurement error literature \citep{di2019measurmenterror}. For example, in a medical setting, treatment may shrink a tumor (changing $Y$), which is recorded in clinical notes or imaging data $\tilde{Y}$. But the treatment does not directly affect \textit{how} text or images are \textit{recorded}. This prevents collider bias, and is testable after the first batch of data. 
\begin{assumption}[Complex embedded outcomes: exclusion restriction ]\label{asn-exclusionrestriction} 
$Z \perp \tilde{Y} \mid X,Y$
\end{assumption}
The assumption justifies pooling labeled data from all treatments to learn a shared encoder, for example a convolutional image encoder of satellite data. However, this assumption is not required for our use of zero-shot or fine-tuned LLMs for casenote prediction when we use LLMs within ensemble models that calibrate LLM predictions within arms. 

\citep{rambachan2024program} studies challenges with ``remotely sensed outcomes'' when transporting models learned between observational and randomized datasets. Crucially, in our setting, we retroactively consider annotation within the same dataset. The exclusion restriction above therefore corresponds to a checkable assumption on the measurement mechanism. If it does not hold, architectural changes in how the representation is incorporated can restore validity. The above assumption is more difficult to satisfy in multiple-dataset settings with potential unobserved confounders in observational data. Within a single dataset, as in our case, it is a milder restriction. 

\subsection{Treatment$-z$-specific budgets $B_z$ \label{appendix:additional-results}}

We also consider a setting with different a priori fixed budgets within each treatment group, where $$\text{sampling budget proportion } B_z \in [0,1]$$ is the max percentage of the treated group $Z=z$ that can be annotated.  
Given that we are trying to choose the $\pi$ that minimizes this variance bound, we only need to focus on the terms that depend on $\pi$ and can drop the rest. Supposing oracle knowledge of propensities and outcome models, the optimization problem, for each $z \in \{0,1\}$ is: 
\begin{align}
  \min_{0 < \pi(z,x) \leq 1, \forall z,x}  & \left\{ \Eb{ \frac{\sigma^2_z(X)}{e_z(X)  \pi(z,X)}  }  
  \colon 
\Eb{\pi(z,X) \mid Z=z} \leq B_z, z\in \{0,1\} \right\} \tag{z-budget}
\end{align}


\begin{theorem}[]\label{thm-z-budget-solution}
The solution to the within-$z$-budget problem is: $$
\pi^*(z,X) = \frac{\sqrt{\nicefrac{\sigma^2_z(X)}{  e_z^2(X)}}}{\Eb{ \sqrt{\nicefrac{\sigma^2_z(X)}{  e_z^2(X)}}\mid Z=z } } \cdot B_z 
$$. 
\end{theorem}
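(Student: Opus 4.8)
The plan is to solve the constrained optimization problem by Lagrangian duality, exactly mirroring the argument that proves Theorem \ref{thm-global-budget-solution}, but now with a separate multiplier for each treatment arm. Since the objective $\sum_z \E[\sigma_z^2(X)/(e_z(X)\pi(z,X))]$ is separable across $z$ and each constraint $\E[\pi(z,X)\mid Z=z]\le B_z$ involves only the arm-$z$ decision variable, the whole problem decouples into two independent scalar-per-$x$ problems, one for each $z\in\{0,1\}$. So I would fix $z$ and work with a single arm throughout.

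First I would rewrite the arm-$z$ subproblem so that all expectations are with respect to the same measure. The objective term $\E[\sigma_z^2(X)/(e_z(X)\pi(z,X))]$ is an unconditional expectation over $X$, whereas the constraint is conditional on $Z=z$; using $P(Z=z\mid X)=e_z(X)$ and the law of iterated expectations, I would convert the objective to a conditional expectation given $Z=z$ by writing $\E[h(X)]=\E[\,\E[h(X)\mid Z=z,\,\text{reweighted}]\,]$ — more precisely, $\E[\sigma_z^2(X)/(e_z(X)\pi(z,X))] = \E\big[\, \sigma_z^2(X)/(e_z^2(X)\pi(z,X)) \,\big|\, Z=z\big]\cdot P(Z=z)$ after multiplying and dividing by $e_z(X)$ and absorbing the density ratio. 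This is the step where the $e_z^2(X)$ in the denominator of the claimed solution arises, and it is the one place to be careful with the change of measure; everything downstream is then a clean one-measure problem.

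Next I would form the Lagrangian $L(\pi,\lambda_z) = \E\big[\sigma_z^2(X)/(e_z^2(X)\pi(z,X)) \mid Z=z\big] + \lambda_z\big(\E[\pi(z,X)\mid Z=z] - B_z\big)$, take the pointwise first-order condition in $\pi(z,x)$ for each $x$ (ignoring the upper bound $\pi\le 1$, which is slack in the budget-binding regime $B_z\ll 1$ that the paper emphasizes), and get $-\sigma_z^2(x)/(e_z^2(x)\pi(z,x)^2) + \lambda_z = 0$, i.e. $\pi(z,x) \propto \sqrt{\sigma_z^2(x)/e_z^2(x)}$. Then I would pin down the proportionality constant by plugging back into the binding constraint $\E[\pi(z,X)\mid Z=z]=B_z$, which immediately yields $\pi^*(z,x) = B_z\,\sqrt{\sigma_z^2(x)/e_z^2(x)} \,/\, \E[\sqrt{\sigma_z^2(X)/e_z^2(X)}\mid Z=z]$, the claimed form. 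Convexity of $\pi\mapsto 1/\pi$ on $(0,1]$ makes the objective convex and the constraint linear, so the KKT point is the global optimum; I would state this to justify that the stationary point is actually the minimizer.

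The main obstacle I anticipate is purely bookkeeping rather than conceptual: getting the change of measure between the unconditional objective and the conditional constraint exactly right, so that the $e_z^2(X)$ (not $e_z(X)$) appears inside the square roots in both numerator and normalizer — this is the only substantive difference from Theorem \ref{thm-global-budget-solution}, where the single global constraint $\E[\pi(Z,X)]\le B$ is itself unconditional and so no reweighting is needed and only $e_z(X)$ appears. A secondary point worth a remark is feasibility and the inactive-constraint assumption: the derivation assumes $\pi^*(z,x)\le 1$ for all $x$, which holds when $B_z$ is small enough relative to the dispersion of $\sqrt{\sigma_z^2(X)/e_z^2(X)}$; if that fails one would need a capped (water-filling) solution, but under the paper's stated budget-binding regime this does not arise, and I would note this explicitly rather than treat the general case.
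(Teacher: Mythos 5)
Your proposal is correct and follows essentially the same route as the paper: a Lagrangian/KKT argument with a per-arm multiplier, a pointwise first-order condition yielding $\pi^*(z,x)\propto\sqrt{\sigma_z^2(x)/e_z^2(x)}$, and normalization against the binding budget constraint. The only organizational difference is that you change measure to condition on $Z=z$ before differentiating, whereas the paper differentiates in the unconditional measure and lets the density ratio $p(x\mid Z=z)=e_z(x)p(x)/p(Z=z)$ produce the same $e_z^2(x)$ in the denominator; your explicit convexity remark and interior-solution caveat are welcome additions but not a different proof.
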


\section{Proofs \label{appendix:proofs}}


\subsection{Results on method motivation}

\proof{Proof of \Cref{prop-surrogate}.}\label{proof:prop-surrogate}
For each \(z\in\{0,1\}\), since \(Y^{\mathrm{app}}(z)\in\{0,1\}\),
\Cref{asn-outreachinfluence} and iterated expectations give
\[
\E[Y^{\mathrm{place}}(z)\mid X]
=
\E\!\left[
\E[Y^{\mathrm{place}}(z)\mid Y^{\mathrm{app}}(z),X]
\mid X
\right]
=
q(X)\E[Y^{\mathrm{app}}(z)\mid X].
\]
Taking expectations and differencing \(z=1\) and \(z=0\) proves the first claim.
By SUTVA/consistency and \Cref{asn-tx-ignorability,asn-outreachinfluence},
\(\E[Y^{\mathrm{place}}\mid Y^{\mathrm{app}},X,Z]
=q(X)Y^{\mathrm{app}}\); hence
\[
\operatorname{Var}(Y^{\mathrm{place}}\mid X,Z)
=
\E\!\left[
\operatorname{Var}(Y^{\mathrm{place}}\mid Y^{\mathrm{app}},X,Z)
\mid X,Z
\right]
+
\operatorname{Var}(q(X)Y^{\mathrm{app}}\mid X,Z)
\geq
\operatorname{Var}(q(X)Y^{\mathrm{app}}\mid X,Z).
\]
\endproof

\subsection{Optimal annotation probability analysis}

\proof{Proof of \Cref{prop-avar-ate}.}\label{proof:prop1}
We simplify the expression for the asymptotic variance of the ATE with missing outcomes to isolate the components affected by the data annotation probability. 

First the variance of the ATE defined in terms of the efficient influence function $\psi_z$ for $z\in\{0,1\}$ is
\begin{align*}
    \mathrm{Var}[\psi_1 - \psi_{0} ] &= \mathrm{Var}\bigg[\frac{\mathbf{1}[Z=1] \cdot R\cdot [Y-\mu_1(X)]}{e_1(X) \cdot \pi(1,X)} + \mu_1(X) - \frac{\mathbf{1}[Z=0] \cdot R\cdot [Y-\mu_{0}(X)]}{e_{0}(X) \cdot \pi(0,X)} + \mu_{0}(X) \bigg] \\
    &= \underbrace{\mathrm{Var}\bigg[\frac{\mathbf{1}[Z=1] \cdot R\cdot [Y-\mu_1(X)]}{e_1(X) \cdot \pi(1,X)}  + \mu_1(X) \bigg]}_{V_1} + \underbrace{\mathrm{Var}\bigg[\frac{\mathbf{1}[Z=0] \cdot R\cdot [Y-\mu_{0}(X)]}{e_{0}(X) \cdot \pi(0,X)} + \mu_{0}(X) \bigg]}_{V_2} \\ &-\underbrace{2\mathrm{Cov}\bigg[ \frac{\mathbf{1}[Z=1] \cdot R\cdot [Y-\mu_1(X)]}{e_1(X) \cdot \pi(1,X)} + \mu_1(X),\frac{\mathbf{1}[Z=0] \cdot R\cdot [Y-\mu_{0}(X)]}{e_{0}(X) \cdot \pi(0,X)} + \mu_{0}(X) \bigg]}_{V_3} 
\end{align*}
\textbf{For $V_3$}: 
\begin{align*}
    &2\mathrm{Cov}\bigg[ \frac{\mathbf{1}[Z=1] \cdot R\cdot [Y-\mu_1(X)]}{e_1(X) \cdot \pi(1,X)} + \mu_1(X),\frac{\mathbf{1}[Z=0] \cdot R\cdot [Y-\mu_{0}(X)]}{e_{0}(X) \cdot \pi(0,X)} + \mu_{0}(X) \bigg]\\
    = &2 \Bigg[ \Eb{\frac{\mathbf{1}[Z=1] \cdot R}{e_1(X) \cdot \pi(1,X)} [\underbrace{\Eb{Y|Z=1,R=1,X}-\mu_1(X)}_{=0}]}\Bigg] \\
    + &\Bigg[\Eb{\mu_1(X) \cdot \frac{\mathbf{1}[Z=0] \cdot R}{e_{0}(X) \cdot \pi(0,X)} [\underbrace{\Eb{Y|Z=0,R=1,X}-\mu_{0}(X)}_{=0}] + \mu_{0}(X)}\Bigg] \\ 
    &- \Eb{\frac{\mathbf{1}[Z=1] \cdot R}{e_1(X) \cdot \pi(1,X)} [\underbrace{\Eb{Y|Z=1,R=1,X}-\mu_1(X)}_{=0}] + \mu_1(X)} \\
    &\times \Eb{\frac{\mathbf{1}[Z=0] \cdot R}{e_{0}(X) \cdot \pi(0,X)} [\underbrace{\Eb{Y|Z=0,R=1,X}-\mu_{0}(X)}_{=0}] + \mu_{0}(X)} \Bigg] \\
    = &2\bigg[ \Eb{\mu_1(X)\cdot \mu_{0}(X)} - \Eb{\mu_1(X)\mu_{0}(X)}\bigg]  
\end{align*}

\textbf{For $V_1$:}
\begin{align*}
    &\mathrm{Var}\bigg[ \frac{\mathbf{1}[Z=1] \cdot R\cdot [Y-\mu_1(X)]}{e_1(X) \cdot \pi(1,X)} + \mu_1(X)\bigg] \\
    &= \mathrm{Var}\bigg[ \frac{\mathbf{1}[Z=1] \cdot R\cdot [Y-\mu_1(X)]}{e_1(X) \cdot \pi(1,X)}\bigg]  + \mathrm{Var}[\mu_1(X)] + 2\underbrace{\mathrm{Cov}\bigg[ \frac{\mathbf{1}[Z=1] \cdot R\cdot [Y-\mu_1(X)]}{e_1(X) \cdot \pi(1,X)},\mu_1(X)\bigg]}_{=0} \\
    &= \Eb{\bigg[ \frac{\mathbf{1}[Z=1] \cdot R\cdot [Y-\mu_1(X)]}{e_1(X) \cdot \pi(1,X)}\bigg]^2} - \bigg[ \frac{\mathbf{1}[Z=1] \cdot R\cdot }{e_1(X) \cdot \pi(1,X)} [\underbrace{\Eb{Y|Z=1,R=1,X}-\mu_1(X)}_{=0}]\bigg]^2 \\
    &+ \Eb{\mu_1(X)^2} -\Eb{\mu_1(X)}^2\\ 
    &= \Eb{\bigg[ \frac{\mathbf{1}[Z=1]^2 \cdot R^2}{e^2_1(X) \cdot \pi^2(1,X)} \cdot [Y-\mu_1(X)]^2\bigg]} + \Eb{\mu_1(X)^2} -\Eb{\mu_1(X)}^2 \\
    &= \Eb{\frac{\mathbf{1}[Z=1]\cdot R}{e^2_1(X) \cdot \pi^2(1,X)} \cdot [Y-\mu_1(X)]^2} + \Eb{\mu_1(X)^2} -\Eb{\mu_1(X)}^2\\
    &=\Eb{ \frac{1}{e_1(X) \cdot \pi(1,X)} \cdot [Y-\mu_1(X)]^2} + \Eb{\mu_1(X)^2} -\Eb{\mu_1(X)}^2
\end{align*}

Lastly, $V_1 = V_2$. So the full variance term is
\begin{align*}
    \mathrm{Var}[\psi_1 - \psi_{0}] &= \Eb{ \frac{1}{e_1(X) \cdot \pi(1,X)} \cdot [Y-\mu_1(X)]^2} + \Eb{ \frac{1}{e_{0}(X) \cdot \pi(0,X)} \cdot [Y-\mu_{0}(X)]^2}\\
    &+ \Eb{(\mu_1(X) - \mu_{0}(X))^2} - \Eb{\mu_1(X) -\mu_{0}(X)}^2 \\
    &= \Eb{ \frac{1}{e_1(X) \cdot \pi(1,X)} \cdot [Y-\mu_1(X)]^2} + \Eb{ \frac{1}{e_{0}(X) \cdot \pi(0,X)} \cdot [Y-\mu_{0}(X)]^2} \\ 
    &+ \mathrm{Var}\bigg[\mu_1(X) - \mu_{0}(X)\bigg]
\end{align*} 

Rewriting the bound from Hahn (1998), we get 

\begin{align*}
    V &\geq \Eb{ \frac{1}{e_1(X) \cdot \pi(1,X)} \cdot [Y-\mu_1(X)]^2} + \Eb{ \frac{1}{e_{0}(X) \cdot \pi(0,X)} \cdot [Y-\mu_{0}(X)]^2} \\
    &+ \mathrm{Var}\bigg[\mu_1(X) - \mu_{0}(X)\bigg] 
\end{align*}

\endproof

\proof{Proof of \Cref{thm-z-budget-solution}.}\label{proof:thm4}

Finding the optimal $\pi$ can be separated into sub-problems for each treatment $z \in \{0,1 \}$, since the objective and dual variables are separable across $z$. We first look at a solution for $\pi(z,X)$ for a given $z$: 
\begin{align}
  \min_{\pi(z,x)}  &
  \Eb{ \frac{\sigma^2_z(X)}{e_z(X)  \pi(z,X)}  }  \tag{z-budget} \\
   \text{ s.t. } & \Eb{\pi(z,X) \mid Z=z} \leq B_z, \nonumber\\
    &0 < \pi(z,x) \leq 1, \; \forall x \nonumber
\end{align}
    
We define the Lagrangian of the optimization problem and introduce dual variables $\lambda$ for the budget constraint and $\eta$ and $\nu$ for the the constraint that $0 < \pi(z,X) \leq 1$:  
$$ 
\mathcal{L} = \Eb{ \frac{ (Y- \mu_z(X))^2 }{e_z(X)  \pi(z,X)} }  
+ \lambda_z(\Eb{\pi(z,X) \mid Z=z } - B_z) + 
\sum_{x\in\mathcal{X}} (\nu_x^z (\pi(z,x)-1) - \eta_x^z \pi(z,x))
$$

Define the conditional outcome variance $\sigma^2(X) = \Eb{(Y -\mu(z,1,X))^2|X}$. Note that by iterated expectations, 
$$ 
\mathcal{L} = \Eb{ \frac{\sigma^2_z(X)}{e_z(X)  \pi(z,X)} }  
+ \lambda_z(\Eb{\pi(z,X)  \mid Z=z } - B_z) + 
\sum_{x\in\mathcal{X}} (\nu_x^z (\pi(z,x)-1) - \eta_x^z \pi(z,x))
$$




We can find the optimal solution by setting the derivative equal to 0. Since $p(X=x\mid Z=z) = \frac{e_z(x) p(x)}{p(Z=z)}$
\begin{align*}
    \frac{\partial \mathcal{L}}{\partial \pi(z,X)} &= 
 -\frac{{\sigma^2(X)}}{e_z(X)(\pi^2(z,X))}p(x)  + \lambda_z \frac{e_z(x) p(x)}{p(Z=z)}
 + \nu_x - \eta_x = 0, \text{ where } p(x)>0
\\ &= -\frac{{\sigma^2(X)}}{e_z^2(X)\pi^2(z,X)} + \frac{\lambda_z}{p(Z=z)} + 
 \frac{ (\nu_x^z - \eta_x^z) }{p(x)e_z(x) }= 0
 \\
\end{align*}
Therefore
$$
\pi(z,x) = \sqrt{\frac{ \sigma^2(x) }{e_z^2(x)(
\frac{\lambda_z}{p(Z=z)} +  \frac{ (\nu_x^z - \eta_x^z) }{p(x)e_z(x) })}}$$
Next we give a choice of $\lambda$ that results in an interior solution with $0 \leq \pi(z,x)\leq 1$, so that $\nu^z_x, \eta_x^z$ can be set to $0$ without loss of generality to satisfy complementary slackness.

We posit a closed form solution
$$
\pi^*(z,X) = \frac{\sqrt{\nicefrac{\sigma^2_z(X)}{  e_z^2(X)}}}{\Eb{ \sqrt{\nicefrac{\sigma^2_z(X)}{  e_z^2(X)}}
\mid Z=z
} } \cdot B_z 
$$. 

Note that this solution is self-normalized to satisfy the budget constraint such that 

$$
\Eb{\pi^*(z,X)  \mathbb{I}[Z=z ]} = \Eb{\frac{\sqrt{\nicefrac{\sigma^2(X)}{  e_z^2(X)}}}{\Eb{ \sqrt{\nicefrac{\sigma^2_z(X)}{  e_z^2(X)}} \mid Z=z }  }  B_z \mid Z=z } = B_z
$$

This solution corresponds to a choice of $\lambda^*_z = \nicefrac{p(Z=z)\Eb{ \sqrt{\nicefrac{\sigma^2(X)}{  e_z^2(X)}
} \mid Z=z
}^2 }{B_z^2}$ in the prior parametrized expression. 

\begin{align*}
    \pi_{\lambda}(z,X) &= \pi^*(z,X) \\
    \sqrt{\frac{{\sigma^2_z(X)}}{ e_z^2(X) \frac{\lambda}{p(Z=z)}}} &= \frac{\sqrt{\nicefrac{\sigma^2_z(X)}{e_z^2(X)}}}{\Eb{ \sqrt{\nicefrac{\sigma^2_z(X)}{  e_z^2(X)}} \mid Z=z }}\cdot B_z 
\end{align*} 

We can check that the KKT conditions are satisfied at  $\pi^*(z,X)$ and $\lambda^*$. We note that since $\pi^*(z,X)$ is an interior solution then w.l.o.g we can fix $\nu_x, \eta_x= 0$ to satisfy complementary slackness. 

It remains to check that $\frac{\partial \mathcal{L} }{\partial \pi^*(z,X)} = 0$, we have that:   

$$
\frac{\partial\mathcal{L}}{\partial \pi(z,X)} = 
-\frac{{\sigma^2_z(X)}}{e_z(X)} \cdot  
\frac{e_z^2(X)\Eb{ \sqrt{\nicefrac{\sigma^2_z(X)}{  e_z(X)}} \mid Z=z
}^2
}{ 
\sigma^2_z(X) \cdot B_z^2
}
+ \frac{\Eb{\sqrt{\nicefrac{\sigma^2(X)}{e_z(X)}} \mid Z=z 
}^2 \sigma^2_z(X) e_z(X)}{\sigma^2_z(X)\cdot B_z^2} + 0 = 0.
$$

Thus we have shown that $\pi^*(z,X)$ is optimal. 

\endproof

\proof{Proof of \Cref{thm-global-budget-solution}.}\label{proof:thm1}
Proceed as in the proof of \Cref{thm-z-budget-solution}. 

The Lagrangian of the optimization problem (with a single global budget constraint) is: 

\begin{align*}
\mathcal{L} &= \sum_{z\in\{0,1\}} 
\Eb{ \frac{ (Y- \mu_z(X))^2 }{e_z(X)  \pi(z,X)} }+ 
\sum_{x\in\mathcal{X}} (\nu_x^z (\pi(z,x)-1) - \eta_x^z \pi(z,x))
\\
&\qquad\qquad + \lambda(\Eb{\pi(1,X) \mathbb{I}[Z=1]+\pi(0,X) \mathbb{I}[Z=0] } - B)
\end{align*}

Again by iterated expectations, 
$$ 
\mathcal{L} = \Eb{ \frac{\sigma^2_z(X)}{e_z(X)  \pi(z,X)} }  
+ \lambda(\Eb{\pi(1,X) e_1(X) + \pi(0,X) e_0(X) } - B_z) + 
\sum_{x\in\mathcal{X}} (\nu_x^z (\pi(z,x)-1) - \eta_x^z \pi(z,x))
$$
We can find the optimal solution by setting the derivative equal to 0. 
\begin{align*}
    \frac{\partial \mathcal{L}}{\partial \pi(z,X)} &= 
 -\frac{{\sigma^2(X)}}{e_z(X)(\pi^2(z,X))}p(x)  + \lambda p(x) e_z(x)  + \nu_x^z - \eta_x^z = 0, \text{ where } p(x)>0
\\ &= -\frac{{\sigma^2(X)}}{e_z^2(X)\pi^2(z,X)} + \lambda + 
 \frac{ (\nu_x^z - \eta_x^z) }{p(x)e_z(x) }= 0
\end{align*}
Therefore we obtain a similar expression parametrized in $\lambda$, but this parameter is the same across both groups under a global budget. 
$$
\pi(z,x) = \sqrt{\frac{ \sigma^2(x) }{e_z^2(x)(\lambda +  \frac{ (\nu_x^z - \eta_x^z) }{p(x)e_z(x) })}}$$

We can similarly give a closed-form expression for a different choice of $\lambda$ yielding an interior solution, so that we can set $\nu_x^z,\eta_x^z=0$ without loss of generality. 
$$
\lambda =  \frac{\Eb{ 
\mathbb{I}[Z=1]\sqrt{\nicefrac{\sigma^2_1(X)}{  e_1^2(X)}} 
+ \mathbb{I}[Z=0]\sqrt{\nicefrac{\sigma^2_0(X)}{  e_0^2(X)}}
}^2 
}{B^2}
$$
Notice that this satisfies the normalization requirement that $\E[\pi^\lambda(1,X)\mathbb{I}[Z=1] +\pi^\lambda(0,X)\mathbb{I}[Z=0] ]\leq B,$ and similarly note that the partial derivatives with respect to $\pi(z,x)$ are $0$. 
\endproof

\proof{Proof of \Cref{prop:cts-trt-avar}.}\label{proof:prop2}

We simplify the expression for the asymptotic variance of the ATE with missing outcomes and continuous treatments. We derive the variance and the bias terms and isolate the components affected by the data annotation probability. Again, here $f_{Z|X}(z|x)$ is defined as conditional probability density of treatment given covariates and later we will use $f_{ZX}(z,x)$ to refer to the joint distribution between treatments and covariates. And the "partial" Riesz representer is $\alpha(z,x) = \frac{1}{f_{Z\mid X}(z,x)}$ and we introduce $\bar{\alpha}$ to account for mispecification. 

\begin{align*}
\mathrm{Var}[\psi_z] &= \mathrm{Var}\Bigg[\mu(z,X) + \frac{K_h(Z - z) \alpha(z,X)R}{\pi(z,X)}(Y-\mu(z,X))\Bigg] \\ 
    &= \mathrm{Var}\Bigg[ \frac{K_h(Z - z) \alpha(z,X)R}{\pi(z,X)}(Y-\mu(z,X))\Bigg] + \mathrm{Var}[\mu(z,X)] + \underbrace{2\mathrm{Cov}\Bigg[\frac{K_h(Z - z) \alpha(z,X)R}{\pi(z,X)}(Y-\mu(z,X)),\mu(z,X)\Bigg]}_{=0}
\end{align*}
We focus on the first term as it is the part that depends on $\pi(z,x)$:
\begin{align*}
    V&=V\Bigg[\Eb{\frac{K_h(Z - z) \alpha(z,X)R}{\pi(z,X)}(Y-\mu(z,X))}\Bigg] 
    + \Eb{V\Bigg[\frac{K_h(Z - z) \alpha(z,X)R}{\pi(z,X)}(Y-\mu(z,X))\Bigg]} \tag{Law of total variance}\\
    &\Eb{\Bigg(\Eb{\frac{K_h(Z - z) \alpha(z,X)R}{\pi(z,X)}(Y-\mu(z,X))}\Bigg)^2} - \Bigg(\Eb{\Eb{\frac{K_h(Z - z) \alpha(z,X)R}{\pi(z,X)}(Y-\mu(z,X))}}\Bigg)^2 \\
    &+ \Eb{\Eb{\Bigg(\frac{K_h(Z - z) \alpha(z,X)R}{\pi(z,X)}(Y-\mu(z,X))\Bigg)^2}} - \Eb{\Bigg(\Eb{\frac{K_h(Z - z) \alpha(z,X)R}{\pi(z,X)}(Y-\mu(z,X))}\Bigg)^2} \\
    &= \underbrace{\Eb{\Eb{\Bigg(\frac{K_h(Z - z) \alpha(z,X)R}{\pi(z,X)}(Y-\mu(z,X))\Bigg)^2}}}_{V_z} - \underbrace{\Bigg(\Eb{\Eb{\frac{K_h(Z - z) \alpha(z,X)R}{\pi(z,X)}(Y-\mu(z,X))}}\Bigg)^2}_{B_z} \tag{canceled out first and fourth term of expansion}
\end{align*}

For $V_z$: 
\begin{align*}
    &\Eb{\Eb{\Bigg(\frac{K_h(Z - z) \bar{\alpha}(z,X)R}{\pi(z,X)}(Y-\mu(z,X))\Bigg)^2}} = \Eb{\Eb{\frac{K^2_h(Z - z) \bar{\alpha}^2(z,X)R^2}{\pi^2(z,X)}(Y-\mu(z,X))^2}} \\
    &= \int_{\mathcal{X}} \int_{Z_0}  \frac{K^2_h(s-z) \bar{\alpha}^2(s,x)R^2}{\pi^2(s,x)} \Eb{(Y-\mu(s,x))^2|Z=s,X=x} f_{ZX}(s,x)ds dx \\
    &= h^{-1} \int_{\mathcal{X}}\int_{\mathcal{Q}} \frac{k^2(u)\bar{\alpha}^2(s,x)R^2}{\pi^2(s,x)}\Eb{(Y-\mu(s,x))^2\mid Z=z+uh,X=x}f_{ZX}(z+uh,x)dudx \tag{change of variables: u=\nicefrac{s-z}{h},s=z+uh} \\
    &= h^{-1}\int_{\mathcal{X}}\int_{\mathcal{Q}} \Bigg(\Eb{(Y-\mu_(z,x))^2\mid Z=z,X=x}+uh\frac{d}{d z}  \Eb{(Y-\mu(z,x))^2\mid Z=z,X=x}\bigg\vert_{z=\bar{z}}\Bigg) \\ 
    &\times \Bigg( f_{ZX}(z,x) + uh\frac{d}{d z} f_{ZX}(z,x)\bigg\vert_{z=z'}\Bigg) k^2(u) \frac{\bar{\alpha}^2(z,x)R^2}{\pi^2(z,x)}dudx \tag{taylor expansion and mean value theorem for $\bar{z},z'$ between $z,z+uh$} \\
    &= h^{-1} \int_{\mathcal{X}} \frac{\bar{\alpha}^2(z,x)R^2}{\pi^2(z,x)} \Bigg[ \int_{\mathcal{R}}k^2(u) \Eb{(Y-\mu(z,x))^2\mid Z=z, X=x} f_{ZX}(z,x) + o(h^2) du \Bigg] dx \\
    &= h^{-1} \int_{\mathcal{X}} \frac{\bar{\alpha}^2(z,x)R^2}{\pi^2(z,x)} \xi_k \Eb{(Y-\mu(z,x))^2\mid Z=z, X=x} f_{ZX}(z,x)dx  + o(h^2) \tag{$\xi_k \equiv \int k^2(u)$} \\ 
    &= h^{-1} \Eb{\frac{\bar{\alpha}^2(z,x)R^2}{\pi^2(z,x)}f_{Z\mid X}(z\mid x)\Eb{(Y-\mu( z,x))^2\mid Z=z,X=x}} \xi_k \\ 
    &= h^{-1} \Eb{\frac{\bar{\alpha}^2(z,x)}{\pi(z,x)}f_{Z\mid X}(z\mid x)\Eb{(Y-\mu(z,x))^2\mid Z=z,X=x}} \xi_k \tag{$\Eb{R^2|X}=\Eb{R|X}=\pi(z,x)$}
\end{align*} 


For $B_z$: 
\begin{align*}
    &\Bigg(\Eb{\Eb{\frac{K_h(Z - z) \bar{\alpha}(z,X)R}{\pi(z,X)}(Y-\mu(z,X)}}\Bigg)^2 = \Bigg(\Eb{\frac{\bar{\alpha}(z,X)R}{\pi(z,X)}\Eb{K_h(Z - z) (Y-\mu(z,X))}}\Bigg)^2 \\
    &= \Bigg(\Eb{\bar{\alpha}(z,X)\underbrace{\Eb{K_h(Z - z) (Y-\mu(z,X))\mid Z=z,R=1,X}}}\Bigg)^2 \tag{$\Eb{R|X}=\pi(z,x)$} \\ 
    &= \int_{\mathcal{Z}_0} K_h(s-z)(\bar{\mu}(z,X) - \mu(z,X))f_{Z|X}(s|x)ds \tag{$\bar{\mu}(z,X) = \Eb{Y\mid Z=z,R=1,X}$}\\
    &= \int_{\mathcal{Q}} k(u) (\bar{\mu}(z+uh,X) - \mu(z,X))f_{Z|X}(z+uh|x)du \tag{change of variables}\\
    &= \int_{Q} \bigg((\bar{\mu}(z,X) - \mu(z,X)) +  uh\frac{d}{d z} \bar{\mu}(z,X) +  \frac{u^2 h^2}{2} \frac{d^2}{d z} \bar{\mu}(z,X) \bigg)\\
    &\times \bigg( f_{Z|X}(z|x) + uh\frac{d}{d z} f_{Z|X}(z|x) + \frac{u^2 h^2}{2} \frac{d^2}{d z} f_{Z|X}(z|x) \bigg) \tag{taylor expansion} \\
    &\times k(u) du + O(h^3) \\ 
    &= \int_{Q} (\bar{\mu}(z,X) - \mu(z,X))f_{Z|X}(z|x)k(u) + h\bigg[\underbrace{(\bar{\mu}(z,X) - \mu(z,X))uk(u) \frac{d}{d z} f_{Z|X}(z|x)}_{\int uk(u)du =0} + \underbrace{f_{Z|X}(z|x) uk(u) \frac{d}{d z} \bar{\mu}_z(X)}_{\int uk(u)du=0} \bigg]  \\
    &+ h^2\bigg[ \frac{1}{2}(\bar{\mu}(z,X) -\mu(z,X)) u^2k(u)\frac{d^2}{d z} f_{Z|X}(z|x) + \frac{1}{2}u^2k(u) f_{Z|X}(z|x) \frac{d^2}{d z}\bar{\mu}(z,X)+ u^2k(u)\frac{d}{d z} \bar{\mu}(z,X) \frac{d}{d z} f_{Z|X}(z|x)\bigg] + O(h^3) \\
    &=(\bar{\mu}(z,X) - \mu_z(X))f_{Z|X}(z|x) + h^2\bigg[\frac{d}{d z} \bar{\mu}(z,X) \frac{d}{d z}f_{Z|X}(z|x) + \frac{1}{2}f_{Z|X}(z|x)\frac{d^2}{d z} \bar{\mu}(z,X) + \frac{1}{2} (\bar{\mu}(z,X) - \mu_z(X))\frac{d^2}{d z}f_{z|X}(z|x)\bigg] \\
    &\times \int_{-\infty}^{\infty} u^2k(u)du + O(h^3) \\
    &= \underbrace{\Eb{(\bar{\mu}(z,X) - \mu(z,X))f_{Z|X}(z|x)\bar{\alpha}(z,x)}^2}_{=0} + h^4 \Bigg(\bigg[ 2 \frac{d}{d z} \bar{\mu}(z,X)\frac{d}{d z} f_{Z|X}(z|x) + f_{Z|X}(z|x)\frac{d^2}{d z} \bar{\mu}(z,X) \\
    &= + (\bar{\mu}(z,X) - \mu(z,X))\frac{d^2}{d z}f_{Z|X}(z|x)\bigg]\kappa\Bigg)^2 \tag{$\kappa\equiv \int u^2k(u)du$} \\
    &= h^4 \Bigg(\bigg[ 2 \frac{d}{d z} \bar{\mu}(z,X)\frac{d}{d z} f_{Z|X}(z|x) + f_{Z|X}(z|x)\frac{d^2}{d z} \bar{\mu}(z,X) + (\bar{\mu}(z,X) - \mu(z,X))\frac{d^2}{d z}f_{Z|X}(z|x)\bigg]\kappa\Bigg)^2
\end{align*}
    
\endproof

\proof{Proof of \Cref{eqn-cts-tx-optimal}.}\label{proof:thm3}
The objective function arises from the asymptotic variance expression in \citep[Thm. 3]{colangelo2020double}; it follows readily from following their proof of Thm. 3 with our analysis of the asymptotic variance as in \Cref{prop-avar-ate}. The proof of the optimal solution follows our analysis in \Cref{thm-global-budget-solution} with a few slightly different expressions, discussed as follows. 

The Lagrangian can be written as follows: 
 
\begin{align*}
\mathcal{L} &=
    \int_{\mathcal{X}} \int_{Z_0} \frac{K_h^2(s-z) \bar{\alpha}^2(s, x) }{\pi(s, x)} \sigma^2(s,x) f_{Z X}(s, x) d s d x \\ &+\lambda \int \int (\pi(s,x) -B)f_{zx}(s,x) ds dx  + \nu \int\int (\pi(s,x) - 1) f_{Z X}(s,x) ds dx + \eta \int \int (-\pi(s,x))f_{Z X}(s,x) ds dx
\end{align*}

We can take the pointwise derivative w.r.t. $\pi(s,x)$ to obtain the FOC 

\begin{align*}
    \frac{\partial\mathcal{L}}{\partial \pi(s,x)} = \frac{-K^2_h(s-z)\bar{\alpha}^2(s,x)\sigma^2(s,x)}{\pi^2(s,x)}f_{Z X}(s,x) + (\lambda + \nu - \eta)f_{Z X}(s,x) = 0 
\end{align*}

Solving the FOC, we obtain 

\begin{align*}
    (\lambda + \nu - \eta) f_{Z X}(s,x) &= \frac{K^2_h(s-z)\bar{\alpha}^2(s,x)\sigma^2(s,x)}{\pi^2(s,x)}f_{Z X})(s,x) \\
    \sqrt{\pi^2(s,x)} &= \sqrt{\frac{K^2_h(s-z)\bar{\alpha}^2(s,x)\sigma^2(s,x)}{\lambda+\nu-\eta}} \\
    \pi^*(s,x) &= \sqrt{\frac{K^2_h(s-z)\bar{\alpha}^2(s,x)\sigma^2(s,x)}{\lambda+\nu-\eta}} 
\end{align*}

We can solve for $\lambda^*$ and set $\nu$ and $\eta$ to be zero: 

\begin{align*}
    \Eb{\sqrt{\frac{K^2_h(Z-z)\bar{\alpha}^2(Z,X)\sigma^2(Z,X)}{\lambda}}} = B \\
    \lambda^* = \frac{\Eb{K_h(Z-z)\sqrt{\bar{\alpha}^2(Z,X)\sigma^2(Z,X)}}^2}{B^2} 
\end{align*}

Then plug back into our optimal $\pi^*(Z,X)$, 

\[ \pi^*(Z,X) = \pi_{\lambda}(Z,X) = \sqrt{\frac{K^2_h(Z-z)\bar{\alpha}^2(Z,X)\sigma^2(Z,X)}{\frac{\Eb{K_h(Z-z)\sqrt{\bar{\alpha}^2(Z,X)\sigma^2(Z,X)}}^2}{B^2}}} = \frac{\sqrt{K^2_h(Z-z)\bar{\alpha}^2(Z,X)\sigma^2(Z,X)}}{\Eb{K_h(Z-z)\sqrt{\bar{\alpha}^2(Z,X)\sigma^2(Z,X)}}} B\]

We can check that $\frac{\partial \mathcal{L}}{\partial \pi^*} = 0$ 

\begin{align*}
    -\frac{K^2_h(Z-z)\bar{\alpha}^2(Z,X)\sigma^2(Z,X)}{\pi^2(Z,X)} + (\lambda + \nu - \eta) = 0 \\
    - \frac{K^2_h(Z-z)\bar{\alpha}^2(Z,X)\sigma^2(Z,X)}{\frac{K^2_h(Z-z)\bar{\alpha}^2(Z,X)\sigma^2(Z,X)}{\Eb{K_h(Z-z)\sqrt{\bar{\alpha}^2(Z,X)\sigma^2(Z,X)}}^2} B^2} + \frac{\Eb{K_h(Z-z)\sqrt{\bar{\alpha}^2(Z,X)\sigma^2(Z,X)}}^2}{B^2} = 0 \\ 
    - \frac{\Eb{K_h(Z-z)\sqrt{\bar{\alpha}^2(Z,X)\sigma^2(Z,X)}}^2}{B^2} + \frac{\Eb{K_h(Z-z)\sqrt{\bar{\alpha}^2(Z,X)\sigma^2(Z,X)}}^2}{B^2} = 0
\end{align*}
\endproof

\subsection{Estimation analysis}

\proof{Proof of \Cref{thm-asymptotic-convergence}.}\label{proof:thm2}

\textbf{Proof sketch.} 

The proof proceeds in two steps. The first establishes that the feasible AIPW estimator converges to the AIPW estimator with oracle nuisances. It follows from standard analysis with cross-fitting, in particular the variant used across batches. 

\paragraph{Preliminaries }
In the analysis, we write the score function as a function of $R$ in addition to other nuisance functions: 
$$
{\psi}_{z,i}(R_i, e,\pi,\mu)=\frac{\mathbb{I}[Z_i=z]R_i(Y_i-{\mu}_z(X_i))}{{e}_z(X_i)  {\pi}(z,X_i)} 
    +{\mu}_z(X_i)
    $$

The AIPW estimator can be rewritten as a sum over estimators within batch-$t$, fold-$k$, $\hat{\tau}^{(t,k)}_{AIPW}$, as follows: 
$$  \hat{\tau}_{AIPW} = 
    \sum_{t =1}^2
\sum_{k=1}^K
\frac{ n_{t,k} }{n}\sum_{(t,i)\in\mathcal{I}_k} \frac{1}{n_{t,k} }
\{
\hat {\psi}_{1,i}(R, \hat e, \hat\pi, \hat\mu) - \hat {\psi}_{0,i}(R,\hat e, \hat\pi, \hat\mu) \} 
= \sum_{t =1}^2
\sum_{k=1}^K \frac{ n_{t,k} }{n}
  \hat{\tau}^{(t,k)}_{AIPW}
  $$

We introduce an intermediate quantity. The realized treatments are sampled with probability $\hat\pi(X_i)$, $R_i \sim Bern(\hat\pi(Z_i,X_i))$. In the asymptotic framework, we study treatments $\tilde{R}$ sampled from an oracle mixture distribution over the two batches. 
$$  \tilde{\tau}_{AIPW} = 
    \sum_{t =1}^2
\sum_{k=1}^K
\frac{ n_{t,k} }{n}\sum_{(t,i)\in\mathcal{I}_k} \frac{1}{n_{t,k} }
\{
\hat {\psi}_{1,i}(\tilde R, \hat e, \hat\pi, \hat\mu) - \hat {\psi}_{0,i}(\tilde R,\hat e, \hat\pi, \hat\mu) \} 
      $$
We also denote the AIPW estimator with oracle nuisances, $ \hat{\tau}^*_{AIPW}$, as 
    \begin{align*}
 \hat{\tau}^*_{AIPW} &= 
 \sum_{t =1}^2
\sum_{k=1}^K
\frac{ n_{t,k} }{n}\sum_{(t,i)\in\mathcal{I}_k} \frac{1}{n_{t,k} }
\{
{\psi}_{1,i}(\tilde R_i,e, \pi, \mu) - {\psi}_{0,i}(\tilde R_i,e, \pi, \mu) \} 
    \end{align*}


We study convergence within a batch$-t$, fold$-k$ subset; the decompositions above give that convergence also holds for the original estimators. 

The first step studies the limiting mixture distribution propensity arising from the two-batch process and shows that the use of the double-machine learning estimator (AIPW), under the weaker product error assumptions, gives that the oracle estimator is asymptotically equivalent to the oracle estimator where missingness follows the limiting mixture missingness probability. The latter of these is an independent triangular array or a weighted sum of two batchwise i.i.d. averages and follows a standard central limit theorem. We wish to show $\hat\tau_{AIPW}-\tilde{\tau}_{AIPW} \to_p o_p(n^{-\frac 12})$, i.e. that: 
$$ \sum_z \E_n [ {\psi}_{z,i}(R, \hat e, \hat \pi, \hat \mu) ] - \E_n [ {\psi}_{z,i}(\tilde{R},  e,  \pi,  \mu) ] = o_p(n^{-\frac 12}).$$

Next we show that the estimator with feasible nuisance estimators converges to the estimator with oracle knowledge of the nuisance functions 
$$ \sqrt{n} (\tilde{\tau}^{(t,k)}_{AIPW}
- \hat{\tau}^{*,(t,k)}_{AIPW})
\to_p 0.$$

The result follows by the standard limit theorem applied to the estimator with oracle nuisance functions. 

\textbf{Step 1 }

Throughout this proof, let
\[
\kappa_n:=\frac{n_1}{n},
\qquad
\kappa_{1,n}:=\kappa_n,
\qquad
\kappa_{2,n}:=1-\kappa_n.
\]
For $t\in\{1,2\}$, let $p_t^0(z,x)$ denote the population
counterpart of the implemented, possibly clipped, batch-$t$
annotation probability, so that
\[
p_1^0(z,x)=\pi_1(z,x),
\qquad
p_2^0(z,x)=\pi_{2,z}^\dagger(x).
\]
Define the finite-sample pooled oracle probability by
\[
\pi_n^\dagger(z,x)
=
\kappa_n p_1^0(z,x)
+
(1-\kappa_n)p_2^0(z,x).
\]
Because $\kappa_n\to\kappa$, we have
$\pi_n^\dagger\to\pi^\dagger$. For notational simplicity,
throughout this proof only, we write $\pi^\dagger$ for
$\pi_n^\dagger$ until the limiting calculation in Step~3.
Let
$\widetilde R_i=\mathbb{I}[U_i\leq p^0_{T_i,i}]$,
using the same uniform variable as the realized draw
$R_i=\mathbb{I}[U_i\leq\widehat p_{T_i,i}]$.
Restricting attention to a single treatment value $z\in \{0,1\}$, 
we want to show that: 
\begin{align*}
& \sum_{t=1}^2 \sum_{k=1}^K \frac{n_{t, k}}{n} \sum_{(t, i) \in \mathcal{I}_k} \frac{1}{n_{t, k}}\left\{\hat{\psi}_{1, i}(\tilde{R}, \hat e, \hat \pi, \hat \mu)-\hat{\psi}_{1, i}(R, \hat e, \hat \pi, \hat \mu)\right\} \\
   & 
  = \sum_{t=1}^2 \sum_{k=1}^K \frac{n_{t, k}}{n} \sum_{(t, i) \in \mathcal{I}_k} \frac{1}{n_{t, k}}
    \left\{\frac{\mathbb{I}[Z_i=z] \tilde{R}_i  (Y_i-\hat \mu_z(X_i))}{ \hat e_z(X_i)  \hat \pi(z,X_i)}-\frac{\mathbb{I}[Z_i=z] R_i  (Y_i-\hat \mu_z(X_i))}{\hat e_z(X_i)  \hat \pi(z,X_i)} \right\} 
    = o_p(n^{-1/2}).
\end{align*}
Without loss of generality we further consider one summand on batch-$t$, fold-$k$ data, the same argument will apply to the other summands and the final estimator.

Note that by consistency of potential outcomes, for any data point we have that
$$
\frac{\mathbb{I}[Z_i=z] \tilde{R}_i  (Y_i-\hat \mu_z(X_i))}{ \hat e_z(X_i)  \hat \pi(z,X_i)}-\frac{\mathbb{I}[Z_i=z] R_i  (Y_i-\hat \mu_z(X_i))}{\hat e_z(X_i)  \hat \pi(z,X_i)} 
= \frac{\mathbb{I}[Z_i=z] (\tilde{R}_i-R_i)  (Y_i(z)-\hat \mu_z(X_i))}{ \hat e_z(X_i)  \hat \pi(z,X_i)}
$$

Conditional on $\mathcal I_{(-k)}$, the fitted nuisance functions,
and the evaluation covariates, the annotation draws are independent
across observations in the evaluation fold. They are not individually
mean-zero; their conditional means and variances are handled below
through the common-uniform coupling.

We start by looking at the estimator over one batch $t$ and one fold $k$ and the rest follows for the other batches and folds.

For one batch-fold block of size $m=n_{t, k}$, define the actual and oracle probabilities $p_{t, i}$ and $p_{t, i}^0$. We study a coupling of $R_i, \tilde{R}_i$ under the same uniform random variable.
Let
\[
R_i=\mathbb{I}[U_i\leq \hat p_{t,i}],
\qquad
\widetilde R_i=\mathbb{I}[U_i\leq p^0_{t,i}],
\qquad
\Delta_i=\widetilde R_i-R_i.
\]
Under the common-uniform coupling,
\[
\E[\Delta_i\mid\mathcal F_{-k},X_i,Z_i]
=p^0_{t,i}-\hat p_{t,i},
\qquad
\E[\Delta_i^2\mid\mathcal F_{-k},X_i,Z_i]
=|p^0_{t,i}-\hat p_{t,i}|.
\]
Writing
\[
Y_i-\hat\mu_z(X_i)
=
\{Y_i-\mu_z(X_i)\}
+
\{\mu_z(X_i)-\hat\mu_z(X_i)\},
\]
the first component has conditional mean zero and
\[
Var\!\left[
\frac{1}{\sqrt m}\sum_{i\in\mathcal I_{t,k}}
\frac{\mathbb{I}[Z_i=z]\Delta_i
      \{Y_i-\mu_z(X_i)\}}
     {\hat e_z(X_i)\hat\pi(z,X_i)}
\,\middle|\,\mathcal F_{-k}
\right]
\leq
C P_{t,k}\!\left[
|p_t^0-\hat p_t|\sigma_z^2
\right]
=o_p(1).
\]
The conditional mean of the second component is bounded by
\[
C\sqrt m\,
\|\hat p_t-p_t^0\|_{2,m}
\|\hat\mu_z-\mu_z\|_{2,m}
=o_p(1),
\]
and its conditional variance is bounded by
\(C\|\hat\mu_z-\mu_z\|_{2,m}^2=o_p(1)\).
Therefore the batch-fold score difference is \(o_p(m^{-1/2})\).

\textbf{Step 2 (feasible estimator converges to oracle)}

If we look at one term for one treatment and datapoint in the above (the rest follows for the others), we obtain the following decomposition into error and product-error terms: 

\begin{align*}
&  \frac{Z_i \tilde{R}_i  (Y_i-\hat\mu_1(X_i))}{\hat e_1(X_i)  \hat\pi(1,X_i)} -  \frac{Z_i \tilde{R}_i  (Y_i-\mu_1(X_i))}{ e_1(X_i)  \pi(1,X_i)}
 +
 (\hat\mu_1(X_i)-\mu_1(X_i))  \\
&=   (\mu_1(X_i)-\hat\mu_1(X_i))
\left( \frac{Z_i \tilde{R}_i  }{ e_1(X_i)  \pi(1,X_i)} - 1 \right) 
    + Z_i \tilde{R}_i  (Y_i- \hat \mu_1(X_i))(\frac{1}{\hat e_1(X_i)  \hat\pi(1,X_i)} 
    - \frac{1}{ e_1(X_i)  \pi(1,X_i)})\tag{by  $\pm  \frac{Z_i \tilde{R}_i  (Y_i-\hat \mu_1(X_i))}{ e_1(X_i)  \pi(1,X_i)} $}
    \\
    &=    (\mu_1(X_i)-\hat\mu_1(X_i))
\left( \frac{Z_i \tilde{R}_i  }{ e_1(X_i)  \pi(1,X_i)} - 1 \right) 
    + Z_i \tilde{R}_i  (Y_i- \mu_1(X_i))(\frac{1}{\hat e_1(X_i)  \hat\pi(1,X_i)} 
    - \frac{1}{ e_1(X_i)  \pi(1,X_i)})\\
   & \qquad 
     + Z_i \tilde{R}_i  (\mu_1(X_i)-\hat\mu_1(X_i))(\frac{1}{\hat e_1(X_i)  \hat\pi(1,X_i)} 
    - \frac{1}{ e_1(X_i)  \pi(1,X_i)}) \tag{by $\pm Z_i \tilde{R}_i  \mu_1(X_i)(\frac{1}{\hat e_1(X_i)  \hat\pi(1,X_i)} 
    - \frac{1}{ e_1(X_i)  \pi(1,X_i)})$}\\
    &     =    (\mu_1(X_i)-\hat\mu_1(X_i))
\left( \frac{Z_i \tilde{R}_i  }{ e_1(X_i)  \pi(1,X_i)} - 1 \right) \\
& \qquad 
    + Z_i \tilde{R}_i  (Y_i- \mu_1(X_i))
    \left({\hat\pi(1,X_i)^{-1}}(
{\hat e_1(X_i)}^{-1} -   { e_1(X_i)^{-1}}) 
    + { e_1(X_i)}^{-1}({ \hat  \pi(1,X_i)}^{-1}-
 {   \pi(1,X_i)}^{-1})
    \right) \\
   & \qquad 
     + Z_i \tilde{R}_i  (\mu_1(X_i)-\hat\mu_1(X_i))
  \left({\hat\pi(1,X_i)^{-1}}(
{\hat e_1(X_i)}^{-1} -   { e_1(X_i)^{-1}}) 
    + { e_1(X_i)}^{-1}({ \hat  \pi(1,X_i)}^{-1}-
 {   \pi(1,X_i)}^{-1})
    \right)
    \tag{by $\pm \frac{1}{ e \hat\pi }$}
\end{align*}

We want to show that 
$$ \sqrt{n_{t,k}} (\hat{\tau}^{(t,k)}_{AIPW}
- \hat{\tau}^{*,(t,k)}_{AIPW})
\to_p 0$$

Now that we have written out this expansion for one datapoints, we can write out this expansion within a batch-$t$, fold-$k$ subset, and write out the cross-fitting terms for reference: 

\begin{align*}
&\sqrt{n_{t,k}}\left(\hat{\tau}_{A I P W}^{(t, k)}-\hat{\tau}_{A I P W}^{*,(t, k)}\right) \\
&=\frac{1}{\sqrt{n_{t,k}}} \sum_{i:(t,i)\in\mathcal{I}_k}   (\mu_1(X_i)-\hat\mu_1^{(-k)}(1,X_i))
\left( \frac{Z_i \tilde{R}_i  }{ e_1(X_i)  \pi(1,X_i)} - 1 \right) \\
&\quad +
\frac{1}{\sqrt{n_{t,k}}} \sum_{i:(t,i)\in\mathcal{I}_k}  Z_i \tilde{R}_i  (Y_i- \mu_1(X_i)) \times  
\\
& \qquad\qquad\qquad\qquad\qquad 
    \left({\hat\pi^{(-k)}(1,X_i)^{-1}}(
{\hat e_1^{(-k)}(X_i)}^{-1} -   { e_1(X_i)^{-1}}) 
    + { e_1(X_i)}^{-1}({ \hat  \pi^{(-k)}(1,X_i)}^{-1}-
 {   \pi(1,X_i)}^{-1})
    \right) \\
 &\quad + 
 \frac{1}{\sqrt{n_{t,k}}}\sum_{i:(t,i)\in\mathcal{I}_k} Z_i \tilde{R}_i  (\mu_1(X_i)-\hat\mu_1^{(-k)}(1,X_i)) \times \\
 & \qquad\qquad\qquad\qquad\qquad 
  \left({\hat\pi^{(-k)}(1,X_i)^{-1}}(
{\hat e_1^{(-k)}(X_i)}^{-1} -   { e_1(X_i)^{-1}}) 
    + { e_1(X_i)}^{-1}({ \hat  \pi^{(-k)}(1,X_i)}^{-1}-
 {   \pi(1,X_i)}^{-1})
    \right)
\end{align*}

\textbf{Bound for third term}: 

\begin{align*}
     &\frac{1}{\sqrt{n_{t,k}}} \sum_{i:(t,i)\in\mathcal{I}_k} Z_i \tilde{R}_i  (\mu_1(X_i)-\hat\mu_1^{(-k)}(X_i))({\hat\pi^{(-k)}(1,X_i)^{-1}}(
\hat{e}^{(-k)}_1(X_i)^{-1} - e_1(X_i)^{-1}) 
  \\
 & \qquad \qquad\qquad + { e_1(X_i)}^{-1}(\hat{\pi}^{(-k)}(1,X_i)^{-1}-\pi(1,X_i)^{-1}) \\
 &= \frac{1}{\sqrt{n_{t,k}}} \sum_{i:(t,i)\in\mathcal{I}_k} Z_i \tilde{R}_i \hat\pi^{(-k)}(1,X_i)^{-1}  (\mu_1(X_i)-\hat\mu^{(-k)}_1(X_i))(
{\hat e_1^{(-k)}(X_i)}^{-1} -   { e_1(X_i)^{-1}}) \\
    &\qquad \qquad\qquad+ Z_i \tilde{R}_i e_1(X_i)^{-1}(\mu_1(X_i)-\hat\mu^{(-k)}_1(X_i))( \hat{\pi}^{(-k)}(1,X_i)^{-1}-
 \pi(1,X_i)^{-1}) \\
 &\leq (\lambda_{\pi} + \nu_e)\frac{1}{\sqrt{n_{t,k}}} \sum_{i:(t,i)\in\mathcal{I}_k}  (\mu_1(X_i)-\hat\mu^{(-k)}_1(X_i))(
{\hat e_1^{(-k)}(X_i)}^{-1} -   { e_1(X_i)^{-1}})\\
&\qquad\qquad \qquad\qquad+ (\mu_1(X_i)-\hat\mu^{(-k)}_1(X_i))( \hat{\pi}^{(-k)}(1,X_i)^{-1}-
 \pi(1,X_i)^{-1}) \end{align*}
By positivity, the inverse maps are Lipschitz on the relevant
probability ranges. Hence the absolute value of the preceding
root-$n_{t,k}$ term is bounded by
\begin{align*}
C\sqrt{n_{t,k}}\,
\norm{\hat\mu_1^{(-k)}-\mu_1}_2
\left\{
\norm{\hat e_1^{(-k)}-e_1}_2
+
\norm{
\hat\pi^{\dagger,(-k)}(1,\cdot)
-\pi^\dagger(1,\cdot)
}_2
\right\}
=o_p(1)
\end{align*}
by \Cref{asn-product-error-rates}, since $n_{t,k}\asymp n$.

\textbf{Bound for the first term, pooled across batches.}

Let $n_k=n_{1,k}+n_{2,k}$ and suppose the folds are formed
proportionally within each batch, so that
$n_{1,k}/n_k=\kappa_n+O(n^{-1})$, where $\kappa_n=n_1/n$.
Define
\[
\pi_n^\dagger(z,x)
=
\kappa_n p_1^0(z,x)
+
(1-\kappa_n)p_2^0(z,x).
\]
Throughout this proof, the denominator $\pi^\dagger$ denotes
$\pi_n^\dagger$.

For $d_{z,k}(x)=\mu_z(x)-\hat\mu_z^{(-k)}(x)$, define
\[
T_{\mu,z,k}
=
\frac{1}{\sqrt{n_k}}
\sum_{t=1}^2
\sum_{i:(t,i)\in\mathcal I_k}
d_{z,k}(X_i)
\left\{
\frac{
\mathbb I[Z_i=z]\widetilde R_i
}{
e_z(X_i)\pi_n^\dagger(z,X_i)
}
-1
\right\}.
\]
Conditional on the training data,
\begin{align*}
&\sum_{t=1}^2
\kappa_{t,n}
\E\left[
\frac{
\mathbb I[Z_i=z]\widetilde R_i
}{
e_z(X_i)\pi_n^\dagger(z,X_i)
}
-1
\,\middle|\,
X_i,T_i=t
\right] \\
&\qquad=
\sum_{t=1}^2
\kappa_{t,n}
\left\{
\frac{p_t^0(z,X_i)}
{\pi_n^\dagger(z,X_i)}
-1
\right\}
=0.
\end{align*}
Thus the first term is centered after pooling the two batches.
The $O(n^{-1})$ fold-rounding difference is negligible under
root-$n$ scaling. By positivity,
\[
\operatorname{Var}
\left(
T_{\mu,z,k}
\mid\mathcal F_{-k}
\right)
\leq
C\norm{\hat\mu_z^{(-k)}-\mu_z}_2^2
=o_p(1).
\]
Therefore,
\[
T_{\mu,z,k}=o_p(1).
\]

\textbf{Bound for the second term}: 
We bound the second term following a similar argument as above.

Since $\E[Y_i-\mu_1(X_i)\mid Z_i=1,X_i]=0$ and
$\tilde R_i\perp Y_i\mid Z_i,X_i,T_i$, both normalized sums below
have conditional mean zero. Hence

\begin{align*}
&\operatorname{Var}\left[
\frac{1}{\sqrt{n_{t,k}}}
\sum_{i:(t,i)\in\mathcal{I}_k}
Z_i\tilde R_i(Y_i-\mu_1(X_i))
\hat\pi^{(-k)}(1,X_i)^{-1}
\left\{
\hat e_1^{(-k)}(X_i)^{-1}
-
e_1(X_i)^{-1}
\right\}
\,\middle|\,
\mathcal{I}_{(-k)},\{X_i\}
\right] \\
&\quad+
\operatorname{Var}\left[
\frac{1}{\sqrt{n_{t,k}}}
\sum_{i:(t,i)\in\mathcal{I}_k}
Z_i\tilde R_i(Y_i-\mu_1(X_i))
e_1(X_i)^{-1}
\left\{
\hat\pi^{(-k)}(1,X_i)^{-1}
-
\pi(1,X_i)^{-1}
\right\}
\,\middle|\,
\mathcal{I}_{(-k)},\{X_i\}
\right] \\
&\qquad \leq
\frac{C}{n_{t,k}}
\sum_{i:(t,i)\in\mathcal{I}_k}
\Bigg[
\left\{
\hat e_1^{(-k)}(X_i)^{-1}
-
e_1(X_i)^{-1}
\right\}^{2}
+
\left\{
\hat\pi^{(-k)}(1,X_i)^{-1}
-
\pi(1,X_i)^{-1}
\right\}^{2}
\Bigg] \\
&\qquad =o_p(1).
\end{align*}
where $C<\infty$ follows from positivity and
$\sigma_1^2(X)\leq B_{\sigma^2}$. The final equality follows from
the $L_2$-consistency of $\hat e_1^{(-k)}$ and
$\hat\pi^{(-k)}$, because inversion is Lipschitz on probabilities
bounded away from zero. Conditional Chebyshev's inequality therefore
implies that each of the two already root-$n_{t,k}$-scaled components
is $o_p(1)$, and hence the second term is $o_p(1)$. This holds for
both treatment arms. 

The second and third terms are $o_p(1)$ batch by batch, while the
first term is $o_p(1)$ after pooling the two batches. Since the
number of folds is fixed and each fold has size proportional to
$n$, summing over folds and treatment arms gives
\[
\sqrt n
\left(
\widetilde\tau_{AIPW}
-
\widehat\tau^*_{AIPW}
\right)
=o_p(1).
\]

  Conditional on the batch split, the centered oracle scores form an
independent triangular array satisfying the Lindeberg condition by
\Cref{asn-positivity,asn-consistent-bounded}, and their average variance
converges to $V_{AIPW}$ because
$\kappa_n p_1^0(z,x)+(1-\kappa_n)p_2^0(z,x)
=\pi_n^\dagger(z,x)\to\pi^\dagger(z,x)$; hence the
Lindeberg--Feller central limit theorem and Slutsky's theorem, together
with Steps~1--2, imply
\[
\sqrt n\left(\hat\tau_{AIPW}-\tau\right)
\Rightarrow \mathcal N(0,V_{AIPW}).
\]  
\endproof

\section{Additional lemmas}
\subsection{Results appearing in other works, stated for completeness.}
\begin{lemma}[Conditional convergence implies unconditional convergence, from \citep{chernozhukov2018double}]
    Lemma 6.1. (Conditional Convergence implies unconditional) Let $\left\{X_m\right\}$ and $\left\{Y_m\right\}$ be sequences of random vectors. (a) If, for $\epsilon_m \rightarrow 0, \operatorname{Pr}\left(\left\|X_m\right\|>\epsilon_m \mid Y_m\right) \rightarrow_{\operatorname{Pr}} 0$, then $\operatorname{Pr}\left(\left\|X_m\right\|>\epsilon_m\right) \rightarrow 0$. In particular, this occurs if $E\left[\left\|X_m\right\|^q / \epsilon_m^q \mid Y_m\right] \rightarrow_{P r} 0$ for some $q \geq 1$, by Markov's inequality. (b) Let $\left\{A_m\right\}$ be a sequence of positive constants. If $\left\|X_m\right\|=O_P\left(A_m\right)$ conditional on $Y_m$, namely, that for any $\ell_m \rightarrow \infty$, $\operatorname{Pr}\left(\left\|X_m\right\|>\ell_m A_m \mid Y_m\right) \rightarrow_{P r} 0$, then $\left\|X_m\right\|=O_P\left(A_m\right)$ unconditionally, namely, that for any $\ell_m \rightarrow \infty$, $\operatorname{Pr}\left(\left\|X_m\right\|>\ell_m A_m\right) \rightarrow 0$.
\end{lemma}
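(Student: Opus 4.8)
The plan is to reduce both parts to a single observation: the conditional probability appearing in each hypothesis is itself a $[0,1]$-valued random variable measurable with respect to $Y_m$, and for any such uniformly bounded sequence, convergence in probability to zero forces convergence of its expectation to zero. The unconditional probability we wish to control is exactly that expectation, by the tower property, so the two parts are really the same argument applied to different radii.

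For part (a), I would set $W_m := \operatorname{Pr}(\|X_m\| > \epsilon_m \mid Y_m)$, a random variable taking values in $[0,1]$. The law of total probability gives
\[
\operatorname{Pr}(\|X_m\| > \epsilon_m) = E[W_m],
\]
so it suffices to show $E[W_m] \to 0$. The hypothesis is precisely that $W_m \to 0$ in probability. To pass from convergence in probability to convergence of expectations I would exploit boundedness: for any $\delta > 0$, split $E[W_m] = E[W_m \mathbb{I}(W_m \le \delta)] + E[W_m \mathbb{I}(W_m > \delta)] \le \delta + \operatorname{Pr}(W_m > \delta)$, then let $m \to \infty$ and afterwards $\delta \downarrow 0$; this is just the bounded convergence theorem. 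The ``in particular'' clause follows from the conditional Markov inequality: since $\operatorname{Pr}(\|X_m\| > \epsilon_m \mid Y_m) = \operatorname{Pr}(\|X_m\|^q > \epsilon_m^q \mid Y_m) \le E[\|X_m\|^q/\epsilon_m^q \mid Y_m]$, a right-hand side converging to zero in probability supplies the hypothesis of part (a) by monotonicity of convergence in probability.

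For part (b), I would fix an arbitrary sequence $\ell_m \to \infty$ and rerun the identical argument with $\epsilon_m$ replaced by $\ell_m A_m$: setting $V_m := \operatorname{Pr}(\|X_m\| > \ell_m A_m \mid Y_m)$, the tower property yields $\operatorname{Pr}(\|X_m\| > \ell_m A_m) = E[V_m]$, and the $[0,1]$-boundedness of $V_m$ together with $V_m \to 0$ in probability gives $E[V_m] \to 0$. Since $\ell_m$ was arbitrary, this is exactly the claimed unconditional $O_P(A_m)$ statement. I would note that the argument never actually used $\epsilon_m \to 0$, so in particular no restriction on the constants $A_m$ is required.

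The proof is essentially routine; the only step needing care is the passage from convergence in probability to $L^1$ convergence of the conditional probabilities. The main (minor) obstacle is recognizing that this passage is legitimate precisely because the conditional probabilities are uniformly bounded by $1$ — convergence in probability alone does not imply convergence of expectations without uniform integrability, which here holds automatically. Once that is isolated, everything else is the tower property and a one-line threshold split.
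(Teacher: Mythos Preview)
Your argument is correct and is the standard route: tower property plus bounded convergence applied to the $[0,1]$-valued conditional probability. The paper itself does not prove this lemma---it is listed under ``Results appearing in other works, stated for completeness'' and is quoted verbatim from \citet{chernozhukov2018double} without proof---so there is nothing to compare against, but your proposal stands on its own.
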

\begin{lemma}[Chebyshev's inequality]
Let $X$ be a random variable with mean $\mu$ and variance $\sigma^2$. Then, for any $t>0$, we have

$$
{P}(|X-\mu| \geq t) \leq \frac{\sigma^2}{t^2}
$$

\end{lemma}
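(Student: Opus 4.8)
The plan is to derive the bound directly from Markov's inequality, which the paper already invokes. First I would introduce the nonnegative random variable $W:=(X-\mu)^2$ and record that $\E[W]=\sigma^2$ by the definition of the variance. Next, for a fixed $t>0$, I would observe that the events $\{\,|X-\mu|\ge t\,\}$ and $\{\,W\ge t^2\,\}$ coincide, since $u\mapsto u^2$ is increasing on $[0,\infty)$ and both $|X-\mu|$ and $t$ are nonnegative. Applying Markov's inequality to $W$ at the threshold $t^2>0$ then gives $P(|X-\mu|\ge t)=P(W\ge t^2)\le \E[W]/t^2=\sigma^2/t^2$, which is exactly the assertion, so the proof closes in three lines.

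If one prefers to avoid even citing Markov's inequality, I would instead run the one-step truncation argument inline: starting from $\sigma^2=\E[(X-\mu)^2]\ge \E[(X-\mu)^2\,\mathbb{I}[|X-\mu|\ge t]]\ge t^2\,P(|X-\mu|\ge t)$ --- where the first inequality drops a nonnegative contribution and the second lower-bounds $(X-\mu)^2$ by $t^2$ on the relevant event --- and then dividing through by $t^2$. There is no genuine obstacle here: this is a standard textbook inequality included only for completeness, so the only points worth keeping in mind are the hypotheses, namely that $t>0$ (so dividing by $t^2$ is legitimate), that the bound is vacuous when $t^2\le\sigma^2$, and that the statement implicitly presumes $\sigma^2<\infty$, which is consistent with the bounded-second-moment conditions used elsewhere in the analysis.
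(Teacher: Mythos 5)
Your proof is correct: the reduction to Markov's inequality via $W=(X-\mu)^2$ (or equivalently the inline truncation bound $\sigma^2\ge t^2\,P(|X-\mu|\ge t)$) is the standard argument, and your attention to the hypotheses $t>0$ and $\sigma^2<\infty$ is appropriate. The paper itself supplies no proof for this lemma — it is listed under ``Results appearing in other works, stated for completeness'' — so there is nothing to compare against; your two-line derivation is exactly what one would insert if a proof were wanted.
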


\begin{lemma}[Theorem 8.3.23 (Empirical processes via VC dimension), \citep{vershynin2018high}]\label{lemma-chaining-vc}
Let $\mathcal{F}$ be a class of Boolean functions on a probability space $(\Omega, \Sigma, \mu)$ with finite $V C$ dimension $\operatorname{vc}(\mathcal{F}) \geq 1$. Let $X, X_1, X_2, \ldots, X_n$ be independent random points in $\Omega$ distributed according to the law $\mu$. Then

$$
\mathbb{E} \sup _{f \in \mathcal{F}}\left|\frac{1}{n} \sum_{i=1}^n f\left(X_i\right)-\mathbb{E} f(X)\right| \leq C \sqrt{\frac{\operatorname{vc}(\mathcal{F})}{n}}
$$

\end{lemma}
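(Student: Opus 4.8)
This is the classical Vapnik--Chervonenkis bound on the expected supremum of a bounded empirical process, so the plan is to follow the standard three-step route: symmetrization, a VC control of the metric entropy of the class restricted to the sample, and a chaining argument to reach the sharp $\sqrt{\operatorname{vc}(\mathcal F)/n}$ rate. First I would introduce i.i.d.\ Rademacher signs $\varepsilon_1,\dots,\varepsilon_n$ independent of the sample and invoke the standard symmetrization inequality to reduce the problem to bounding the conditional Rademacher complexity $\mathbb{E}_\varepsilon \sup_{f\in\mathcal F}\bigl|\frac1n\sum_{i=1}^n \varepsilon_i f(X_i)\bigr|$ for a fixed realization $x_1,\dots,x_n$, with the bound required to be uniform in the sample.

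Next I would fix $x_1,\dots,x_n$ and work with the finite set $T=\{(f(x_1),\dots,f(x_n)):f\in\mathcal F\}\subseteq\{0,1\}^n$ under the normalized Euclidean metric $d_2(u,v)=n^{-1/2}\|u-v\|_2$, whose diameter is at most $1$. The Sauer--Shelah lemma bounds $|T|\le (en/\operatorname{vc}(\mathcal F))^{\operatorname{vc}(\mathcal F)}$, and, more sharply, Haussler's covering-number bound gives $\log N(T,d_2,\delta)\lesssim \operatorname{vc}(\mathcal F)\log(1/\delta)$ uniformly in the sample for $\delta\in(0,1]$. Conditionally on $x_1,\dots,x_n$ the Rademacher process $f\mapsto \frac1n\sum_i \varepsilon_i f(x_i)$ is sub-Gaussian with respect to $d_2$, so I would apply Dudley's entropy integral to get $\mathbb{E}_\varepsilon\sup_{f}\bigl|\frac1n\sum_i\varepsilon_i f(x_i)\bigr| \lesssim n^{-1/2}\int_0^1 \sqrt{\log N(T,d_2,\delta)}\,d\delta \lesssim n^{-1/2}\sqrt{\operatorname{vc}(\mathcal F)}$, since $\int_0^1 \sqrt{\log(1/\delta)}\,d\delta$ is a finite absolute constant. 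Taking expectations over the sample and combining with the symmetrization factor then yields $\mathbb{E}\sup_f|\frac1n\sum_i f(X_i)-\mathbb{E}f(X)|\le C\sqrt{\operatorname{vc}(\mathcal F)/n}$.

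The main obstacle is getting the clean rate $\sqrt{\operatorname{vc}(\mathcal F)/n}$ rather than $\sqrt{\operatorname{vc}(\mathcal F)\log n/n}$: a plain finite maximal inequality over $T$ using only $|T|\le(en/\operatorname{vc}(\mathcal F))^{\operatorname{vc}(\mathcal F)}$ loses an extra $\sqrt{\log(n/\operatorname{vc}(\mathcal F))}$ factor. Removing it forces the multi-scale chaining step, which in turn relies on the fact that VC classes have covering numbers growing only polynomially in $1/\delta$ with exponent $O(\operatorname{vc}(\mathcal F))$ (Haussler's packing bound); this is exactly what makes Dudley's integral converge to a constant free of $n$, and establishing that $\delta$-dependence cleanly is the technical heart of the argument.
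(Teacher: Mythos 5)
Your proposal is correct and follows exactly the route of the cited source (Vershynin, Thm.\ 8.3.23): symmetrization, the Sauer--Shelah/Haussler covering-number bound for VC classes, and Dudley's entropy integral, and you correctly identify that the chaining step (rather than a one-scale union bound) is what removes the spurious $\sqrt{\log n}$ factor. The paper does not prove this lemma itself but imports it verbatim, so there is nothing further to compare.
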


\subsection{Lemmas}
\begin{lemma}[Convergence of $\hat \pi$]
\label{lemma-convergence-nuissance}
  Assume that with high probability, for some large constant $K$, 
$\norm{\hat e(X) - e(X)}_2 \leq K n^{-r_e},\norm{\hat \sigma^2(X)-\sigma^2(X)}_2 \leq K n^{-r_\sigma}$. Assume \Cref{asn-vc}. Assume that $\sigma^2(X) >0$ so that its inverse is bounded $1/\sigma^2(X) \leq \gamma_{\sigma}.$
Recall that \Cref{thm-global-budget-solution} gives that \begin{align*}
&\pi^*(z,X) 
= 
 \sqrt{\frac{\sigma^2_z(X)}{  e_z^2(X)}}B
\left(
\Eb{ \mathbb{I}[Z=1 ]\sqrt{\frac{\sigma^2_1(X)}{  e_1^2(X)}} 
+ 
\mathbb{I}[Z=0]\sqrt{\frac{\sigma^2_0(X)}{  e_0^2(X)}} }\right)^{-1} 
\end{align*} 
Define $\hat\pi^*(z,x)$ to be a plug-in version of the above (with $\hat\sigma^2,\hat e$, and $\Enb{\cdot}$).
Then $$\norm{\hat\pi^*(z,X)-\pi^*(z,X)}_2 = o_p(n^{-\min (r_e, r_\sigma, 1/2)}),
\norm{\hat\pi^\dagger(z,\cdot)-\pi^\dagger(z,\cdot)}_2=o_p(1).$$
\end{lemma}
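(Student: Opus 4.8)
The plan is to write $\hat\pi^*(z,X)-\pi^*(z,X)$ as a ratio and control the numerator and denominator errors separately in $L_2(X)$. Using the closed form from \Cref{thm-global-budget-solution}, set $h_z(X):=\sqrt{\sigma^2_z(X)}/e_z(X)$ and $D:=\E[\sqrt{\sigma^2_1(X)}+\sqrt{\sigma^2_0(X)}]$, so that $\pi^*(z,X)=B\,h_z(X)/D$, and let $\hat h_z,\hat D$ denote the plug-ins obtained by substituting $\hat\sigma^2,\hat e$ and the empirical average $\E_n$. Then
\[\hat\pi^*(z,X)-\pi^*(z,X)=\frac{B}{\hat D\,D}\left(D(\hat h_z(X)-h_z(X))+h_z(X)(D-\hat D)\right).\]
By \Cref{asn-positivity} ($e_z$ bounded away from $0$ and $1$) and the standing hypothesis $1/\sigma^2_z\le\gamma_\sigma$ (so $\sigma^2_z\ge 1/\gamma_\sigma$), $h_z$ is bounded and $D\ge 2/\sqrt{\gamma_\sigma}>0$; once $|\hat D-D|=o_p(1)$ is shown, $\hat D\ge D/2$ on an event of probability $\to1$. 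So it suffices to bound $\norm{\hat h_z-h_z}_2$ and $|D-\hat D|$.

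For the numerator, split $\hat h_z-h_z=\hat e_z^{-1}(\sqrt{\hat\sigma^2_z}-\sqrt{\sigma^2_z})+\sqrt{\sigma^2_z}(\hat e_z^{-1}-e_z^{-1})$. On the high-probability event on which $\hat\sigma^2_z$ and $\hat e_z$ stay in a fixed compact set bounded away from $0$ --- which I would obtain from the assumed $L_2$ rates together with \Cref{asn-vc}, the latter used to upgrade $L_2$ control of the estimation error to uniform control along the estimator path --- the maps $a\mapsto\sqrt a$ and $a\mapsto 1/a$ are Lipschitz there, so $\norm{\sqrt{\hat\sigma^2_z}-\sqrt{\sigma^2_z}}_2\lesssim\norm{\hat\sigma^2_z-\sigma^2_z}_2$ and $\norm{\hat e_z^{-1}-e_z^{-1}}_2\lesssim\norm{\hat e_z-e_z}_2$. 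Combined with the moment bounds of \Cref{asn-consistent-bounded}, this yields $\norm{\hat h_z-h_z}_2=O_p(n^{-r_\sigma}+n^{-r_e})=O_p(n^{-\min(r_e,r_\sigma)})$.

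For the denominator I would insert the population expectation of the plug-in and apply the triangle inequality:
\[|\hat D-D|\le\sum_{z}|\E_n[\sqrt{\hat\sigma^2_z(X)}]-\E[\sqrt{\hat\sigma^2_z(X)}]|+\sum_{z}|\E[\sqrt{\hat\sigma^2_z(X)}]-\E[\sqrt{\sigma^2_z(X)}]|.\]
The second sum is at most $\sum_z\norm{\sqrt{\hat\sigma^2_z}-\sqrt{\sigma^2_z}}_2=O_p(n^{-r_\sigma})$ by the Lipschitz step above. The first sum is the crux: since $\sqrt{\hat\sigma^2_z}$ is itself a function of the sample, a plain law of large numbers does not apply, so I would bound it by $\sup_{f\in\mathcal F}|\E_n[\sqrt{f(X)}]-\E[\sqrt{f(X)}]|$ over the class $\mathcal F$ of candidate variance estimators and invoke the VC empirical-process bound \Cref{lemma-chaining-vc}, applied to the uniformly bounded real-valued class $\{\sqrt f:f\in\mathcal F\}$ whose relevant complexity is finite by \Cref{asn-vc} (via the VC-subgraph property and a layer-cake argument), to conclude this term is $O_p(\sqrt{\operatorname{vc}(\mathcal F)/n})=O_p(n^{-1/2})$. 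In the cross-fitted implementation of \Cref{alg:adaptive}, where $\hat\sigma^2_z$ is independent of the evaluation fold, one instead gets $O_p(n^{-1/2})$ directly by Chebyshev conditional on the training fold plus the conditional-to-unconditional lemma of \citep{chernozhukov2018double}. Hence $|\hat D-D|=O_p(n^{-\min(r_\sigma,1/2)})$ (using the indicator form of the denominator in the lemma statement picks up an extra $\hat e_z$ dependence and gives $O_p(n^{-\min(r_e,r_\sigma,1/2)})$).

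Combining the two bounds with the boundedness of $B/(\hat D\,D)$ and $h_z$ on the good event gives $\norm{\hat\pi^*(z,\cdot)-\pi^*(z,\cdot)}_2=O_p(n^{-\min(r_e,r_\sigma,1/2)})$, hence the claimed $o_p$ rate (exactly if the nuisance $L_2$ bounds are read as $o_p$, otherwise after an arbitrarily small reduction of the exponent, which is all the downstream product-error arguments use). I expect the first denominator term to be the main obstacle: it is an empirical average of a function that depends on the sample, so it requires uniform control over the nuisance function class rather than a law of large numbers --- precisely the role of \Cref{asn-vc}. A secondary technical point is the uniform lower bound on $\hat\sigma^2_z$ along the estimator path, needed so that $\sqrt{\cdot}$ is Lipschitz rather than merely $\tfrac12$-H\"older (which would degrade the rate); this again leans on \Cref{asn-vc}.
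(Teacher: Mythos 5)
Your proposal is correct and follows essentially the same route as the paper's proof: decompose the ratio into a numerator error (controlled via triangle inequality and Lipschitz continuity of $\sqrt{\cdot}$ and $1/(\cdot)$ on the bounded, bounded-away-from-zero range guaranteed by overlap and $\sigma^2 \geq 1/\gamma_\sigma$) and a denominator error, whose sample-dependent empirical-average piece is handled by the VC empirical-process bound of \Cref{lemma-chaining-vc}. Your version is if anything slightly more careful than the paper's on two points the paper glosses over — the exact algebraic identity for the ratio difference and the need for a uniform lower bound on $\hat\sigma^2_z$ along the estimator path — but these are refinements of the same argument, not a different one.
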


\proof{Proof.}
\label{proof:lemma-plug-in}
Let $a=\frac{\sigma^2_z(X)}{  e_z^2(X)}$, $b=\Eb{ \mathbb{I}[Z=1 ]\sqrt{\frac{\sigma^2_1(X)}{  e_1^2(X)}} 
+ 
\mathbb{I}[Z=0]\sqrt{\frac{\sigma^2_0(X)}{  e_0^2(X)}} }$.

Let $c=\frac{\hat\sigma^2_z(X)}{  \hat e_z^2(X)}$, $d=\Enb{ \mathbb{I}[Z=1 ]\sqrt{\frac{\hat \sigma^2_1(X)}{  \hat e_1^2(X)}} 
+ 
\mathbb{I}[Z=0]\sqrt{\frac{\hat \sigma^2_0(X)}{  \hat e_0^2(X)}} }$. 

Then $\norm{\pi^*(z,X)-\hat\pi^*(z,X)}_2 =\norm{a/b-c/d}_2.$ 

Positivity of $\sigma^2_z(X)$ gives the elementary equality that $\frac{a}{b} - \frac{c}{d} = \left(\frac{a - b}{b}\right) + \left(\frac{d - c}{d}\right)$. 

Therefore, by triangle inequality and boundedness, 
\begin{align}
&\norm{\pi^*(z,X)-\hat\pi^*(z,X)}_2 \leq 
\gamma_\sigma \norm{
\sqrt{ {\sigma^2 (X)}/{e^2(X)}}
-
\sqrt{{\hat \sigma^2 (X)}/{\hat e^2(X)}}
}_2 \nonumber
 \\
& 
+ 
\gamma_\sigma \abs{
\Enb{ \mathbb{I}[Z=1 ]\sqrt{\frac{\hat \sigma^2_1(X)}{  \hat e_1^2(X)}}+\mathbb{I}[Z=0]\sqrt{\frac{\hat \sigma^2_0(X)}{  \hat e_0^2(X)}} }-\Eb{ \mathbb{I}[Z=1 ]\sqrt{\frac{\sigma^2_1(X)}{  e_1^2(X)}} 
+ 
\mathbb{I}[Z=0]\sqrt{\frac{\sigma^2_0(X)}{  e_0^2(X)}} }
}    
\label{eq-decomposition}
\end{align}

Next we show that for $z\in\{0,1\},$
\begin{equation}
    \norm{\sqrt{\hat \sigma^2_z (X)/\hat e^2_z(X)}
-\sqrt{ \sigma^2_z (X)/ e^2_z(X)}}_2 \leq \nu_e B_{\sigma^2} 
(\norm{\sqrt{\hat \sigma^2_z (X)}- \sqrt{ \sigma^2_z (X)}  }_2
+\norm{{e_z(X)} - {\hat e_z(X)} }_2) \label{eqn-num-bound}
\end{equation} 
In the below, we drop the $z$ argument. 

    By the triangle inequality, boundedness of $1/\hat e(X) \leq \nu_e $, and of $\sigma^2(X) \leq B_{\sigma^2}$:
\begin{align*}
&    \norm{\sqrt{\hat \sigma^2 (X)/\hat e^2(X)}
-\sqrt{ \sigma^2 (X)/ e^2(X)}}_2\\
& =     \norm{\sqrt{\hat \sigma^2 (X)/\hat e^2(X)}
\pm \sqrt{ \sigma^2 (X)/ \hat e^2(X)}
-\sqrt{ \sigma^2 (X)/ e^2(X)}}_2
\\
& \leq 
\nu_e \norm{\sqrt{\hat \sigma^2 (X)}- \sqrt{ \sigma^2 (X)}  }_2
+B_{\sigma^2} \norm{\frac{1}{e(X)} - \frac{1}{\hat e(X)} }_2
\end{align*}
For the second term:
\begin{align*}
    B_{\sigma^2} \norm{ \frac{1}{e(X)} - \frac{1}{\hat e(X)} }_2 \leq  B_{\sigma^2} \norm{\frac{1}{e(X)} - \frac{1}{\hat e(X)} }_2
    \leq B_{\sigma^2} \nu_e \norm{ e(X) - \hat e(X) }_2 
\end{align*}
since $1/e(X)$ is Lipschitz on the assumed bounded domain (overlap assumption).

For the first term: 
$$
\nu \norm{\sqrt{\hat \sigma^2 (X)}- \sqrt{ \sigma^2 (X)}  }_2 \leq \nu_e B_{\sigma^2} \norm{\hat{\sigma}^2(X) - \sigma^2(X)}_2$$
since $\sigma^2(X)$ is bounded away from 0, then $\sqrt{\sigma^2(X)}$ is Lipschitz. 

This proves \Cref{eqn-num-bound}, which bounds the first term of \Cref{eq-decomposition}. For the second term, denote for brevity
$$ \hat\beta (\sigma, e) = \Enb{ \mathbb{I}[Z=1 ]\sqrt{\frac{ \sigma^2_1(X)}{   e_1^2(X)}}+\mathbb{I}[Z=0]\sqrt{\frac{ \sigma^2_0(X)}{ e_0^2(X)}} },$$
and $\beta(\sigma,e)$ to be the above with $\Eb{\cdot}$ instead of $\Enb{\cdot}.$ Then the second term of \Cref{eq-decomposition} is $\hat\beta (\hat\sigma, \hat e) - \beta (\sigma, e),$ and decomposing further, that
\begin{align*}
    \hat\beta (\hat\sigma, \hat e) - \beta (\sigma, e) = \hat\beta (\hat\sigma, \hat e) - \hat\beta (\sigma,  e) 
    + \hat \beta (\sigma, e)- \beta (\sigma, e).
\end{align*}
Note that by Cauchy-Schwarz inequality,  and \Cref{lemma-chaining-vc} (chaining with VC-dimension), 
$$\hat\beta (\hat\sigma, \hat e) - \hat\beta (\sigma,  e)  \leq 2 \nu_e B_{\sigma^2}\left(\left\|\sqrt{\hat{\sigma}_z^2(X)}-\sqrt{\sigma_z^2(X)}\right\|_2+\left\|e_z(X)-\hat{e}_z(X)\right\|_2\right) + 2C \sqrt{\frac{\operatorname{vc}(\mathcal{F}_{\sqrt{\frac{\sigma^2}{e}}})}{n}} $$
And another application of \Cref{lemma-chaining-vc} gives that 
\begin{align*}
    \hat \beta (\sigma, e)- \beta (\sigma, e) & = \textstyle  (\mathbb{E}_n-\E)\left[\mathbb{I}[Z=1] \sqrt{\frac{\sigma_1^2(X)}{e_1^2(X)}}+\mathbb{I}[Z=0] \sqrt{\frac{\sigma_0^2(X)}{e_0^2(X)}}\right]
    \leq 2C \sqrt{\frac{\operatorname{vc}(\mathcal{F}_{\sqrt{\frac{\sigma^2}{e}}})}{n}}.
\end{align*}
Combining the above bounds with \Cref{eq-decomposition}, we conclude that $\left\|\pi^*(z, X)-\hat{\pi}^*(z, X)\right\|_2 = o_p(n^{-\min (r_e, r_\sigma, 1/2)}).$

For the feasible clipped rule, assume that the population budget
equation has a unique solution $c^\dagger$. Since the clipping map
$[\,\cdot\,]_0^1$ is $1$-Lipschitz and the empirical budget map
converges uniformly to its population counterpart, $\hat c\to_p
c^\dagger$. Consequently, $\norm{\hat\pi^\dagger(z,\cdot)-\pi^\dagger(z,\cdot)}_2=o_p(1)$.
\endproof

\section{Additional experiments, details, and discussion \label{appendix-data}}
\subsection{Additional details}\label{apx-additional-details} 

All experiments using our full \cref{alg:batch-allocation-full} were conducted on a 2021 13-inch MacBook Pro equipped with a 2.3 GHz Quad-Core Intel Core i7 processor and 32 GB of memory; or on a parallel computing cluster with 32 cores. This setup was used to train standard nuisance models using machine learning, evaluated our algorithm, and conduct the analysis tasks reported in this paper. The average compute time for the experiments on real world data with 20 trials was less than 30 minutes, while the simulated data with 100 trials took less than 60 minutes. Additionally, for all experiments, we allocate $55\%$ of the data to batch 1 and $45\%$ to batch 2.  

We run the ML nuisance models, logistic regression, random forest and support vectors machines, using popular Python packages (i.e. sklearn and scipy). We use logistic regression to estimate the propensity scores. For the outcome and variance models, we use random forest with the following hyperparameters:  max\_depth: None, min\_samples\_leaf: 4, min\_samples\_split: 10, n\_estimators: 100, random\_state: 42. 

We also use support vector machines for the outcome models incorporating LLM predictions, and we use the following hyperparameters: kernel: 'rbf', C: 1.

We chose these hyperparameters by doing a grid search over hyperparameters and chose the ones that performed the best. We ensemble predictions from the best performing random forest model trained on $X$ and the best performing SVM model trained on $X$ and $f(X,\tilde{Y})$ for our outcome model $\mu_z(X,\tilde{Y})$.

We run LLM calls on Together.AI since they provide enterprise-secure deployments of local models, which is required for sensitive data. Because we need to use local LLMs for the real-world street outreach data, we also use the same local LLMs for the other experiments. We use ``Llama-3.3-70B-Instruct-Turbo" for all experiments using LLMs. (Larger models provide effectively similar performance). 

To solve our optimization problem, we used the python package CVXPY and we specifically used the Splitting Conic Solver (SCS) solver. 

Once the experiments are run, we display the means and $95\%$ confidence interval bands, obtained through bootstrapping, in each of our figures.

\subsection{Synthetic data\label{appendix-synthetic-exp}}

Before running our batch adaptive algorithm, we split the data into a validation set ($35\%$ of data), which we use to estimate the oracle ATE. Then we use the remainder (65\%) of the data to run our algorithm, which splits that data into the two batches in the way we described previously.

\paragraph{Data Generating Process for Binary Treatment.} We generate a dataset $\mathcal{D} = \{X
,Z,Y,Y(1),Y(0) \}$, of size $1000$ and where the true ATE $\tau = \E[Y(1)] - \E[Y(0)] = 3$. We sample each covariate $X\in\mathbb{R}^5$ from a standard normal distribution, 
$X \sim \mathcal{N}(0,I_5)$. 
Treatment $Z$ is drawn with logistic probability $\gamma_z(X) = (1 + e^{-{X_{2} + X_{3}+0.5}})$. We define $\sigma_{z}^2(X)$ as follows: 
\begin{align*} \sigma_{1}^2(X)&:= \max[1.3 + 0.4\mathrm{sin}(X_{1}),0] \\
 \sigma_{0}^2(X)&:= \max[3.5 + 0.3\mathrm{cos}(X_{3}),0]. 
 \end{align*}
Finally, the outcome models are defined as:
\begin{align*} Y(0) &= 5 + X_1 - 2X_2 + \epsilon_0\\
Y(1) &= Y(0) + \theta_0 + \epsilon_1,\end{align*}
where $\epsilon_0\sim\mathcal{N}(0,\sigma^2_0(X))$ and $\epsilon_1 \sim \mathcal{N}(0,\sigma^2_1(X))$. The observed outcomes are $Y = Z\cdot Y(1) + (1-Z)\cdot Y(0)$.

\paragraph{Data Generating Process for Continuous Treatment.} We generate a dataset of size 1000 following a similar process to the \cite{colangelo2020double}. We generate $v \sim \mathcal{N}(0,1)$ and  $\epsilon \sim \mathcal{N}(0,1)$, 
\begin{align*}
    X &= (X_1, ..., X_{100})^\prime \sim \mathcal{N}(0,\Sigma), \\
    Z &= \Phi(3X^\prime\theta) + 0.75\nu - 0.5, \\
    Y &= 1.2Z + 1.2X^\prime\theta + Z^2 + ZX_1 + \epsilon \cdot \sqrt{0.5+\Phi(X_1)}, 
\end{align*}

where $\theta_j = 1/j^2$, diag($\Sigma$)=1, the (i,j)-entry $\Sigma_{ij} = 0.5$ for $|i-j| = 1$ and $\Sigma_{ij} = 0$ for $|i-j| > 1$ for $i,j = 1, ..., 100,$ and $\Phi$ is the CDF of $\mathcal{N}(0,1)$. Thus the potential is $Y(z) = 1.2z + 1.2X^\prime\theta + z^2 + zX_1 + \epsilon \cdot \sqrt{0.5+\Phi(X_1)}$ and the parameters of interest are the average dose response function at varying values of $z$, i.e. $\beta_z = \Eb{Y(z)}$. We can solve for the true dose response function value by plugging in the value of z that we are evaluating. 


\begin{figure*}[ht!]
    \centering
    \includegraphics[width=\textwidth]{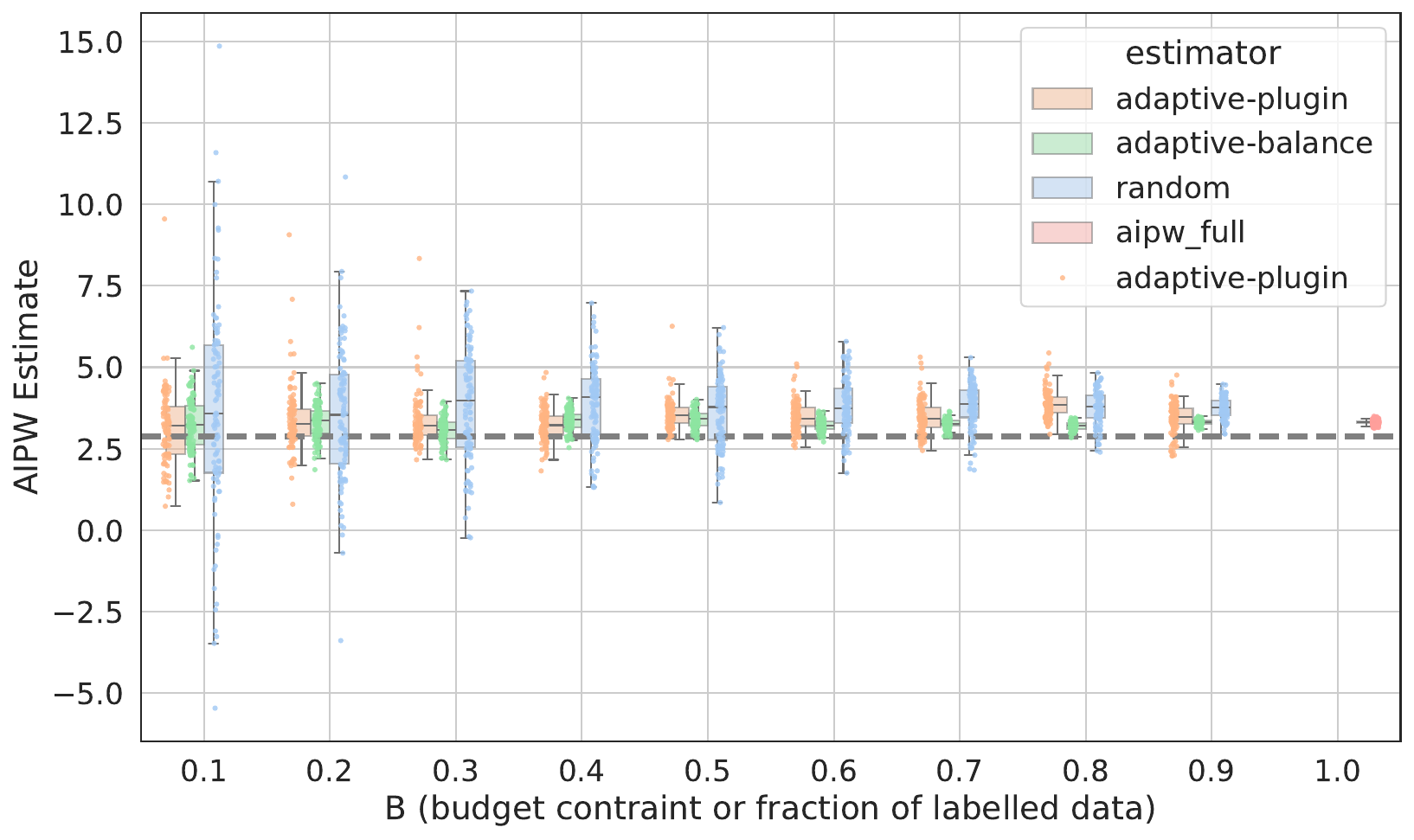}
    \caption{Boxplots of ATE estimates compared to skyline $\hat{\tau}_{AIPW}$ when the labeling budget is the entire dataset in red and the grey dotted line is $\tau$.}
    \label{fig:estimates_simulation_boxplot}
\end{figure*}

\paragraph{Results} The leftmost column of \Cref{fig:retailhero_lineplots} reports the mean squared error and average confidence interval width on the synthetic data across budgets, and \Cref{fig:estimates_simulation_boxplot} shows boxplots of the ATE estimates against the full-information skyline. We see the greatest advantage with our adaptive estimation for budgets between 0.1 and 0.4. While for larger budgets, even as the MSE for both estimators converge, the interval width for the adaptive estimator is still relatively small. Adaptive annotation with a larger budget introduces additional variation in inverse annotation probabilities, as compared to uniform sampling, which is equivalent to full-information estimation at a marginally smaller budget. This regime of improvement for small budgets is nonetheless practically relevant and consistent with other works. 

To stabilize the estimation of the inverse annotation probabilities, we use the plug-in estimator following \cref{eqn-RZ-estimation} and the ForestReisz method to estimate the balancing weights \citep{chernozhukov2022riesznet}.This approach provides an automatic machine learning debiasing procedure to learn the Reisz representer, or unique weights that automatically balances functions between treated and control groups using a random forest model.  

\subsection{Real-world dataset details \label{appendix:real-data-exp}} 

We provide further details about the treatment, covariates and outcomes for each dataset. \Cref{tab:dataset-descriptions-retailhero} and \cref{tab:dataset-descriptions-outreach} describe the variables in the retail hero and outreach datasets, respectively. We refer the reader to \cite{dhawanend} for further details about the dataset. For the outreach data, we constructed the binary treatment variable by binning the frequency of outreach engagements for each client within the first 6 months of the treatment period. We checked for overlap in propensity scores and decided to use treatments in the middle of the distribution as they had the most overlap. \Cref{fig:bg-treatment-dist} shows the distribution of street outreach engagements across the client population, which we binned to construct the treatment variable. Additionally, by \cref{corollary-rel-eff}, our method does well even when the propensity scores do not have good overlap.

\begin{table}[h!]
    \centering
    \small
    \renewcommand{\arraystretch}{1.2}
    \begin{tabular}{%
        p{0.15\textwidth}%
        p{0.50\textwidth}%
        p{0.33\textwidth}%
    }
        \hline
        \textbf{Variable} & \textbf{Description} & \textbf{Discrete Category} \\
        \hline
        \multicolumn{3}{l}{\textbf{Outcome}} \\[0.2em]
        Placement
          & The greatest housing placement attained by the client between 2019--2021
          & [3:permanent housing, 2: shelter/transitional housing, 1: other (e.g., hospital), 0: streets] \\[0.4em]
        \multicolumn{3}{l}{\textbf{Treatment}} \\[0.2em]
        Street outreach
          & Binned frequency of outreach within the first six months months of 2019
          & [Q0: 0, Q1: 1--2, Q2: 3--15, and Q4: 16--226] \\[0.4em]
        \multicolumn{3}{l}{\textbf{Covariates}} \\[0.2em]
        DateFirstSeen
          & Ordinal date when the client was first seen by the outreach team
          & NA \\[0.2em]
        Program
          & Outreach or service program the client belonged to
          & [Brooklyn Library, Grand Central Partnership, Hospital to Home, K-Mart Alley, Macy’s, MetLife, Penn Post Office, Pyramid Park, S2H Bronx, S2H Brooklyn, S2H Manhattan, S2H Queens, Starbucks, Superblock, Vornado, Williamsburg Stabilization Bed] \\[0.2em]
        BelievedChronic & Perceived by outreach workers as chronically homeless individual & [Yes, No] \\[0.2em]
        Gender & Perceived or disclosed gender of client & [Female, Male, Transgender] \\[0.2em]
        Race & Perceived or disclosed race of client & [American Indian/Alaskan Native, Asian, Black/African American, Native Hawaiian/Pacific Islander, White/Caucasian]\\[0.2em]
        Ethnicity & Perceived or disclosed ethnicity of client & [Hispanic/Latino, Non-hispanic/latino] \\[0.2em] 
        Age & Perceived or disclosed age range of client & [$<$ 30 years old, 30--50 years old, $>$ 50 years old] \\[0.2em] 
        Was311Call & Whether outreach workers were responding to a 311 city call & [Yes, No] \\[0.2em]
        Was911Call & Whether 911 was called to the scene & [Yes, No] \\[0.2em]
        Removal958 & Whether outreach workers were responding to removal hotline call & [Yes, No] \\[0.2em]
        Housing application & Whether any mention of the housing application was found in casenotes & [Yes, No] \\[0.2em] 
        Service refusal & Whether outreach worker documented that a client refused their services in casenotes & [Yes, No] \\[0.2em]
        Important documents & Whether there was mention of any important documents (i.e. social security card, drivers license, etc,) in casenotes & [Yes, No] \\[0.2em]
        Benefits & Whether there was any mention of social service benefits in the casenotes (i.e. foodstamps, SSI) & [Yes, No]\\[0.2em] 
        num contacts & number of engagements with an outreach worker prior to 2019 & NA \\[0.2em]
        max Placement & maximum housing placement reached before 2019 & [3:permanent housing, 2: shelter/transitional housing, 1: other (e.g., hospital), 0: streets] \\[0.2em]
        \hline
    \end{tabular}
    \caption{Covariates, treatment, and outcome descriptions and discrete category definitions for the Street Outreach dataset.}
    \label{tab:dataset-descriptions-outreach}
\end{table}

\begin{figure*}[h!]
    \centering
    \includegraphics[width=0.5\textwidth]{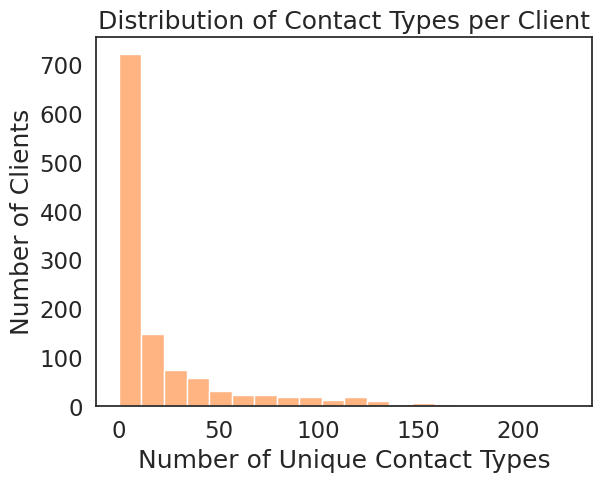}
    \caption{Distribution of street outreach engagements for client population.}
    \label{fig:bg-treatment-dist}
\end{figure*}


\subsection{Robustness check on Street Outreach data}

To further demonstrate the utility of our approach, we run experiments on the Street Outreach data with $\tilde{Y}$. To recap, our setup consists of covariates $X$, which includes client characteristics at baseline and LLM-generated summaries of case notes recorded before the treatment period. In the main text, we used LLMs to summarize casenotes prior to outreach during the interventional period, and used them in zero-shot prediction of later placement outcomes. Here we also incorporate LLM-generated summaries of case notes recorded post-treatment. These represent $\tilde{Y}$ in our framework. 

\begin{figure*}[ht!]
    \centering
    \includegraphics[width=0.9\textwidth]{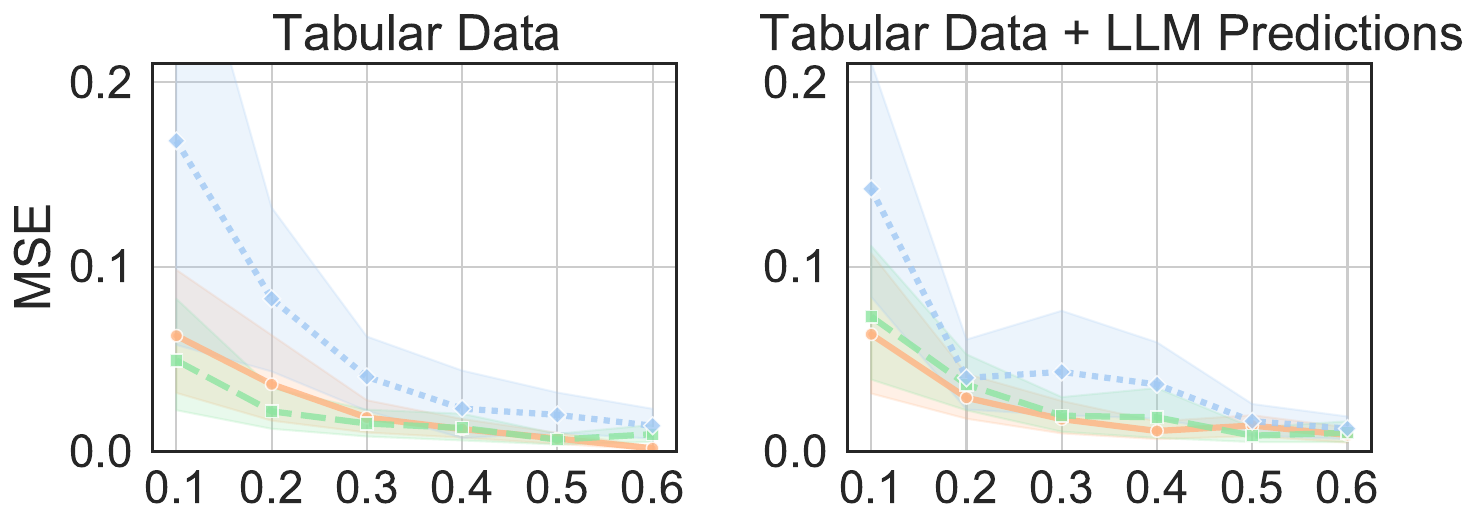}

    \includegraphics[width=0.9\textwidth]{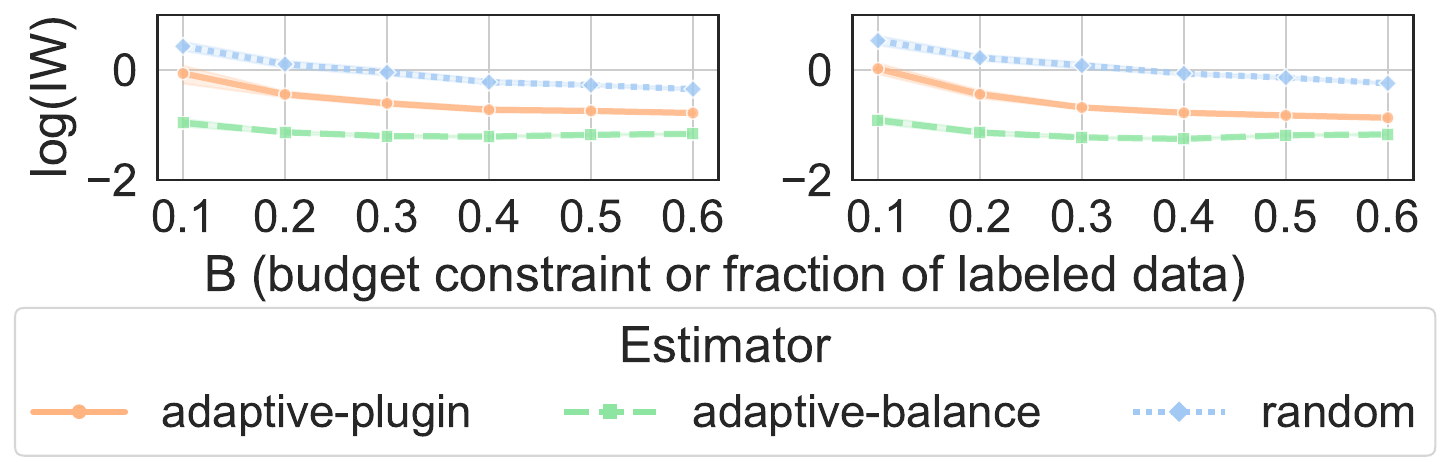}
    \caption{\textbf{Street Outreach Data with pretreatment summaries (no $\tilde{Y}$).} Mean squared error and $95\%$ confidence interval width averaged over 20 trials across budget percentages of the data. This plot makes use of tabular data and the best-performing random forest outcome model (left) and text-encoded outcomes using LLMs (right).} 
    \label{fig:outreach_robustness}
\end{figure*}

\begin{figure*}[ht!]
    \centering
    \includegraphics[width=0.9\textwidth]{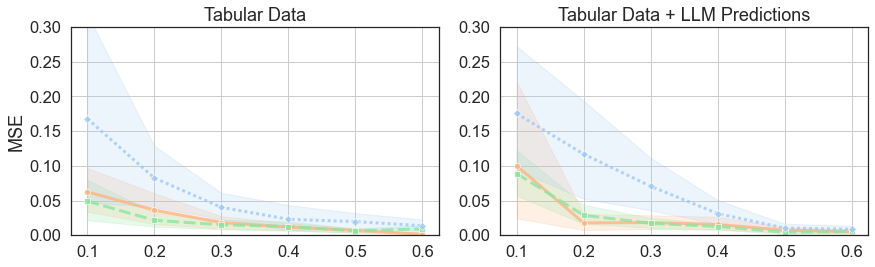}

    \includegraphics[width=0.9\textwidth]{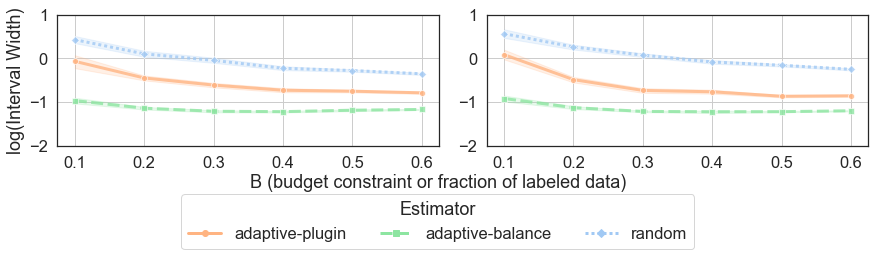}
    \caption{\textbf{Street outreach data with pre- and post-treatment summaries.} Mean squared error and $95\%$ confidence interval width averaged over 20 trials across budget percentages of the data. This plot makes use of tabular data and the best-performing random forest outcome model (left) and text-encoded outcomes using LLMs (right).} 
    \label{fig:outreach_robustness_pre_post}
\end{figure*}

In \Cref{fig:outreach_robustness} and \Cref{fig:outreach_robustness_pre_post}, we see that our results and analysis are preserved, and qualitatively similar. Our adaptive approach still shows improvements over uniform random sampling. The MSE is tripled when going from our adaptive estimators to random sampling in the tabular data. The MSE is five times higher when going from adaptive to random sampling in the setting where we have added LLM predictions using post-treatment summaries $\tilde{Y}$ only and it is nearly doubled when using both pre- and post-treatment summaries. 

In this experimental setup, we find that tabular estimation with ground-truth validated codes overall performs comparably as using more advanced LLM estimation. In this setup, we use placement outcomes as the measure of interest, in part because it is (nearly) fully recorded in our dataset, and hence we can consider it as having access to the ``ground-truth" outcome in our methodological setup. On the other hand, we also expect that casenotes are weakly informative of placement, as compared with other outcomes we might seek to extract from casenotes (but do not have the ground-truth for). Nonetheless, this validates the usefulness of the method, and we leave further empirical developments for future work. 
\begin{revision}
\subsection{Progress analysis}\label{apx:pgs-full}

\paragraph{Further fine-tuning model analysis}

\Cref{tab:supp-band-error} complements the model-level validation metrics of \Cref{tab:gemma3-july-qwen-summary} by reporting signed error (prediction $-$ human label) by gold progress band on the same validation split, showing that fine-tuning primarily reduces under-coding of high-progress notes.

 \begin{table}[h!]
      \centering
      \small
      \renewcommand{\arraystretch}{1.3}
      \setlength{\tabcolsep}{5pt}
    \begin{revision}
      \begin{tabular}{p{0.19\textwidth} c c c}
          \hline
          \textbf{Model} & \textbf{Low ($n=299$)}
          & \textbf{High ($n=80$)} & \textbf{Overall} \\
          \hline
          G3 12B base
            & -0.12 [-0.17, -0.07] & -0.92 [-1.14, -0.71] &
  -0.29 [-0.35, -0.22] \\
          G3 12B FT
            & +0.05 [+0.00, +0.10] & -0.33 [-0.49, -0.18] &
  -0.03 [-0.08, +0.02] \\
          G4 31B FT
            & -0.05 [-0.11, +0.02] & -0.53 [-0.78, -0.31] &
  -0.15 [-0.22, -0.07] \\
          \hline
          Qwen3-235B ZS
            & -0.10 [-0.16, -0.06] & -0.78 [-1.02, -0.55] &
  -0.25 [-0.32, -0.18] \\
          \hline
      \end{tabular}
      \end{revision}
      \caption{Signed error (prediction \(-\) human label) by subgroup (low or high progress) on
      the same $n=379$ validation split, with 95\% bootstrap
  confidence intervals
      (4000~within-band resamples). Negative values indicate the model under-codes progress.
  Low is progress \(\leq 1.5\)
      (conversational / early-engagement notes); High is
  progress \(\geq 2\)
      (concrete-progress and milestone notes). Although every model under-codes progress, the fine-tune is least biased marginally and conditionally.}
      \label{tab:supp-band-error}
  \end{table}

\paragraph{Additional experimental results}

Three-panel versions (MSE, log interval width, coverage of the full-data
benchmark) for both treatment definitions. For llm-selection-adjusted, the balance arm is
scored against its own full-data benchmark as in the main text; coverage
bands are 95\% bootstrap intervals over the 200 trials.

\begin{figure}[h]
\centering
\includegraphics[width=\textwidth]{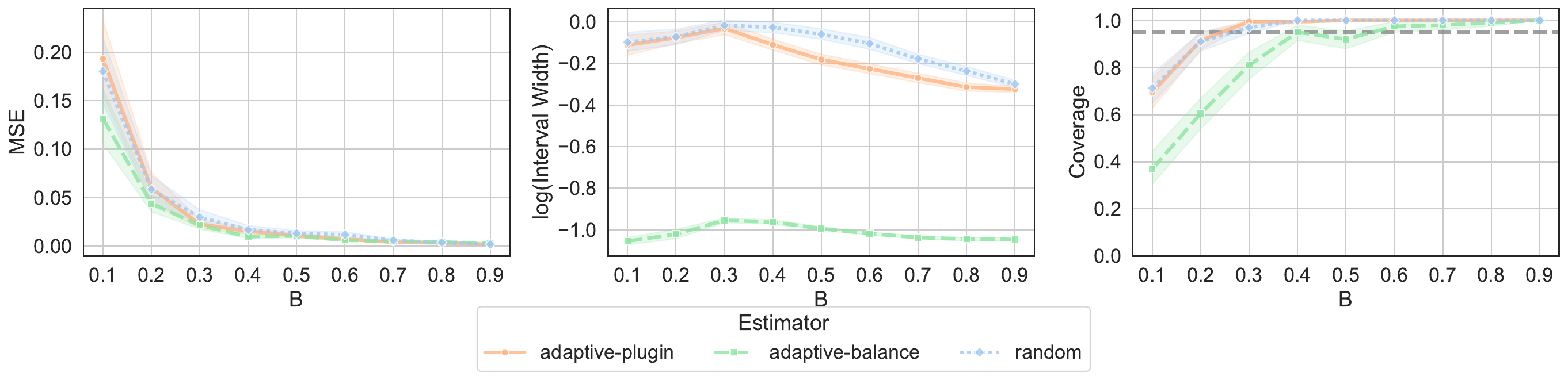}
\caption{Treatment: Q1 vs Q3, LLM-adjusted estimator, with coverage.}
\end{figure}

\begin{figure}[h]
\centering
\includegraphics[width=\textwidth]{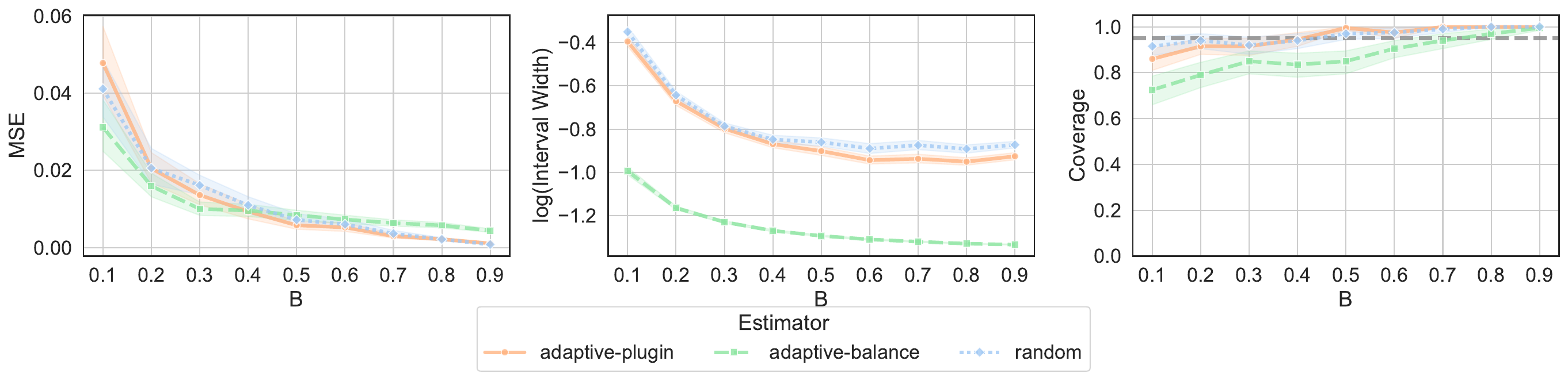}
\caption{Treatment: Q2 vs Q3, LLM-adjusted estimator,, with coverage.}
\end{figure}

\begin{figure}[h]
\centering
\includegraphics[width=\textwidth]{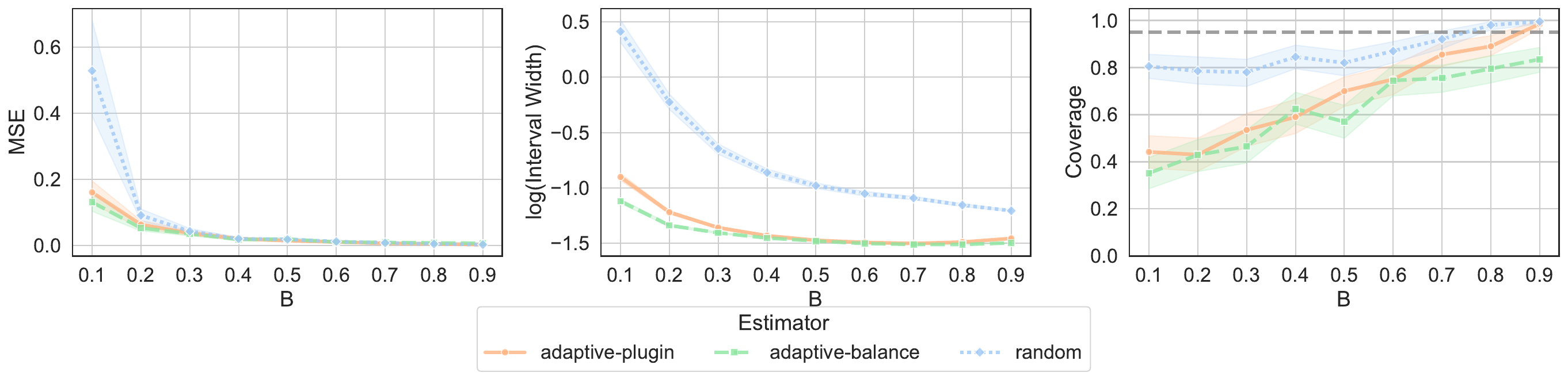}
\caption{Treatment: Q1 vs Q3, LLM-unadjusted estimator, with coverage (all arms scored against
the full-data benchmark, $0.295$).}
\end{figure}

\begin{figure}[h]
\centering
\includegraphics[width=\textwidth]{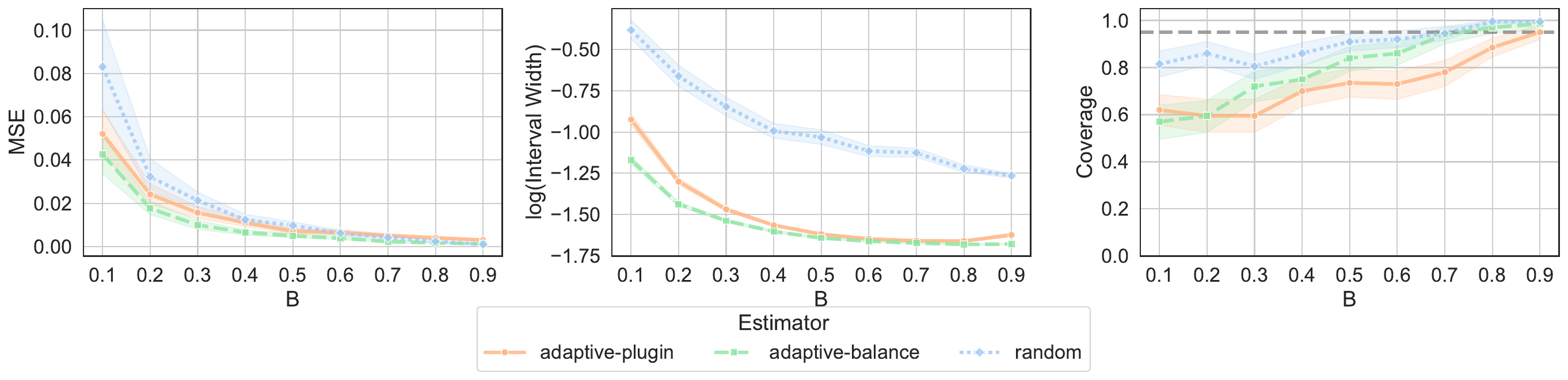}
\caption{Treatment: Q2 vs Q3, LLM-unadjusted estimator, with coverage (benchmark $0.049$).}
\end{figure}

\paragraph{Continuous treatment results}
\label{appendix:continuous-progress} We present the full data estimates of the average dose response function evaluated at fixed treatment values and percent shifted treatment values. We present the llm-selection-adjusted estimators and the llm-adjusted estimators with our adaptive-plugin and adaptive balancing estimators. We present three-panel versions that include the coverage of the full-data benchmark.

\begin{table}[tb]
\caption{Full-data ADRF estimates (all units annotated), plug-in weights. Point estimate with standard error in parentheses.}
\label{tab:cts_full_effect}
\begin{tabular}{lccc}
\toprule
 & 5\% more outreach & 10\% more outreach & z = 5 engagements \\
Outcome model &  &  &  \\
\midrule
LLM-adjusted & 2.016 (0.034) & 2.015 (0.034) & 2.009 (0.036) \\
LLM-selection-adjusted & 2.015 (0.034) & 2.013 (0.035) & 1.956 (0.038) \\
\bottomrule
\end{tabular}
\end{table}

\begin{table}[tb]
\caption{Full-data ADRF estimates (all units annotated), balance weights. Point estimate with standard error in parentheses.}
\label{tab:cts_full_effect}
\begin{tabular}{lccc}
\toprule
 & 5\% more outreach & 10\% more outreach & z = 5 engagements \\
Outcome model &  &  &  \\
\midrule
LLM-adjusted & 2.016 (0.034) & 2.015 (0.034) & 2.009 (0.036) \\
LLM-selection-adjusted & 2.015 (0.034) & 2.013 (0.035) & 1.955 (0.038) \\
\bottomrule
\end{tabular}
\end{table}

\begin{figure}[h]
\centering
\includegraphics[width=\textwidth]{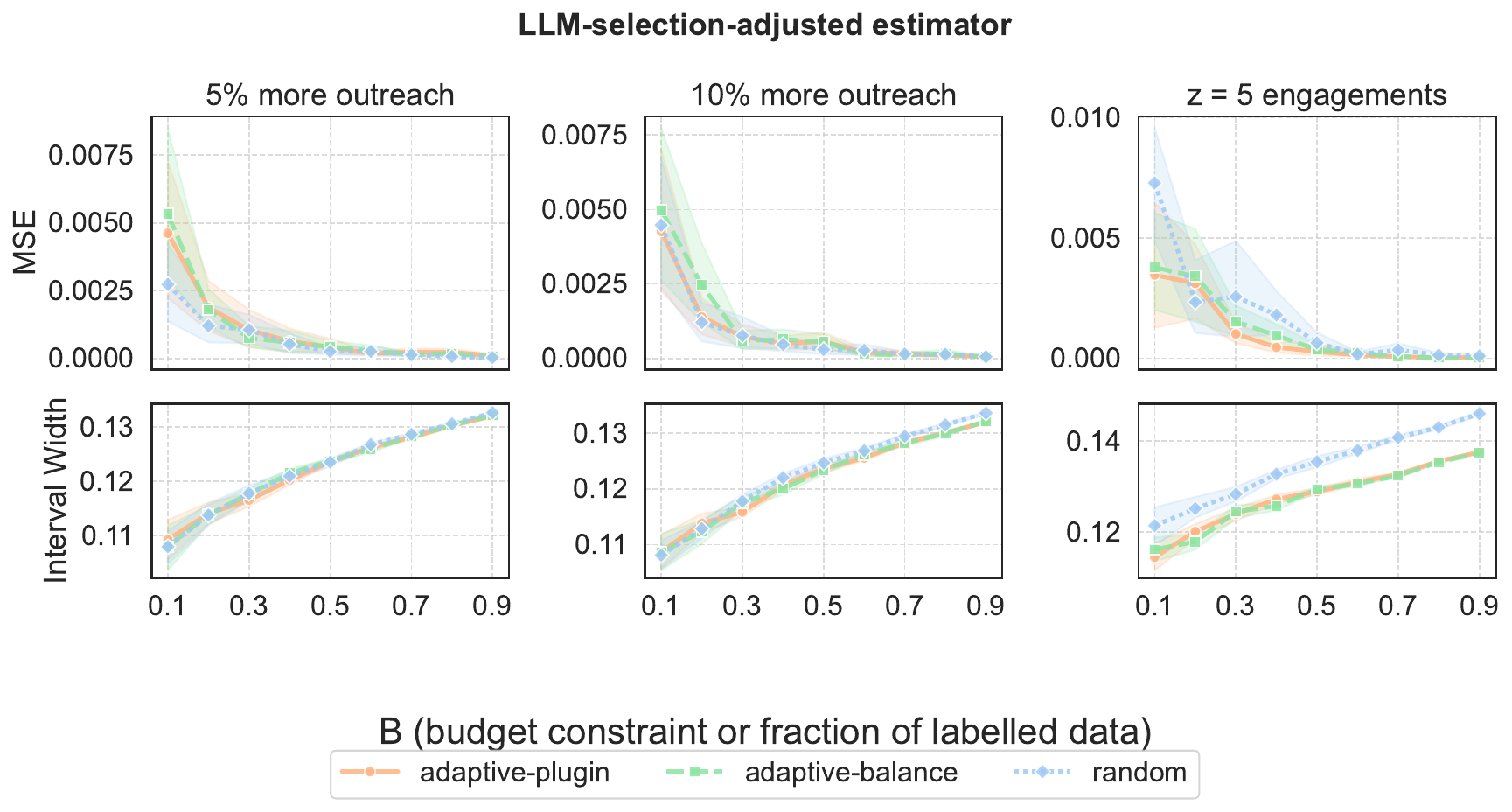}
\caption{LLM-selection-adjusted estimator. MSE and interval width vs annotation budget B for continuous treatments evaluated at $5\%$ more outreach, $10\%$ more outreach and a fixed z value at 5}
\end{figure}

\begin{figure}[h]
\centering
\includegraphics[width=\textwidth]{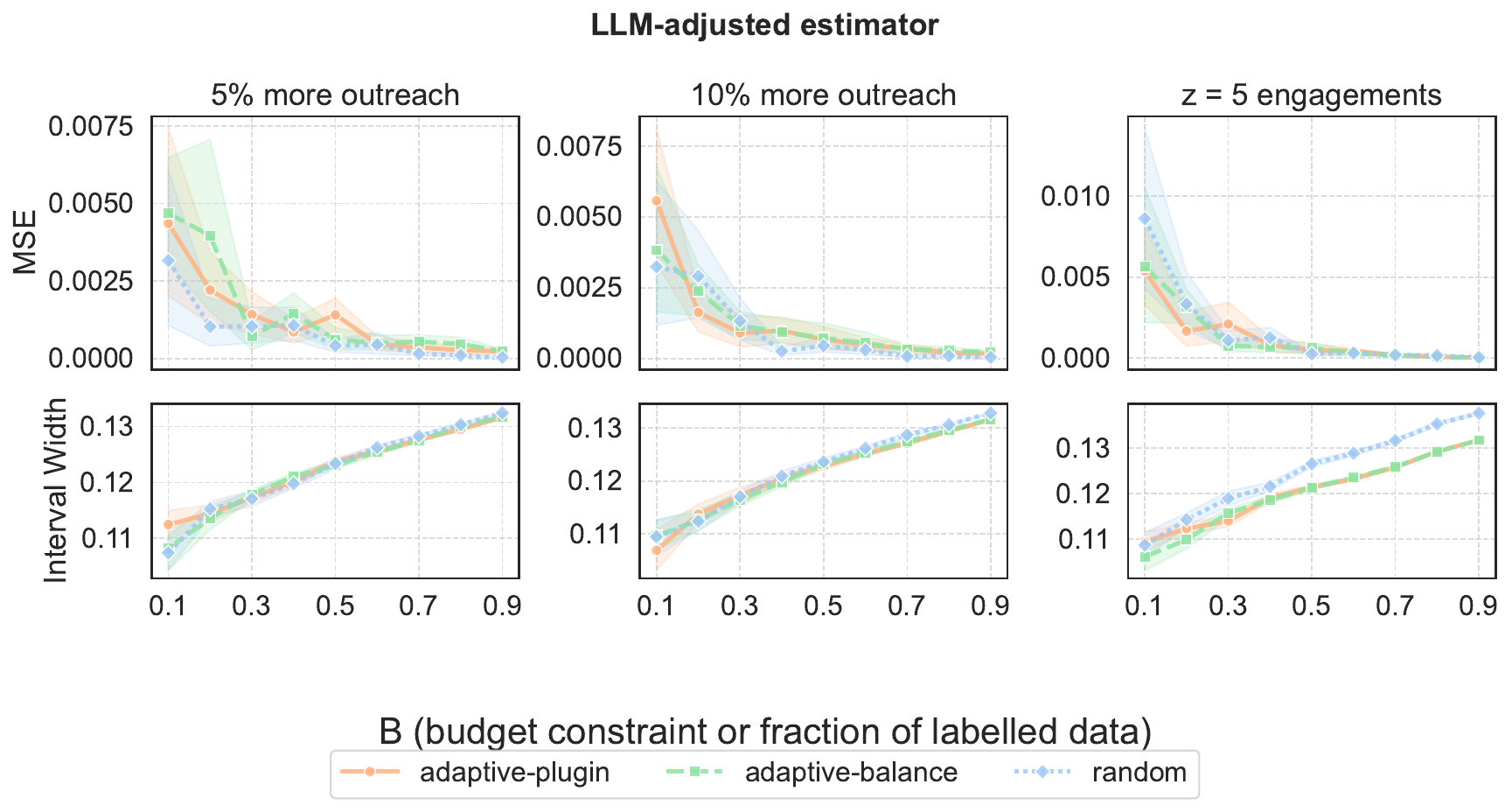}
\caption{LLM-adjusted estimator. MSE and interval width vs annotation budget B for continuous treatments evaluated at $5\%$ more outreach, $10\%$ more outreach and a fixed z value at 5}
\end{figure}

\begin{figure}[h]
\centering
\includegraphics[width=\textwidth]{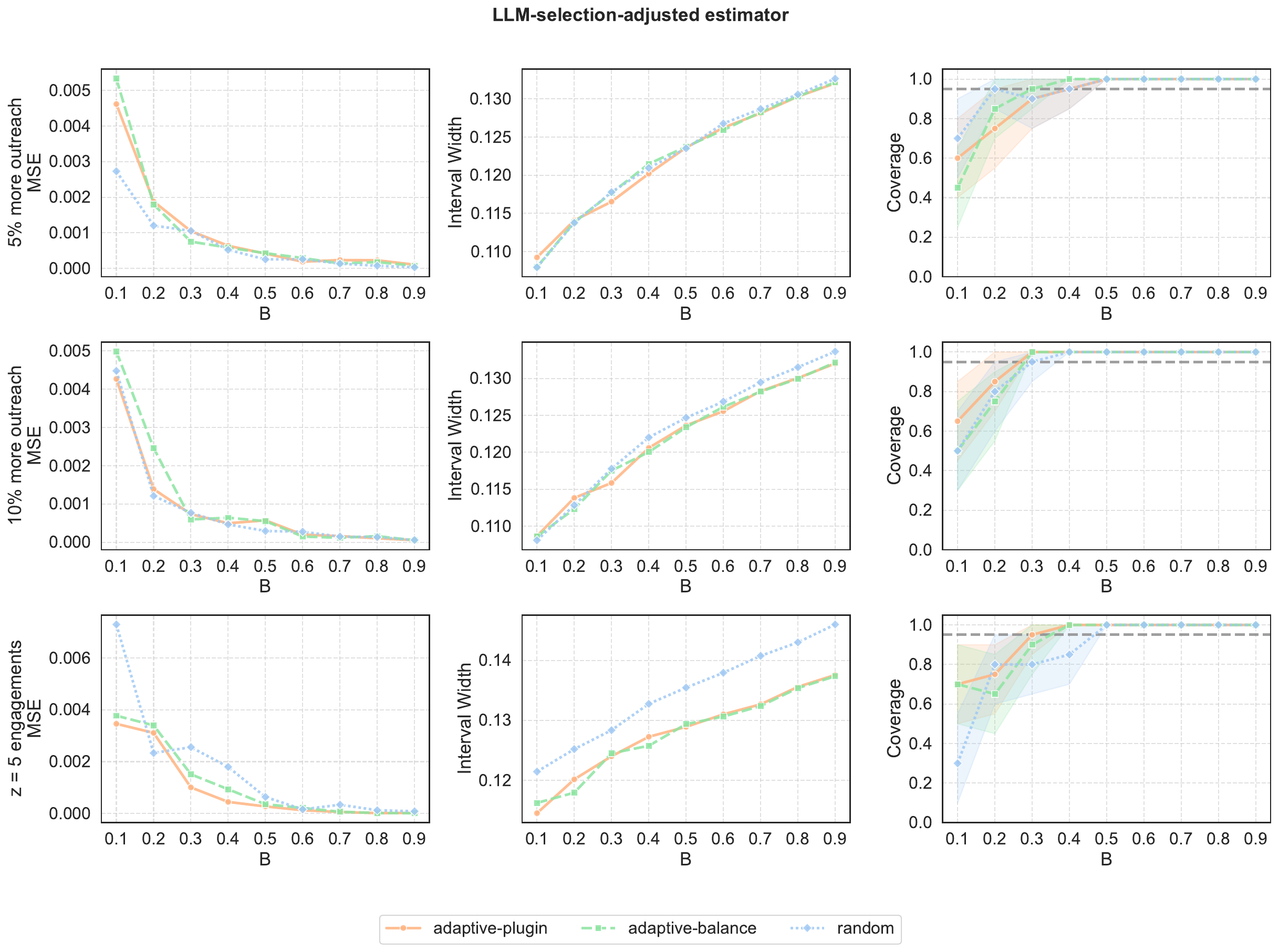}
\caption{LLM-selection-adjusted estimator, with coverage (all arms scored against
the full-data benchmark, $1.95$ for 5$\%$ more, $2.012$ for 10$\%$ more and $2.014$ for fixed z=5).}
\end{figure}

\begin{figure}[h]
\centering
\includegraphics[width=\textwidth]{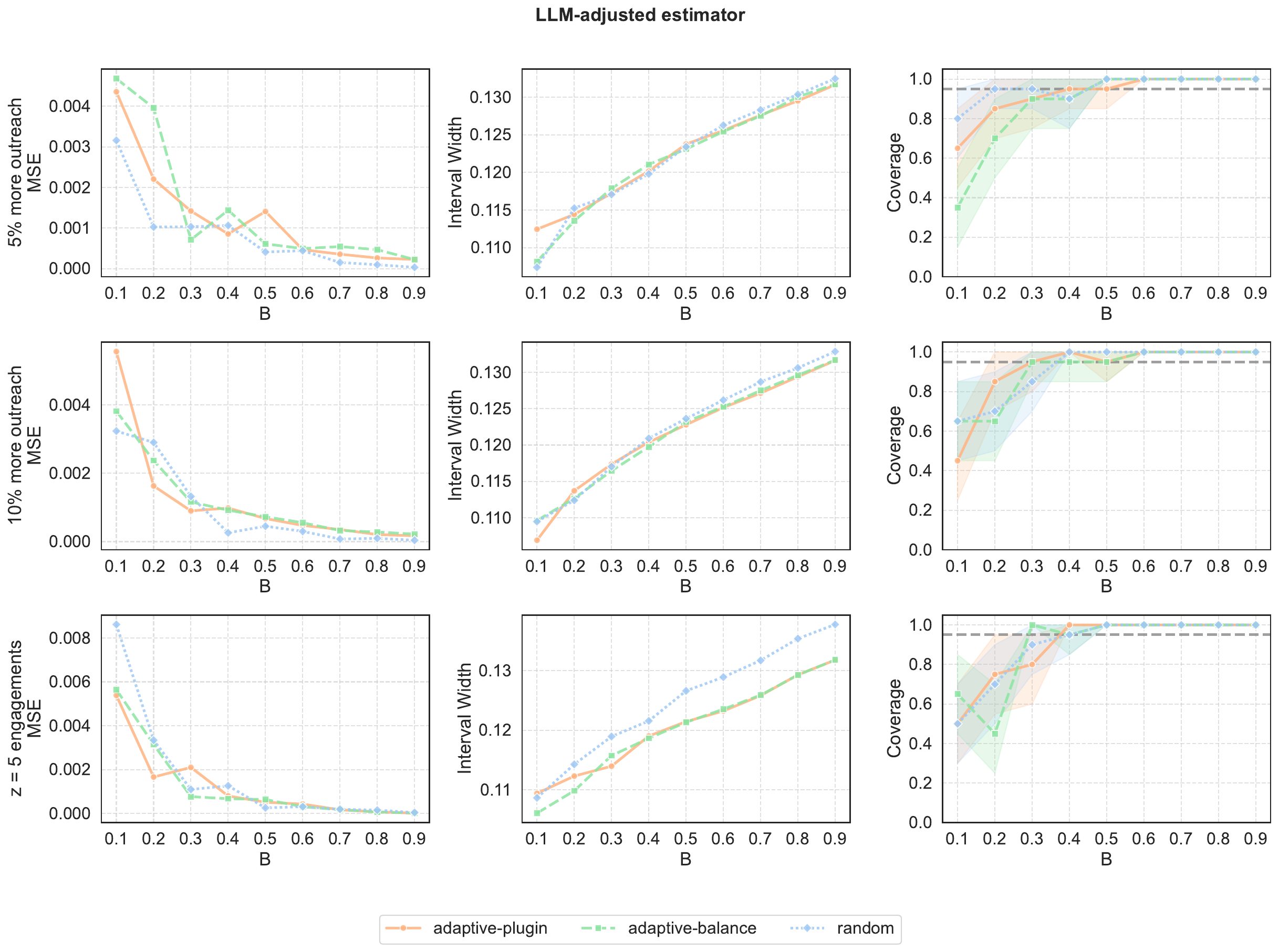}
\caption{LLM-adjusted estimator, with coverage (all arms scored against
the full-data benchmark, $2.016$ for 5$\%$ more, $2.015$ for 10$\%$ more and $2.008$ for fixed z=5).}
\end{figure}

\end{revision}
\newpage 
\subsection{LLM prompts}
\begin{promptbox}[Retail Hero Prompt]
\noindent
You are a user who used a website for online purchases in the past one year and want to share your background and experience with the purchases on
social media.

Attributes: The following are attributes that you have, along with their descriptions.

$\{features\}$

Personality Traits: The following dictionary describes your personality with levels (High or Low) of the Big Five personality traits.

$\{traits\}$

\textit{Your Instructions:}

Write a social media post in first-person, accurately describing the
information provided. Write this post in the tone and style of someone
with the given personality traits, without simply listing them.

Only return the post that you can broadcast on social media and nothing more.

---

$\{post\}$ 

---

\end{promptbox}

\begin{promptbox}[Street Outreach Casenote Summaries Prompt]
\noindent
Objective:
    Your task is to summarize a trajectory of case notes of a client in street homelessness outreach, focusing on client interactions, the challenges they are facing, goals they are working towards, and progress towards housing placement. These are all from the same client. This summary is designed to help caseworkers and organizations assess client history at a glance, remind of prior personal information and important challenges mentioned (like veteran status or other information that is relevant for eligibility for housing, medical issues, and status of their support network), allocate resources effectively, and improve support for individuals experiencing chronic homelessness.

Context:
$\{task\_context\}$

The summary should be a concise overview of the client's situation, highlighting key points from the case notes. It should not include any personal opinions or assumptions about the client's future or potential outcomes. The goal is to provide a clear and informative summary that can be used by caseworkers and organizations to better understand the client's history and current status.

Here are the case notes for batch $\{batch\_num\}$  of $\{total\_batches\}$:

--- START NOTES ---

$\{notes\}$

--- END NOTES ---

Based *only* on the notes provided above for this batch, generate a comprehensive summary focusing on key events, decisions, and progress during this specific period. The target length is approximately $\{target\_length\}$ words. Ensure the summary strictly reflects the content of these notes.

\end{promptbox}

\begin{promptbox}[Street Outreach Classification]
\noindent
You are an expert analyst specializing in predicting long-term housing stability for individuals experiencing homelessness. Your task is to analyze client data, including demographic information, historical interactions, and case note summaries, to predict the **most stable housing placement level** the client is likely to achieve and maintain over the **next two years**.

**Input Data:**

You will be provided with the following information for each client:
    
**Prediction Task:**

Based *only* on the provided attributes and the case notes summary, predict the single most stable housing placement level the client is likely to maintain over the next two years.

**Housing Placement Levels (Prediction Output):**

Your prediction must be an integer between 0 and 3:
\begin{itemize}
    \item **0**: No stable placement (remains on the street or in emergency shelters).
    \item **1**: Transitional Housing (temporary placement with support, aiming for longer-term housing).
    \item **2**: Rapid Re-housing (time-limited rental assistance and services).
    \item **3**: Permanent Supportive Housing (long-term housing with ongoing support services).
\end{itemize}

**Reasoning Guidance (Internal Thought Process - Do Not Output This):**
\begin{itemize}
    \item Consider factors that promote stability: housing application progress, possession of documents, benefit acquisition, engagement with services (unless contacts are excessive without progress), prior successful placements (even if temporary), positive recent developments in the case notes.
    \item Consider factors that hinder stability: chronic homelessness indicators, frequent service refusals, mental health crises (Removal958), lack of documents/income, lack of prior placements, patterns of instability noted in the summary.
    \item Weigh the structured data against the nuances presented in the case note summary. The summary provides vital context.
\end{itemize}

    **Client Information:**

    **Prediction:**

    Provide *only* the predicted number (0, 1, 2, or 3) as the output. Do not include any other text, explanation, or formatting.

    **Examples:**
    $\{examples\}$
\end{promptbox}

\begin{promptbox}[Street Outreach Progress Classification Prompt]
\noindent
\textbf{System message:}

\medskip
\noindent
Breaking Ground Progress Classification Schema - Prior

You are an expert caseworker assistant specializing in analyzing outreach and case-management notes for clients experiencing chronic homelessness.

Classify one case note at a time. Assign exactly one progress label from this allowed set:

\texttt{-1}, \texttt{0}, \texttt{0.25}, \texttt{0.5}, \texttt{0.75}, \texttt{1}, \texttt{1.5}, \texttt{2}, \texttt{2.5}, \texttt{3}, \texttt{3.5}, \texttt{4}

Return valid JSON only:

\texttt{\{"label":"0.5","confidence":"medium"\}}

Do not quote or reproduce the note. Do not include client identifiers.

\textbf{Prior Progress Scale}
\begin{itemize}
    \item \texttt{-1}: Regression or challenges, such as relapse, hostility, deterioration, major barriers, or concerning disappearance.
    \item \texttt{0}: No progress; no contact or no substantive interaction.
    \item \texttt{0.25}: Failed phone/text contact attempt.
    \item \texttt{0.5}: General conversation or brief check-in without services, appointments, or housing discussion.
    \item \texttt{0.75}: Client engagement, interest, self-disclosure, emotional engagement, or willingness to talk.
    \item \texttt{1}: Appointment or service discussed generally.
    \item \texttt{1.5}: Client need met, referral/help/supplies/transport/problem-solving provided.
    \item \texttt{2}: Concrete planning for appointment/service/housing, with specific coordination or next steps.
    \item \texttt{2.5}: Tangible steps or paperwork completed toward appointment/service/housing.
    \item \texttt{3}: Appointment completed, documentation received, interview completed, or goal achieved short of new permanent housing.
    \item \texttt{3.5}: Notable improvement in client condition.
    \item \texttt{4}: New permanent housing placement.
\end{itemize}

Choose the single best label supported by the note.

\textbf{Clarification Appendix V3}

Use these clarifications as appendages to the prior progress scale above. They do not replace the label definitions; they only resolve common boundary cases.

\textbf{No Progress vs Failed Contact Attempt}
\begin{itemize}
    \item Keep \texttt{0} as the default for no progress, no client contact, no client presence, or no substantive interaction.
    \item Use \texttt{0.25} only when the note clearly documents a specific direct contact attempt that failed, such as an unanswered phone call, text message, voicemail, direct message, or explicitly attempted direct outreach to the client.
    \item Do not use \texttt{0.25} merely because the client was absent, not seen, not found, unavailable, or there was no progress. If the note does not document a specific attempted direct contact, choose \texttt{0}.
    \item If a note includes both no substantive interaction and an explicit failed direct phone/text/message/contact attempt, choose \texttt{0.25}.
\end{itemize}

\textbf{Brief Contact vs Client Engagement}
\begin{itemize}
    \item Use \texttt{0.5} for brief contact, a greeting, a welfare check, a routine check-in, or general conversation that does not include meaningful client self-disclosure, a specific service need, an appointment, housing, documentation, or a next step.
    \item Use \texttt{0.75} when the client engages beyond brief contact by sharing meaningful personal information, emotional state, preferences, barriers, willingness, interest, or other substantive information, but no specific service, appointment, housing step, documentation need, or concrete goal is discussed.
    \item If the interaction is polite or positive but only general, choose \texttt{0.5} rather than \texttt{0.75}.
\end{itemize}

\textbf{Service Discussion vs Concrete Planning vs Completed Steps}
\begin{itemize}
    \item Use \texttt{1} when a specific service, appointment, documentation need, housing step, or goal is discussed, but the note does not document concrete logistics, coordination, paperwork, or completion.
    \item Use \texttt{1.5} when staff provide immediate help, referral information, supplies, transportation, problem-solving, or other support, but there is no concrete plan, completed paperwork, completed appointment, or documented service milestone.
    \item Use \texttt{2} when the note documents concrete planning or coordination, such as a date, time, location, reminder plan, follow-up plan, scheduling logistics, named next step, or coordination with another party.
    \item Use \texttt{2.5} when a tangible administrative step is completed toward the goal, such as forms, signatures, paperwork, document submission, intake steps, or comparable concrete progress.
    \item Use \texttt{3} when the appointment, documentation receipt, interview, service milestone, or goal is completed.
\end{itemize}

\textbf{Tie-Breaking}
\begin{itemize}
    \item Choose the highest label clearly supported by concrete evidence in the note.
    \item When deciding between \texttt{0} and \texttt{0.25}, choose \texttt{0} unless a specific failed direct contact attempt is documented.
    \item When deciding between adjacent labels, choose the lower label unless the note explicitly documents the extra evidence required for the higher label.
    \item Do not infer progress from vague intent, general positivity, routine follow-up language, or staff plans that were not discussed with or acted on for the client.
\end{itemize}

\textbf{User message template:}

\medskip
\noindent
Classify the following protected case note. Return valid JSON only with keys label and confidence.

Case note:

$\{case\_note\}$
\end{promptbox}

\end{APPENDICES}









  



\end{document}